\documentclass[11pt]{article}
\usepackage[a4paper,margin=1in]{geometry}

\usepackage{moreverb,url}
\usepackage{natbib}
\usepackage[colorlinks,bookmarksopen,bookmarksnumbered,citecolor=red,urlcolor=red]{hyperref}
\usepackage{amsmath}
\usepackage{amsthm}
\usepackage[hide]{ed}
\usepackage[T1]{fontenc}
\usepackage{lmodern}
\usepackage{authblk}
\usepackage{mathrsfs,amsopn}
\usepackage{amssymb}
\usepackage{stmaryrd}
\usepackage{comment}
\usepackage{tikz}
  \usetikzlibrary{positioning,backgrounds,cd}
  \tikzset{line/.style={draw=black,line width=1pt}}
\usepackage[frozencache=true,cachedir=_minted]{minted}
\makeatletter
\renewcommand{\minted@error}[1]{\PackageWarning{minted}{#1}}
\makeatother
\usepackage[arrow,matrix]{xy}
\usepackage{quiver}
\usepackage{enumitem}
\usepackage{cmll}      
\usepackage{bbold}
\usepackage{booktabs}
\usepackage{longtable}
\usepackage{xspace}
\usepackage{pifont} 

\newlist{checklist}{itemize}{2}
\setlist[checklist]{label=$\square$}

\makeatletter
\renewcommand{\ednote}[2][\ednote@label]{\textcolor{red}{[#1: #2]}}
\makeatother

\newcommand\BibTeX{{\rmfamily B\kern-.05em \textsc{i\kern-.025em b}\kern-.08em
T\kern-.1667em\lower.7ex\hbox{E}\kern-.125emX}}

\newcommand{\minth}[1]{\mintinline{haskell}{#1}}
\newcommand{\mintp}[1]{\mintinline{python}{#1}}

\newcommand{\aggrA}{\mathrm{aggr}^\forall}
\newcommand{\aggrE}{\mathrm{aggr}^\exists}

\newcommand\sem[1]{\llbracket #1 \rrbracket}
\providecommand{\Set}{\mathbf{Set}}
\providecommand{\Meas}{\mathbf{Meas}}

\providecommand{\Stoch}{\mathbf{Stoch}}
\providecommand{\Kl}[1]{\mathcal{K}\!\ell\!\bigl(#1\bigr)}

\providecommand{\id}{\mathrm{id}}

\newcommand{\norm}[1]{\left\|#1\right\|}
\newcommand{\F}{\mathrm{F}}
\newcommand{\MF}{\mathrm{MF}}
\renewcommand{\P}{\mathrm{P}}
\newcommand{\MP}{\mathrm{MP}}

\newcommand{\abs}[1]{\vert #1 \vert}
\newcommand{\category}[1]{\ensuremath{\mathbf{#1}}\xspace}
\newcommand{\categoryobjects}[1]{\ensuremath{\abs{\category{#1}}}\xspace}

\newcommand{\DO}{{\textbf{do }}}
\newcommand{\doTerm}[3]{\DO\, #1\leftarrow #2; \ #3}
\newcommand{\doTwoTerm}[5]{\DO\, #1\leftarrow #2;#3\leftarrow #4; \ #5}

\DeclareMathOperator*{\argmax}{arg\,max}

\DeclareMathOperator{\expec}{\mathbb{E}}  

\newtheorem{definition}{Definition}
\newtheorem{example}{Example}

\newtheorem{proposition}{Proposition}
\newtheorem{remark}{Remark}

\begin{document}

\title{NeSyCat: A Monad-Based Categorical Semantics of the Neurosymbolic ULLER Framework}

\author[1]{Daniel Romero Schellhorn}
\author[2]{Till Mossakowski}
\affil[1]{University of Osnabr\"uck, Osnabr\"uck, Germany. \texttt{daniel.schellhorn@uni-osnabrueck.de}}
\affil[2]{University of Osnabr\"uck, Osnabr\"uck, Germany. \texttt{till.mossakowski@uni-osnabrueck.de}}

\date{}

\maketitle

\begin{abstract}
  \textbf{ULLER} (\underline{U}nified \underline{L}anguage for \underline{LE}arning and \underline{R}easoning)  offers a unified first-order logic (FOL) syntax, enabling its knowledge bases to be used directly across a wide range of neurosymbolic systems.
  The original specification endows this syntax with three pairwise independent semantics: classical, fuzzy, and probabilistic, each accompanied by dedicated semantic rules.  
  We show that these seemingly disparate semantics are all instances of one categorical framework based on \emph{monads}, the very construct that models side effects in functional programming.  
  This enables the \emph{modular} addition of new semantics and systematic translations between them. As example, we outline the addition of generalised quantification in Logic Tensor Networks (LTN) to arbitrary (also infinite) domains by extending the Giry monad to probability spaces. In particular, our approach allows a modular implementation of ULLER in Python and Haskell, of which we have published initial versions on GitHub. 
\end{abstract}

\noindent\textbf{Keywords:} Neurosymbolic AI, Category Theory, Monads, Probability Theory, Fuzzy Logic, Categorical Logic

\section{Introduction}
\label{sec:intro}

Neurosymbolic integration is a rapidly developing branch of AI.
In the past, numerous heterogeneous approaches have emerged, each with its own code base. \cite{vankriekenULLERUnifiedLanguage2024} introduces ULLER, a unified neurosymbolic library that aspires to play for neurosymbolic systems the role that TensorFlow and PyTorch play for deep-learning workflows. Their theoretical core is the concept of a NeSy system: standard first-order logic enriched with neural components. In particular, formulas of the form
$$x:=m(T_1,\dots,T_n)F\qquad(T_i\text{ terms, }F\text{ a formula involving variable $x$},m\text{ a neural model})$$
are used to integrate neural models $m$ into logical formulas.
These formulas go beyond classical first-order logic.
Instead, they perform computations that may return multiple values and typically involve non-determinism or probability distributions as illustrated in the following toy example in \citep{vankriekenULLERUnifiedLanguage2024}:
\begin{example}[MNIST digit classification] \label{ex:mnist-uller}
$$\begin{array}{l}
  \forall x\in \mathtt{ImageData} \\
  \quad(n_1 := \mathtt{classify}(x.\mathtt{im}_{1}) \\
  \qquad(n_2 := \mathtt{classify}(x.\mathtt{im}_{2}) \\
  \quad\qquad(n_1+n_2=x.\mathtt{sum})~))
\end{array}$$
\end{example}
\noindent
This classifies two images of digits and checks whether the sum of the resulting numbers is as specified in the dataset. The resulting (e.g.\ fuzzy or probabilistic) truth value can be used in a loss function. A shorthand notation for this is:
$$\forall x\in \mathtt{ImageData} 
  \bigl(n_1 := \mathtt{classify}(x.\mathtt{im}_{1}), n_2 := \mathtt{classify}(x.\mathtt{im}_{2}) 
  ~(n_1+n_2=x.\mathtt{sum})~\bigr)$$

\noindent
To simplify notation and to stress the relation to dynamic logic \citep{harel2001dynamic,DBLP:journals/fac/MossakowskiSG10}, we henceforth use the notation
$$\forall x\in \mathtt{ImageData} \bigl[n_1 := \mathtt{classify}(x.\mathtt{im}_{1})\bigr]\bigl[n_2 := \mathtt{classify}(x.\mathtt{im}_{2})\bigr]n_1+n_2=x.\mathtt{sum}$$
or shorthand 
$$\forall x\in \mathtt{ImageData} \bigl[n_1 := \mathtt{classify}(x.\mathtt{im}_{1}),n_2 := \mathtt{classify}(x.\mathtt{im}_{2})\bigr]n_1+n_2=x.\mathtt{sum}$$

\noindent
While the notion of NeSy system in \cite{vankriekenULLERUnifiedLanguage2024} is very powerful, it also has several shortcomings:

\begin{itemize}
\item
  There is no uniform inductive definition of truth, i.e.\ of the truth value of a sentence in an interpretation. Rather, the notion of NeSy system has the inductive interpretation function as a component, meaning that classical, probabilistic and fuzzy NeSy systems employ three different inductive definitions of truth. Parts of these definitions of truth are copied verbatim from one NeSy system to another, other parts need to be replaced. This duplication of semantic rules is not modular. By contrast, we aim at a truly uniform inductive definition of truth value that is independent of the NeSy system and hence can be reused for different NeSy systems, such that the NeSy system itself is a parameter of the inductive definition of truth.
\item
  The case of continuous probability distributions (involving probability kernels or Markov kernels) is not covered faithfully, because in this case, measurable spaces are required to properly define the mentioned Markov kernels. However, measurable spaces are not considered in \cite{vankriekenULLERUnifiedLanguage2024} and probability measures are confused with density functions in the monadic formula of the probabilistic semantics.
\item
  The treatment of logical connectives is not uniform across NeSy systems, i.e.\ the different sets of connectives are not considered as instances of a common abstract (algebraic) notion. 
  Also, quantifiers in probabilistic semantics are defined using possibly infinite products, without requiring a suitable order structure on domains and without discussing convergence. 
\item
  The high‑level concepts of semantics and computation are not properly separated. Computation (namely sampling) is mixed into the semantics at least in two places. The first one in the classical semantics, where the possibly multi-valued $\argmax$ can only be properly evaluated using sampling. We conceptualise the $\argmax$ differently as a transition \emph{between} semantics. The second time is in "Sampling Semantics", which is not really semantics but computation (sampling).
\end{itemize}

\noindent
We argue that ULLER is conceptually robust and show that a monadic formulation resolves all of the foregoing issues.
In particular, ULLER formulas of the form
$$[x:=m(T_1,\dots,T_n)]F\qquad(T_i\text{ terms, }F\text{ a formula involving variable $x$},m\text{ a neural model})$$
can be modelled using Moggi's notion of computational monad \citep{moggiNotionsComputationMonads1991}, which has been introduced to model side effects in (functional) programming. Although monads originate in category theory, we first present them using a set-theoretic approach that does not involve any category theory. The generalisation to an arbitrary category, which is needed for continuous probabilities and some aspects of infinite domains, comes only in later sections. We call our categorical approach simply \textit{NeSyCat}, standing for \textit{Neurosymbolic Category-theory} framework. 

This paper is organised as follows. Section~\ref{sec:set-nesy-frameworks} introduces NeSy frameworks and their algebraic prerequisites without use of category theory. Based on that, section~\ref{sec:syntax-and-semantics}
introduces the syntax and semantics of NeSyCat. Section~\ref{sec:set-examples} discusses several examples of set-based NeSy frameworks, including classical, fuzzy, and probabilistic ones. Section~\ref{sec:nesy-trans} discusses translations between NeSy systems. Section~\ref{sec:cat-nesy} generalises  set-based NeSy frameworks to categorical NeSy frameworks, which is needed for continuous probability distributions and infinite domains. Section~\ref{sec:cat-sem} defines the categorical semantics of NeSyCat and provides examples of categorical NeSy systems.  Section~\ref{sec:related} discusses related work, and section~\ref{sec:implementation} outlines our implementation of NeSyCat in Python and Haskell. Section~\ref{sec:conclusion} concludes the paper and outlines future work.
Appendix~\ref{sec:cat-gen} contains a brief introduction to the needed concepts of category theory.  This paper is an extended an revised version of \cite{pmlr-v284-schellhorn25a}.

Finally, while this paper focuses on the theoretical foundation, questions of differentiability, computability, and computational complexity are left for future work in a subsequent paper dedicated to the implementation and practical evaluation of NeSyCat.

\section{Set-Based NeSy Frameworks}
\label{sec:set-nesy-frameworks}

\paragraph{Background: key concepts from ULLER.}
We briefly recall the main concepts from \cite{vankriekenULLERUnifiedLanguage2024} that our framework builds upon. ULLER is a first-order logic extended with \emph{statements} $x := f(T_1,\dots,T_n)(F)$, which bind a variable $x$ to the result of applying a (possibly neural) function $f$ to terms $T_i$, then evaluate the formula $F$ in the extended context. An \emph{interpretation} $I$ assigns meanings to the non-logical symbols: domains $I(D)$ to domain symbols, functions $I(f)$ to function symbols (as conditional probability distributions), predicates $I(P)$ to predicate symbols, and constants $I(c)$ to constant symbols. A \emph{NeSy system} $(I, \eta, \mathcal{B}, \sem{\cdot})$ in ULLER bundles an interpretation $I$, a variable assignment $\eta$, a set of outputs $\mathcal{B}$ (e.g., $\{0,1\}$ for classical, $[0,1]$ for fuzzy/probabilistic), and a semantic function $\sem{\cdot}$ that inductively assigns a value in $\mathcal{B}$ to each formula. ULLER provides three example semantics -- classical, probabilistic, and fuzzy -- each defining $\sem{\cdot}$ separately, which leads to the duplication of semantic rules that we address below.

A neurosymbolic framework (NeSy framework) is a general framework for NeSy systems combining neural models with symbolic logic, and it provides the semantic background for the specific logic involved.
Examples are the logics behind DeepProbLog \citep{DBLP:journals/ai/ManhaeveDKDR21} or Logic Tensor Networks \citep{DBLP:journals/ai/BadreddineGSS22}. 

The notion of NeSy framework is not defined in \cite{vankriekenULLERUnifiedLanguage2024}. Rather, they define a notion of NeSy system, which is quite ad-hoc, because it simultaneously makes two choices: (1) a choice of a particular interpretation with functions, predicates and neural models (e.g., probabilities for traffic lights, or neural networks learning addition of digit images), and (2) a choice of semantic rules for interpreting terms (which also involves a choice of the logic, e.g.\ classical or probabilistic or fuzzy). This causes semantic rule duplication.

Our notion of NeSy framework provides a means to disentangle these two choices. 
Moreover, our approach makes semantic rules independent not only of particular interpretations, but also of the choice of logic (classical, probabilistic, or fuzzy).

In the sequel, we first introduce the algebraic structure needed to model the space of truth values (section~\ref{subsec:2Mon-BLat}), generalising the standard two-valued truth value space $\{T,F\}$ used in classical logic.
Neural, probabilistic and/or non-deterministic aspects are incorporated into logics using monads, which are the key concept of our approach and which we introduce in section~\ref{subsec:monads}. Being such prepared, we can define a NeSy framework as consisting of a monad and a truth algebra in setion~\ref{subsec:nesy-frameworks}.

\subsection{Double Monoid Bounded Lattices (2Mon-BLat)}
\label{subsec:2Mon-BLat}
We need an algebraic structure to model the space of truth values. We weaken the notion of BL algebra of \cite{hajekMetamathematicsFuzzyLogic1998} from fuzzy logic as follows:

\begin{definition}\label{def:2Mon-BLat}
A \textbf{double monoid bounded lattice (2Mon-BLat)} $\mathcal{R}$ is a tuple
\[
  \bigl(S, \; \leq, \; \bot, \; \top, \; \otimes, \;0, \; 1, \;\oplus,  \;\to, \; \neg \bigr)
\]
in which $S$ is a set, $\mathcal{L}:=(S,\leq)$ a bounded lattice (see App.~\ref{sec:glossary} for this and other standard algebraic and categorical terms), while $\bot \in S$ and $\top \in S$ are its bottom and top elements.\footnote{In many cases we have $\bot$ is neutral element for $\oplus$ and $\top$ is neutral element for $\otimes$, for example inside of the unit interval $[0,1]$. In some cases, like G\"odel or Boolean logic, we even have $\oplus = \lor$ and $\otimes = \land$. Also, $\to$ is normally chosen as right adjoint to $\otimes$ (App.~\ref{sec:glossary}) or as $x \to y :=\neg x \oplus y$. In the first case $\neg$ can be defined as implication to zero, in the second one it is defined a priori. Check Table~\ref{tab:all-algebras} for details.} Also $(S,\otimes, 1)$ and $(S,\oplus, 0)$ are monoids (App.~\ref{sec:glossary}), $\to$ is a map $S \times S \to S$, and $\neg$ is a map $S \to S$.
\end{definition}

\noindent
\label{rem:uller-comparison}
Our monoid requirement on $(\otimes, 1)$ and $(\oplus, 0)$ -- i.e.\ associativity and the existence of neutral elements -- is stronger than what the original ULLER paper \citep{vankriekenULLERUnifiedLanguage2024} explicitly requires. ULLER instead requires that its semantics be \emph{classical in the limit}: when the truth space is restricted to $\{0,1\}$ (which actually are the implicit unit elements), the semantics should agree with classical first-order logic (see \citep[Appendix~C]{vankriekenULLERUnifiedLanguage2024}). This classical-in-the-limit property follows from the monoid axioms together with the standard boundary conditions of t-norms and t-conorms (e.g., $1 \otimes x = x$ and $0 \oplus x = x$), but is strictly weaker: one could have classical-in-the-limit behaviour without full associativity. 

We adopt the stronger monoid requirement because it provides a cleaner algebraic structure and associativity ensures that the order of evaluation of iterated conjunctions and disjunctions does not matter. All concrete algebras used in this paper (Table~\ref{tab:all-algebras}) satisfy these requirements; whether there are useful NeSy-relevant algebras that are classical in the limit but not monoids is an open question. Note that we do \emph{not} require commutativity of $\otimes$ or $\oplus$ in general, although all concrete instances in Table~\ref{tab:all-algebras} happen to be commutative.

Also, we want to allow different aggregation operations other than infinite meet and join to cover the quantifiers of Logic Tensor Networks \citep{DBLP:journals/ai/BadreddineGSS22}, motivating the following definition:\footnote{This is inspired by the notion of \emph{aggregated functions} in \citep{DBLP:conf/nesy/BadreddineS21}.}
\begin{definition}\label{def:aggregated-2Mon-BLat}
An \textbf{aggregated 2Mon-BLat (aggr-2Mon-BLat)} has for each set $X$ two order-preserving maps:
\begin{align*}
 \mathrm{aggr}_X^\forall, \mathrm{aggr}_X^\exists  &:\; \mathcal{L}^X \longrightarrow \mathcal{L}.
\end{align*}
In case of a complete lattice, $\mathrm{aggr}_X^\forall$ can be chosen as meet $\bigwedge_X$ and $\mathrm{aggr}_X^\exists$ as join $\bigvee_X$.

\end{definition}

\subsection{Set-Based Monads}
\label{subsec:monads}
We interpret formulas $[x:=m(T_1,\dots,T_n)]F$ involving neural models $m$ ($=$ certain computations) using Moggi's notion of computational monad \citep{moggiNotionsComputationMonads1991}. 
Moggi's central idea is to use monads to abstract from various computational effects by providing an abstract framework covering many different types of effects such as state, input, output or non-determinism. We here use monads in a similar way, but rather focus on neural and probabilistic computations. This use of moands has been widely adopted for probabilistic programming \citep{ramsey2002stochastic,staton2016semantics}; we discuss this connection in detail in Section~\ref{subsec:monadic-pp}. Note that the monadic approach does not focus on mere effectful computation only, it also captures values that are returned by such computations. E.g.\ think of a stateful function that alters the state but also returns a value, or, in our case, a probability distribution over a set of values.
A monad provides three ingredients: (1) given a set (type) of values, a set of computations over such values, (2) an embedding of values into computations, and (3) a notion of sequential composition of computations that allows for passing values through the composition.

\begin{definition} \label{def:set_Kleisli_triple}
A \textbf{(set-based) monad}\footnote{In category theory, a monad on a category $\mathcal{C}$ can be equivalently presented in two ways: (1)~as an endofunctor ${\mathcal T}$ with natural transformations $\eta: \mathrm{Id} \Rightarrow {\mathcal T}$ (unit) and $\mu: {\mathcal T}^2 \Rightarrow {\mathcal T}$ (multiplication) satisfying associativity and unit laws; (2)~as an \emph{extension system} (also called \emph{Kleisli triple}), which is the presentation we adopt here because it directly captures sequential composition of computations and is therefore most natural for computer science applications \citep{moggiNotionsComputationMonads1991}.} $({\mathcal T}, \eta, (-)^*)$  consists of:
\begin{itemize}
  \item A mapping ${\mathcal T}$, mapping sets $X$ to sets ${\mathcal T} X$ (of computations with values from $X$),
  \item A family of functions: $\eta_X : X \to {\mathcal T} X$ for each set $X$ (construing a value $a\in X$ as stateless computation $\eta_X(a)\in {\mathcal T} X$),
  \item A function that assigns to each function $f : X \to {\mathcal T} Y$ a function $f^* : {\mathcal T} X \to {\mathcal T} Y$ (called the \emph{Kleisli extension}), needed for sequential composition of computations, 
\end{itemize}
such that the following axioms hold:
\begin{enumerate}
  \item $(\eta_X)^* = \mathrm{id}_{{\mathcal T} X}$, the identity function on ${\mathcal T} X$,
  \item $f^* \circ \eta_X = f$ for all $f : X \to {\mathcal T} Y$,
  \item $(g^* \circ f)^* = g^* \circ f^*$ for all $f : X \to {\mathcal T} Y$ and $g : Y \to {\mathcal T} Z$.
\end{enumerate}
\end{definition}
\noindent
We now explain how the third ingredient of a monad can be used for sequential composition.
Assume that we have a computation $ma:{\mathcal T} A$ and a computation $f(x):{\mathcal T} B$ parameterised over $x:A$, i.e.\  $f:A\to \mathcal T B$. We can compose them to $(\lambda x:A.f(x))^*(ma)$\footnote{In this lambda notation the variable $x$ of type $A$ is sent to $f(x)$ of type ${\mathcal T} B$.}, or short $f^*(ma)$, of type ${\mathcal T} B$. In Haskell's do-notation,  $(\lambda x:A.f(x))^*(ma)$ is written as $\doTerm{x}{ma}{f(x)}$. For the sake of readability, we will use do-notation henceforth. $f^*(ma)$ is also written as $ma >\!>= f$ in fish notation.\footnote{See \url{https://ncatlab.org/nlab/show/monad+\%28in+computer+science\%29\#DoNotation} for a comprehensive comparison of all different notations.}
\begin{example}\label{ex:nonempty-powerset-monad}
  \leavevmode
  \textbf{Non-empty Powerset monad $\mathcal{P}_{\neq \emptyset}$}
  For a set $X$:
  \[
    \mathcal{P}_{\neq \emptyset} X := \{A \subseteq X \mid A \neq \emptyset\} \quad \text{(non-empty subsets of $X$)},
  \]
  \[
    \eta_X(x) := \{x\},~~~
    f^*(A) := \bigcup_{a \in A} f(a)
    \quad\bigl(f:X\to \mathcal{P}_{\neq \emptyset}Y, A \in \mathcal{P}_{\neq \emptyset}X\bigr).
  \]
  \end{example}

\noindent
This monad models non-deterministic computation returning one or more possible values. The Kleisli extension $f^*(A) = \bigcup_{a\in A} f(a)$ applies $f$ to each element and collects all results -- a ``flatMap'' over sets. In a NeSy setting, this captures situations such as an ambiguous image classifier that returns multiple plausible labels (e.g., a poorly written digit classified as both 7 and 1).

\begin{example}\label{ex:dist-monads}
\leavevmode
\textbf{Probability distribution monad \(\mathcal D\).}\footnote{Note that the sums below are only finite if one excludes all the zero addenda.}
\[
  \mathcal D X := \Bigl\{\,
     \rho:X\to[0,1] \text{ finitely supported} \;\Bigm|\;
     \sum_{x\in X}\rho(x)=1
   \Bigr\} \quad \text{(prob. distributions on $X$)},
\]
\[
  \eta_X(x):=\delta_x, \; \delta_x(y)=\left\{\begin{array}{l}1,\ x=y\\0,\ x\not=y\end{array}\right.\ 
  f^{*}(\rho)(y):=\sum_{x\in X}\!f(x)(y)\cdot\rho(x)
  \quad\bigl(f:X\!\to\!\mathcal D Y,\; \rho\in\mathcal D X\bigr).
\]
$\delta_x$ is the probability distribution that assigns all probability mass to $x$.
$f^{*}(\rho)$ corresponds to a two-level random process: first $x$ is drawn from $\rho$, then $y$ is drawn from $f(x)$. This results in a marginal distribution of $Y$ for the joint distribution $\hat\rho(x,y):= f(x)(y) \cdot \rho(x)$.
$\doTerm{x}{ma}{f(x)}$ can be interpreted as ``sample $x$ from $ma$ and then proceed with $f(x)$''.
When the formula $F:=[x:=m(T)]P(x)$ evaluates to a numerical truth value, $f^*(\rho)$, with $f(a):=\sem{P(a)}$ and $\rho:=m(T)$, computes the expected truth value under $\rho$ -- this is how probabilistic NeSy semantics yields expected-value loss functions (cf.\ Example~\ref{ex:mnist-uller}).
\end{example}

\subsection{Definition of Set-Based NeSy Framework}
\label{subsec:nesy-frameworks}

Given some basic set of truth values $\Omega$, our NeSy systems work on the monadic space of truth values ${\mathcal T}\Omega $, which is required to be an aggregated 2Mon-BLat. If ${\mathcal T}$ is the identity monad, ${\mathcal T}\{ 0,1\}$ is just the two-element set $\{ 0,1\}$ of classical truth values. 
If ${\mathcal T}$ is the distribution monad, ${\mathcal T}\{ 0,1\}$ is isomorphic to the unit interval $[0,1]$, regarded as the space of probabilistic or fuzzy truth values.

\begin{definition}\label{def:set-nesy-framework}
A \textbf{NeSy framework}  $\mathcal{F} = ({\mathcal T},\mathcal R)$ consists of
\begin{enumerate}
  \item a \emph{monad} ${\mathcal T}$,
  \item an \emph{aggr-2Mon-BLat} $\mathcal R$ on ${\mathcal T}\Omega$ for some set $\Omega$.
\end{enumerate}
Here, $\Omega$ is a set acting as truth basis,\footnote{Similar to the basis of a vector space.} and ${\mathcal T}\Omega$ is the monadic space of truth values.
\end{definition}

\noindent
Examples are given in Table~\ref{tab:set-nesy-frameworks} and discussed in more detail in section~\ref{sec:set-examples}. Note that further examples arise by varying the 2Mon-BLat $\mathcal R$ on $[0,1]$. Examples requiring category theory are given in Table~\ref{tab:basic-nesy-frameworks}. 
\begin{table}[h]
\centering
\caption{NeSy Framework Examples (set-based)}
\label{tab:set-nesy-frameworks}
\begin{tabular}{@{}llllll@{}}
\toprule
\textbf{Logic/Theory} & \textbf{${\mathcal T}$} & \textbf{$\Omega$} & \textbf{${\mathcal T}\Omega$} & \textbf{$\mathcal R$} & \textbf{Subsection} \\ \midrule
Classical                       & Identity                                        & $\{0,1\}$          & $\{0,1\}$            & Boolean Alg. & \ref{subsec:3val-sem} \\
Three-valued LP                    & Powerset $ \mathcal{P}_{\neq \emptyset}$                                    & $\{0,1\}$          & $\{0,B,1\}$            & Kleene/Priest Alg. & \ref{subsec:3val-sem} \\
Distributional            & Distribution $\mathcal D$                       & $\{0,1\}$          & $[0,1]$              &Product BL–Alg. & \ref{subsec:dist-sem} \\
Finitary $\text{LTN}_p$          & Distribution $\mathcal D$                        & $\{0,1\}$          & $[0,1]$              &Product SBL-Alg. & \ref{par:LTN} \\
Classical Fuzzy              & Identity                                        & $[0,1]$            & $[0,1]$              &Classical BL–Alg. & -- \\
\bottomrule
\end{tabular}
\end{table}

\begin{proposition}\label{prop:lifting}
  Assume that we can lift lattices along ${\mathcal T}$, i.e.\ for a lattice structure on $X$, we can construct a lattice on ${\mathcal T} X$ such that $\eta$ preservers $\bot$ and $\top$.

  
If ${\mathcal R}=(\Omega,\leq,\bot,\top,\otimes,0,1,\oplus,\to,\neg)$ is a \emph{2Mon-BLat}, then ${\mathcal T}\Omega$ is so, too, in a canonical way.
\end{proposition}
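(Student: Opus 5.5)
The plan is to build each component of the 2Mon-BLat on ${\mathcal T}\Omega$ from the corresponding component on $\Omega$, using the monad structure of Definition~\ref{def:set_Kleisli_triple}. The key tool is the \emph{lifting of binary and unary operations} via Kleisli extension. For a binary operation $\star:\Omega\times\Omega\to\Omega$ define
\[
  a \star_{\mathcal T} b \;:=\; \doTwoTerm{x}{a}{y}{b}{\eta_\Omega(x\star y)},
\]
and for a unary map $g:\Omega\to\Omega$ define $g_{\mathcal T}(a) := \doTerm{x}{a}{\eta_\Omega(g(x))}$, i.e.\ $g_{\mathcal T}=(\eta_\Omega\circ g)^*$.

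Apply this to $\otimes,\oplus,\to$ and to $\neg$. Take the lifted units $1_{\mathcal T}:=\eta_\Omega(1)$ and $0_{\mathcal T}:=\eta_\Omega(0)$. The lattice part $(\leq,\bot_{\mathcal T},\top_{\mathcal T})$ is supplied by the hypothesis of Proposition~\ref{prop:lifting}. Boundedness there is guaranteed by the assumption that $\eta$ preserves $\bot$ and $\top$, so $\bot_{\mathcal T}=\eta_\Omega(\bot)$ and $\top_{\mathcal T}=\eta_\Omega(\top)$. Since Definition~\ref{def:2Mon-BLat} only asks that $\to$ and $\neg$ be maps, with no axioms, nothing further is needed for those.

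What remains to check is the monoid laws for $\otimes_{\mathcal T}$ and $\oplus_{\mathcal T}$.

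\textbf{Unit laws.} These follow from monad axiom~2:
\[
  \eta(1)\otimes_{\mathcal T} b \;=\; \doTerm{y}{b}{\eta(1\otimes y)} \;=\; \doTerm{y}{b}{\eta(y)} \;=\; b,
\]
where the last step uses axiom~1. The right unit law is symmetric: first use axiom~2 on the inner bind, then the unit law in $\Omega$, then axiom~1.

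\textbf{Associativity.} This is the main obstacle. Unfolding $(a\otimes_{\mathcal T} b)\otimes_{\mathcal T} c$ with axiom~3, and using axiom~2 to eliminate the $\eta$ produced by the inner product, gives
\[
  \doTwoTerm{x}{a}{y}{b}{\DO\, z\leftarrow c;\ \eta((x\otimes y)\otimes z)}.
\]
Likewise $a\otimes_{\mathcal T}(b\otimes_{\mathcal T} c)$ becomes
\[
  \DO\, x\leftarrow a;\ \DO\, y\leftarrow b;\ z\leftarrow c;\ \eta(x\otimes(y\otimes z)).
\]
These agree because they share the same order of binds, $a$, then $b$, then $c$, and associativity in $\Omega$ makes the returned values equal. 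So associativity uses only the Kleisli-triple axioms, with no commutativity of the monad. This matters because the paper explicitly avoids assuming commutativity.

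The delicate point is doing the axiom-3 rewriting carefully under binders: one must show that $(\lambda x.\,(\lambda y.\eta(x\otimes y))^*(b))^*$ composed with the outer extension reassociates correctly. I would verify this once as a general lemma, that $\DO$-blocks can be flattened, and then apply it to both $\otimes$ and $\oplus$.

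Canonicity then holds because every structure map is determined by the monad data and $\mathcal R$, and $\eta_\Omega$ is a homomorphism for all operations by axiom~2.
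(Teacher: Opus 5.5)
Your proposal is correct and matches the paper's proof: both use the same do-notation liftings of $\otimes,\oplus,\to,\neg$, take the units $\eta(0),\eta(1)$, and obtain the lattice from the lifting hypothesis with $\bot'=\eta(\bot)$ and $\top'=\eta(\top)$. The paper only asserts that the monoid laws follow from those of $\Omega$ and the monad; you actually carry out the check (unit laws via axioms 2 and 1, associativity via axiom 3 reducing both bracketings to the same bind order $a,b,c$), and you correctly note that no commutativity of ${\mathcal T}$ is needed.
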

\begin{proof}
We define $({\mathcal T}\Omega,\leq',\bot',\top',\otimes',0,1,\oplus',\to',\neg')$ as follows: 
\begin{itemize}
  \item $\leq'$ is the lifting of $\leq$,
  \item $\bot' := \eta(\bot), \qquad \top' := \eta(\top)$,
  \item $0' := \eta(0)$,
  \item $1' := \eta(1)$,
  \item $a \otimes' b := \doTwoTerm{x}{a}{y}{b}{\eta(x\otimes y)}$,
  \item $a \oplus' b := \doTwoTerm{x}{a}{y}{b}{\eta(x\oplus y)}$,
  \item $a \to' b := \doTwoTerm{x}{a}{y}{b}{\eta(x\to y)}$,
  \item $\neg'a := \doTerm{x}{a}{\eta(\neg x)}$.
\end{itemize}
Associativity and unit laws of $\otimes'$ and $\oplus'$ follows from the corresponding properties of the monoids for $\otimes$ and $\oplus$ and those of the monad.
\qed

\end{proof}

\section{Syntax and Semantics of NeSyCat}
\label{sec:syntax-and-semantics}

\subsection{Syntax of First-Order Logic}\label{subsec:fol-syntax}
The ULLER language of \cite{vankriekenULLERUnifiedLanguage2024} features computational function symbols that can be realised e.g.\ by neural networks. In a similar spirit, we here add computational predicate symbols, which are also realised by neural networks, for example in Logic Tensor Networks  \citep{DBLP:journals/ai/BadreddineGSS22}.

\begin{definition}
  A \textbf{NeSy signature} $\Sigma$ consists of 
  \begin{itemize}
  \item a set $S$ of \emph{sorts} of $\Sigma$,
  \item two disjoint sets $\mathrm{Pred}$, $\mathrm{mPred}$ of \emph{predicate symbols} and \emph{computational predicate symbols} of form $p: s_1, \ldots, s_n$, where $p$ is a name and each $s_i\in S$ a sort, 
  \item two disjoint sets $\mathrm{Func}$, $\mathrm{mFunc}$ of \emph{function symbols} and \emph{computational function symbols} of form $f: s_1, \ldots, s_n \to s$, , where $f$ is a name and $s,s_i\in S$ are sorts.
  \end{itemize}
 (Computational) predicate symbols with no arguments are called (computational) propositional symbols $(\mathrm{Prps}$ \text{and} $\mathrm{mPrps} \text{ respectively})$. 
 Function symbols with one argument are called properties $(\mathrm{Prop})$, those with none are called constants $(\mathrm{Const})$.
  \end{definition}

\subsection{Tarskian Semantics}\label{subsec:tarsk-sem}

\begin{definition}
\label{def:nesy-interpretation}
A \textbf{NeSy interpretation} $\mathcal I$ on $\Sigma$ of $({\mathcal T}, \mathcal R)$, for a \emph{NeSy signature} $\Sigma$ and a \emph{NeSy framework} $({\mathcal T}, \mathcal R)$, is given by
\begin{itemize}
 \item  a set $\mathcal I(s)$ for every sort $s$, 
 \item a function $\mathcal I(f) : \mathcal I(s_1) \times \ldots \times \mathcal I(s_n) \to \mathcal I(s)$ for every (normal) function symbol $f : s_1, \ldots, s_n \to s\in\mathrm{Func}$,
 \item a function $\mathcal I(m) : \mathcal I(s_1) \times \ldots \times \mathcal I(s_n) \to {\mathcal T}(\mathcal I(s))$ for every computational function symbol $m : s_1, \ldots, s_n \to s\in\mathrm{mFunc}$,
 \item a function $\mathcal I(P) : \mathcal I(s_1) \times \ldots \times \mathcal I(s_n) \to \Omega$ for every predicate symbol $P : s_1, \ldots, s_n\in\mathrm{Pred}$,
 \item and a function $\mathcal I(M) : \mathcal I(s_1) \times \ldots \times \mathcal I(s_n) \to {\mathcal T}\Omega$ for every computational predicate symbol $M : s_1, \ldots, s_n\in\mathrm{mPred}$.
 \end{itemize}
\end{definition}

\begin{definition}\label{def:nesy-system}
A \textbf{NeSy system} can be defined as a triple $(\mathcal T, \mathcal R, \mathcal I)$, where $(\mathcal T, \mathcal R)$ is a \emph{NeSy framework} and $\mathcal I$ a \emph{NeSy interpretation} over that same \emph{NeSy framework}. 

Note that in this notation the underlying set of basic truth values $\Omega$, with which the aggr-2Mon-BLat $\mathcal R$ is defined, and the signature $\Sigma$ on which the interpretation $\mathcal I$ is defined, are suppressed since they are implicit.
\end{definition}

\begin{example}\label{ex:comp-vs-noncomp}
The MNIST addition formula from the introduction (Example~\ref{ex:mnist-uller}, slightly adapted),
$$\forall x\in \mathtt{ImageData} \bigl[n_1 := \mathtt{classify}(x_1),n_2 := \mathtt{classify}(x_2)\bigr]n_1+n_2=\mathtt{sum}(x_1,x_2)$$
illustrates why the signature requires both computational and non-computational symbols.
Here, $\mathtt{classify}:\mathtt{Image}\to\mathtt{Digit}$ is a \emph{computational} function symbol ($\in\mathrm{mFunc}$): it is realised by a neural network and its interpretation maps into $\mathcal D(\mathcal I(\mathtt{Digit}))$, i.e.\ it returns a distribution over digits rather than a single digit.
By contrast, $+:\mathtt{Digit}^2\to\mathtt{Digit}$ ($\in\mathrm{Func}$), and the projection symbols $\pi_1, \pi_2:\mathtt{ImageData}\to\mathtt{Image}: x \mapsto x_1, x \mapsto x_2$ and $\mathtt{sum}:\mathtt{ImageData}\to\mathtt{Digit}$ are likewise normal function symbols. Similarly, $= \ : \mathtt{Digit}^2\to\mathtt{Bool}$ is the built-in equality predicate symbol interpreted in the usual way as diagonal.

The distinction between computational and non-computational symbols matters because the monadic machinery of a NeSy framework acts only on computational symbols: the monad $\mathcal T$ wraps their output, and the dynamic logic brackets $[\cdot]$ bind the resulting values. Non-computational symbols, by contrast, are interpreted as ordinary functions or predicates and are independent of the choice of monad.
\end{example}

\noindent
Compared to \cite{vankriekenULLERUnifiedLanguage2024}, we have added computational predicate symbols, because LTN and other NeSy frameworks use these. However note that for probabilistic logic and weighted model counting, ULLER makes (probabilistic) independence assumptions due to the nature of its notion of interpretation.\footnote{As already hinted at in the original ULLER paper \citep{vankriekenULLERUnifiedLanguage2024} and looked at in more detail by \cite{kriekenIndependenceAssumptionNeurosymbolic2024} and \cite{kriekenNeurosymbolicReasoningShortcuts2025}, the independence assumption can prevent NeSy predictors from correctly modelling uncertainty.} While computational function symbols enable the use of conditional probabilities, computational predicate symbols are not available in ULLER.\footnote{Note that the predicate $\mathit{True}(x)$, used in some ULLER examples to convert values $\{0,1\}$ (delivered by computational funnctions) into truth values, can also be used in NeSyCat. However, in NeSyCat, we can directly use computational predicates instead.} Hence, ULLER supports a certain combination of probabilistic and fuzzy logic, and so do LTNs. For details, see section~\ref{subsec:cont-prob-sem}

Also, we have dropped uniformity of the notion of interpretation---it now becomes dependent on the monad at hand. This is necessary for faithfully distinguishing finitely supported and continuous probability distributions and for dealing with LTN-style quantification on infinite domains. Still, computational symbols can be realised by neural networks in all of these cases. However, the details of the mapping from neural networks to interpretations of computational symbols differ.

In \cite{vankriekenULLERUnifiedLanguage2024}, based on an interpretation, the notion of NeSy system provides a Tarskian inductive definition $\sem{\cdot}$ of the semantics of formulas and thus it implicitly also defines the semantics of the logical symbols. The drawback of this approach is that the Tarskian semantics $\sem{\cdot}$ is inherently tied to the specific NeSy system. 

We can modularise matters here, because we first give a semantics of the logical symbols via a NeSy framework, and based on that, the interpretation provides the semantics of the non-logical symbols. Hence, the inductive definition of the Tarskian semantics $\sem{\cdot}$ needs to be given only once, and this definition holds across all NeSy frameworks and systems. 

The Tarskian semantics uses variable valuations. As a preparation, we collect variables occurring freely in terms and formulas. Given a term $T$, let $\Gamma_T=\{x_1:s_1,\ldots,x_n:s_n\}$ be its context (of variables) and $s_T$ be its sort. Then, we define the space of variable valuations compatible with $T$ as $\mathcal{V}_T:=\prod_{x:s \in \Gamma_T} \mathcal{I}(s)=\mathcal{I}(s_1)\times\cdots\times\mathcal{I}(s_n)$. 
Given a variable valuation $\nu\in \mathcal{V}_T$, note that is maps variables $x:s$ to values in $\mathcal{I}(s)$.

Analogously, for a formula $F$, $\mathcal{V}_F:=\prod_{x:s \in \Gamma_F} \mathcal{I}(s)$. Sentences, i.e.\ formulas without free variables, can be interpreted over the unique empty variable valuation (in this case, $\mathcal{V}_F=\{()\}$ is a singleton set containing just the empty tuple). Hence, as usual in first-order logic, for interpreting sentences, we do not need a variable valuation.

\begin{definition}\label{def:Tarskian-semantics} 
The \textbf{Tarskian semantics} $\sem{\cdot}$ of formulas and terms in a \emph{NeSy system} $(\mathcal T, \mathcal R, \mathcal I)$ is given by two functions defined in Table~\ref{tab:semantics}:
\begin{align*}
\textbf{Formulas: }\sem{F}_{\mathcal I,\nu}: \mathcal{V}_F \to {\mathcal T} \Omega, \quad \textbf{Terms: }\sem{T}_{\mathcal I,\nu}: \mathcal{V}_T \to \mathcal{I}(s_T).
\end{align*}

\begin{table}[h]
\centering
\caption{Inductive definition of the Tarskian semantics\label{tab:semantics}}
\vspace{0.4em}
\label{tab:semantics}
\renewcommand{\arraystretch}{1.2}
\begin{tabular}{@{}ll@{}}
\toprule
\textbf{Syntax} &
\textbf{Set Semantics $\sem{\cdot}_{\mathcal I,\nu}$} \\
\midrule
\multicolumn{2}{@{}l}{\textbf{Terms}}\\ \midrule

$\displaystyle 
\sem{x:s}$ & $\displaystyle \nu(x)$\\

$\displaystyle 
\sem{c}, \ \sem{T.\mathsf{prop}}, \ \sem{f(\vec{T})}$ & $\displaystyle\mathcal I(c), \ \mathcal I(\mathsf{prop})(\sem{T}) ,\ \mathcal I(f)\bigl(\sem{\vec{T}}\bigr)$\\

\midrule
\multicolumn{2}{@{}l}{\textbf{Atomic formulas}}\\ \midrule
$\displaystyle 
\sem{R}, \ \sem{P(\vec{T})}$ & $\displaystyle \eta_\Omega(\mathcal I(R)),\ \eta_\Omega\bigl(\mathcal I(P)(\sem{\vec{T}})\bigr)$\\
$\displaystyle 
\sem{N}, \ \sem{M(\vec{T})}$ & $\displaystyle\mathcal I(N), \ \mathcal I(M)\bigl(\sem{\vec{T}}\bigr)$\\

\midrule
\multicolumn{2}{@{}l}{\textbf{Compound formulas}}\\ \midrule
$\displaystyle 
\sem{\bot}, \ \sem{\top}$ & 
$\displaystyle 
\bot_{\mathcal R}, \ \top_{\mathcal R}$ \\

$\displaystyle 
\sem{F\to G}, \ \sem{\neg F}$ &
$\displaystyle
\sem{F}\to_{\mathcal R}\sem{G}, \ \neg_{\mathcal R}\sem{F}$ \\[4pt]

$\displaystyle 
\sem{F\| G},\ \sem{F\& G}$ &
$\displaystyle
\sem{F}\oplus_{\mathcal R}\sem{G},\ \sem{F}\otimes_{\mathcal R}\sem{G}$ \\[4pt]

$\displaystyle 
\sem{\exists x{:}s\,F}, \ \sem{\forall x{:}s\,F}$ &
$\displaystyle
\mathrm{aggr}^\exists_{\mathcal I(s)}(\lambda a. \sem{F}_{\nu[x\mapsto a]}), \ \mathrm{aggr}^\forall_{\mathcal I(s)}(\lambda a. \sem{F}_{\nu[x\mapsto a]})$ \\[4pt]

$\displaystyle 
\sem{[x := m(\vec{T})]F}$ &
$\displaystyle
\doTerm{a}{\mathcal{I}(m)(\sem{\vec{T}})}{\sem{F}_{\nu[x\mapsto a]}}$ \\[4pt]
\bottomrule
\end{tabular}

$\vec{T}$ stands for $T_1,\ldots,T_n$.
\end{table}
\noindent
We write $\sem{T}_{\mathcal I,\nu}=\sem{T}_{\mathcal I}(\nu)$ and $\sem{F}_{\mathcal I,\nu}=\sem{F}_{\mathcal I}(\nu)$. That said, we mostly omit $\mathcal I$ and $\nu$ if clear from the context.
\end{definition}

\paragraph{The computational formula rule.}
The last row of Table~\ref{tab:semantics} is the key rule connecting neural models to logic. It reads: ``to evaluate $[x := m(\vec{T})]F$, first evaluate the terms $\vec{T}$, then apply the computational function $m$ to obtain a computation $\mathcal{I}(m)(\sem{\vec{T}}) \in {\mathcal T}(\mathcal{I}(s))$. The do-notation $\doTerm{a}{\cdot}{\cdot}$ then `extracts' a value $a$ from this computation and evaluates $F$ with $x$ bound to $a$.'' What ``extract'' means concretely depends on the monad: for the distribution monad $\mathcal{D}$ (Ex.~\ref{ex:dist-monads}), it means averaging $\sem{F}_{\nu[x\mapsto a]}$ over all possible values $a$, weighted by their probabilities; for the non-empty powerset monad (Ex.~\ref{ex:nonempty-powerset-monad}), it means collecting the truth values of $F$ for every possible value $a$ returned by $m$. This is precisely the Kleisli extension $\sem{F}_{\nu[x\mapsto a]}^*$ of the monad applied to $\mathcal{I}(m)(\sem{\vec{T}})$.

\paragraph{Quantifier aggregation.}
The quantifiers $\forall x{:}s$ and $\exists x{:}s$ evaluate $F$ for every element $a$ of the domain $\mathcal{I}(s)$, producing a family of truth values $\bigl(\sem{F}_{\nu[x\mapsto a]}\bigr)_{a \in \mathcal{I}(s)}$. The aggregation functions $\aggrA_{\mathcal{I}(s)}$ and $\aggrE_{\mathcal{I}(s)}$ then combine this family into a single truth value. Unlike in classical first-order logic, where the quantifiers are hardcoded as infimum and supremum, the aggregation in NeSyCat is parametrised by the NeSy framework: for classical logic, it is $\min$/$\max$; for probabilistic semantics, a product and probabilistic sum; for LTN, a $p$-norm. Section~\ref{sec:set-examples} spells out each of these instantiations.

\noindent
\label{rem:inductive-semantics}
The Tarskian semantics of Definition~\ref{def:Tarskian-semantics} is deliberately inductive, i.e.\ defined by recursion on the structure of formulas. In ULLER \citep{vankriekenULLERUnifiedLanguage2024}, the broader notion of ``neurosymbolic system'' also permits non-inductive evaluation strategies---for instance, first compiling a formula into a circuit representation and then evaluating that circuit. NeSyCat does not incorporate such strategies into its semantics. This is by design: we cleanly separate the \emph{meaning} of a formula (given inductively by the Tarskian semantics) from the \emph{method of evaluation} (which may involve compilation, circuit construction, or sampling). Non-inductive evaluation strategies are thus implementation concerns that operate \emph{on top of} the inductively defined semantics, rather than alternatives to it. This mirrors the classical separation in programming language theory between denotational semantics and compilation, and aligns with the analogous separation of semantics and sampling discussed in Section~\ref{subsec:sampling}.

\section{Examples of Set-Based Semantics}\label{sec:set-examples}

In the sequel, we will discuss some NeSy frameworks in more detail and spell out how the semantic rules look when instantiated. Each subsection specifies a monad ${\mathcal T}$ and a 2Mon-BLat algebra $\mathcal{R}$; the full definitions of all algebras used in this paper are collected in Table~\ref{tab:all-algebras} (Sec.~\ref{sec:algebraic-overview}). Note that the semantic rule for computational formulas $[x:=m(\vec{T})]F$ is always determined by the monad's Kleisli extension $f^*$, while the rules for connectives and quantifiers are determined by the 2Mon-BLat. We often implicitly define parts of the 2Mon-BLat through the semantic rules. E.g.\ the $\aggrE$ and $\aggrA$ functions are implicitly defined by listing semantic rules for the quantifiers.

\subsection{Classical and Three-valued Semantics}\label{subsec:3val-sem}

Classical semantics is simply given by the identity monad (Ex.~\ref{ex:identity-monad}) and the Boolean algebra (Table~\ref{tab:all-algebras}, row ``Boolean'') on $\Omega=\{0,1\}$, which results in classical first-order logic. 

Our classical semantics is deterministic, while \cite{vankriekenULLERUnifiedLanguage2024} use probability distributions, causing the need for selection of values with highest probability, done via argmax. We will model this as NeSy transformation in section~\ref{sec:nesy-trans} and need a non-deterministic NeSy framework as target of this transformation. 
The (non-empty) powerset monad models non-deterministic computations, cf.\ multialgebras \citep{walicki1994multialgebras}. These result in non-deterministic truth values as in:

\paragraph{Logic of Paradox Semantics}\label{par:lp-sem}
For the Logic of Paradox \citep{priest2008introduction}, we use the non-empty powerset monad $\mathcal{P}_{\neq \emptyset}$ (Ex.~\ref{ex:nonempty-powerset-monad}) with the Kleene/Priest algebra (Table~\ref{tab:all-algebras}, row ``Priest''). We have ${\mathcal T} = \mathcal{P}_{\neq \emptyset}$, $\Omega = \{0,1\}$ (equivalently $\{F,T\}$), and $\mathcal T \Omega = \mathcal{P}_{\neq \emptyset}(\{0,1\}) = \{\{0\}, \{1\}, \{0,1\}\}$. The three truth values correspond to: $\{0\} \equiv F$ (false only), $\{1\} \equiv T$ (true only), and $\{0,1\} \equiv B$ (both true and false). Following the uniform Tarskian semantics (where the union in the computational formula rule arises from the Kleisli extension of $\mathcal{P}_{\neq\emptyset}$, cf.\ Ex.~\ref{ex:nonempty-powerset-monad}):

\begin{align}
  \sem{[x:=m(\vec{T})]F}
      &:=
     \bigcup_{a \in \mathcal{I}(m)(\sem{\vec{T}})} \sem{F}_{\nu[x\mapsto a]} \\
      \sem{\exists x{:}s\;F} 
      &:= \sup_{a \in \mathcal{I}(s)}\sem{F}_{\nu[x\mapsto a]}, \quad
      \sem{\forall x{:}s\;F}
      := \inf_{a \in \mathcal{I}(s)}\sem{F}_{\nu[x\mapsto a]} \label{eq:lp-quants}
      \\
    \sem{F \| G}
      &:= \max(\sem{F}, \sem{G}),
  \quad
    \sem{F \& G}
      := \min(\sem{F}, \sem{G})
      \\[4pt]
    \sem{F \rightarrow G}
      &:= \max( \sem{\neg F}, \sem{G}), \quad
    \sem{\neg F}
      := \begin{cases}
        \{0,1\} & \text{if } \sem{F} = \{0,1\} \\
        (\{0,1\} \setminus \sem{F} ) & \text{else }
        \end{cases}
      \\[4pt]
    \sem{\bot}
      &:= \{0\}, \quad
    \sem{\top}
      := \{1\}
  \end{align}

\noindent
where the operations implement Priest's Logic of Paradox with the lattice ordering $\{0\} <_{\text{LP}} \{0,1\} <_{\text{LP}} \{1\}$ (i.e., F $<$ B $<$ T). 

\subsection{Distributional Semantics}\label{subsec:dist-sem}

The distributional semantics corresponds to the third row in Table~\ref{tab:set-nesy-frameworks}, where we use the distribution monad $\mathcal{D}$ (Ex.~\ref{ex:dist-monads}) over the classical truth basis $\{0,1\}$, yielding the truth space $[0,1]$ equipped with the Product algebra (Table~\ref{tab:all-algebras}, row ``Product''): $x\otimes y = xy$ (product t-norm), $x\oplus y = x+y-xy$ (probabilistic sum), $x\to y = \max(1, y/x)$ (R-implication), and $\neg x = 1-x$. This framework provides the semantic foundation for probabilistic logic programming systems and neural-symbolic approaches that work with probability distributions over truth values. The weighted sum in the computational formula rule below arises from the Kleisli extension of $\mathcal{D}$ (cf.\ Ex.~\ref{ex:dist-monads}).
In this setting, computational predicates and function symbols return probability distributions rather than deterministic values. We need to restrict interpretations to finite domains. Finite quantification just iterates conjunction or disjunction. For infinite quantifiers, check section~\ref{subsec:cont-prob-sem}.

\begin{align}
  \sem{[x:=m(\vec{T})]F}
      &:=
      \sum_{a \in \mathcal{I}(s_m)} \sem{F}_{\nu[x\mapsto a]} \cdot \rho_{m}( a \mid \vec{T}),\mbox{~~~ where }\rho_{m}( a \mid \vec{T}) := \mathcal{I}(m)(\sem{\vec{T}})(a)  \\
      \sem{\exists x{:}s\;F} 
    &:= 1 - \prod_{a \in \mathcal{I}(s)} (1 - \sem{F}_{\nu[x\mapsto a]}), \quad
      \sem{\forall x{:}s\;F}
      := \prod_{a \in \mathcal{I}(s)} \sem{F}_{\nu[x\mapsto a]} \\
    \sem{F \| G}
      &:= \sem{F}  + \sem{G} -  \sem{F}\cdot \sem{G},
  \quad
    \sem{F \& G}
      := \sem{F} \cdot \sem{G}
      \\[4pt]
    \sem{F \rightarrow G}
      &:= 
            \max\bigl(1, \sem{G}\slash\sem{F} \bigr),
      \quad
    \sem{\neg F}
      := 1 - \sem{F}
      \\[2pt]
    \sem{\bot}
      &:= 0, \quad
    \sem{\top}
      := 1.
  \end{align}

\subsection{Finitary $\text{LTN}_p$ Semantics}
\label{par:LTN}

The finitary Logic Tensor Networks (LTN) semantics corresponds to the fourth row in Table~\ref{tab:set-nesy-frameworks}, employing the distribution monad $\mathcal{D}$ (Ex.~\ref{ex:dist-monads}) over the classical truth basis $\{0,1\}$ with the truth space $[0,1]$ equipped with the S-Product algebra (Table~\ref{tab:all-algebras}, row ``S-Prod.''). This algebra has the same t-norm ($x\otimes y = xy$) and t-conorm ($x\oplus y = x+y-xy$) as the Product algebra used in the distributional semantics (Sec.~\ref{subsec:dist-sem}), but uses S-implication $x \to y = 1-x+xy$ (strong implication) instead of R-implication $\max(1, y/x)$ (residual implication). Moreover, aggregation differs as well: following \cite{DBLP:journals/ai/BadreddineGSS22}, quantification is performed using $p$-norms, where the parameter $p$ controls the "softness" of the logical operations. For existential quantification, we compute the $p$-norm of truth values, while for universal quantification, we use the dual formulation $1 - \|1-\cdot\|_p$:

\begin{align}
      \sem{\exists x{:}s\;F}
      &:= \Bigl(\frac{1}{| \mathcal{I}_s |}\sum_{a \in \mathcal{I}_s} \sem{F}_{\nu[x\mapsto a]}^{\,p}\Bigr)^{\!1/p},
      \\
      \sem{\forall x{:}s\;F}
      &:= 1 - \Bigl(\frac{1}{| \mathcal{I}_s |}\sum_{a \in \mathcal{I}_s} (1-\sem{F}_{\nu[x\mapsto a]})^{\,p}\Bigr)^{\!1/p},
      \\
    \sem{F \rightarrow G}
      &:= 
            1 - \sem{F} + \sem{F}\cdot\sem{G}.
  \end{align}

\noindent
This is however only possible for finite domains. For the infinite case and additional quantifier variants, we refer to section~\ref{subsec:inf-LTN}.

\subsection{Sampling "Semantics"}\label{subsec:sampling}
While \cite{vankriekenULLERUnifiedLanguage2024} have introduced a sampling semantics, we think that the semantics should define probabilities, while an implementation can work with e.g.\ Monte Carlo sampling in order to obtain an approximation that is easier to implement (and, in the case of quantification over infinite domains, unavoidable). Hence, we do not discuss sampling semantics here.
But we expect that a Monte Carlo convergence theorem can be stated and proved.

\section{NeSy Transformations}
\label{sec:nesy-trans}

\begin{figure}[h]
\centering
\begin{tikzpicture}[
  node distance=2cm and 1.5cm,
  every node/.style={
    rectangle,
    draw=black,
    fill=white,
    thick,
    minimum width=2.5cm,
    minimum height=0.8cm,
    align=center,
    font=\footnotesize
  },
  arrow/.style={
    ->,
    thick,
    >=stealth
  }
]

\node (A) {Crossing $x$};

\node (B) [below=0.5cm of A] {$\mathrm{argmax}[\mathrm{light}(x)] = \{{\color{red}\bullet}, {\color{orange}\bullet}, {\color{green}\bullet}\}$};

\node (R) [below left=1cm and 1cm of B] {$l = \color{red}\bullet$};
\node (AM) [below=1cm of B] {$l = \color{orange}\bullet$};
\node (G) [below right=1cm and 1cm of B] {$l = \color{green}\bullet$};

\node (Rd) [below=0.75cm of R] {$\mathrm{argmax}[\mathrm{drive}(x,{\color{red}\bullet})] = {\color{red}\times}$};
\node (Rg) [below=0.75cm of Rd] {$d = {\color{red}\times} $\\Formula TRUE};

\node (ASet) [below=0.75cm of AM] {$\mathrm{argmax}[\mathrm{drive}(x,{\color{orange}\bullet})] = \{{\color{red}\times},{\color{green}\checkmark}\}$};
\node (A0) [below=0.75cm of ASet, xshift=-1.4cm] {$d = {\color{red}\times} $\\Formula FALSE};
\node (A1) [below=0.75cm of ASet, xshift=1.4cm] {$d = {\color{green}\checkmark} $\\Formula TRUE};

\node (Gd) [below=0.75cm of G] {$\mathrm{argmax}[\mathrm{drive}(x,{\color{green}\bullet})] = {\color{green}\checkmark}$};
\node (Gg) [below=0.75cm of Gd] {$d = {\color{green}\checkmark}$ \\Formula TRUE};

\node (Comb) [below=1cm of A1, xshift=-1.4cm] {Union of Branches = BOTH};

\draw[arrow] (A) -- (B);
\draw[arrow] (B) -- (R);
\draw[arrow] (B) -- (AM);
\draw[arrow] (B) -- (G);
\draw[arrow] (R) -- (Rd);
\draw[arrow] (Rd) -- (Rg);
\draw[arrow] (AM) -- (ASet);
\draw[arrow] (ASet) -- (A0);
\draw[arrow] (ASet) -- (A1);
\draw[arrow] (G) -- (Gd);
\draw[arrow] (Gd) -- (Gg);
\draw[arrow] (Rg) -- (Comb);
\draw[arrow] (A0) -- (Comb);
\draw[arrow] (A1) -- (Comb);
\draw[arrow] (Gg) -- (Comb);

\end{tikzpicture}
\caption{Argmax transformation flowchart for the traffic light example from \citep{vankriekenULLERUnifiedLanguage2024}, shown for a fixed crossing $x$. The diagram instantiates only the local non-deterministic evaluation of $[l:=\text{light}(x), \ d:=\text{drive}(x,l)](d\neq{\color{green}\checkmark}, l={\color{green}\bullet})$ after the $\argmax$ transformation; it does not yet apply the universal quantifier. Since the traffic lights are uniformly distributed, $\argmax[\text{light}(x)]$ returns all three colours, and in the amber case $\argmax[\text{drive}(x,l)]$ returns both ${\color{red}\times}$ and ${\color{green}\checkmark}$. The LP truth values generated along these branches are collected by the local monadic evaluation, yielding BOTH. If one evaluates the full formula $\forall x{:}\text{Crossing} \ [l:=\text{light}(x), \ d:=\text{drive}(x,l)](d\neq{\color{green}\checkmark}, l={\color{green}\bullet})$, the quantifier only afterwards aggregates these per-crossing results over the domain of crossings.}
\label{fig:argmax-diagram}
\end{figure}
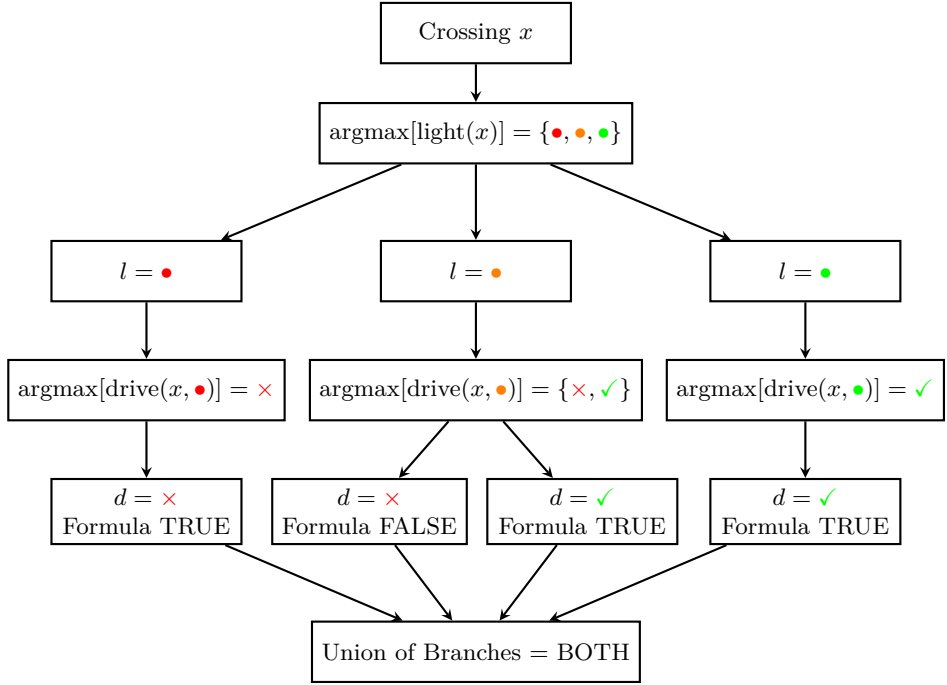

\noindent
The original ULLER paper \citep{vankriekenULLERUnifiedLanguage2024} uses a uniform notion of interpretation. This leads to the problem that classical semantics needs to extract values with maximal probability (using arg max) from a distribution, which is not possible if there is a tie\footnote{Or in case of an infinite distribution (see section~\ref{subsec:cont-prob-sem})}. Here, we propose an alternative way of dealing with this problem: namely, using a NeSy transformation, we can move e.g.\ from an interpretation in a probabilistic NeSy framework to one in a classical framework. Dealing with ties can be done using a non-deterministic semantics.

\begin{definition}\label{def:nesy-transformation}
 A \textbf{NeSy transformation} $\alpha: \mathcal{F} \to \mathcal{F}'$ between two NeSy frameworks is a family of functions\footnote{For those interested in category theory; this is in fact a natural transformation (App.~\ref{sec:glossary}), hence the name.} $\alpha_\Sigma : \mathrm{Intp}_\mathcal{F}(\Sigma) \to \mathrm{Intp}_\mathcal{F'}(\Sigma)$. Here, the
 \emph{interpretation function} $\mathrm{Intp}_\mathcal{F}:
 \Sigma \;\mapsto\; \mathrm{Intp}(\mathcal{F},\Sigma)$ sends a signature to the set of interpretations on $\mathcal{F}$ for that signature. We write $\alpha$ for $\alpha_\Sigma$ if $\Sigma$ is clear from context.

\end{definition}

\paragraph{Argmax transformation: From distributional to non-deterministic semantics}
\label{par:argmax}
For a given distributional interpretation $\mathcal{I}(m)$ of a computational function symbol $m$, we can define a non-deterministic interpretation $\alpha(\mathcal{I})(m)$ of $m$ by defining, where $s_m$ is the sort of $m$, and likewise, $M$ is a computational predicate symbol:
$$ \alpha(\mathcal{I})(m):= \argmax_{a \in \mathcal{I}(s_m)}\bigl[ \mathcal{I}(m)(a) \bigr], \qquad \alpha(\mathcal{I})(M):= \argmax_{b \in \Omega = \{0, 1\}} \bigl[ \mathcal{I}(M)(b) \bigr], $$ 

\noindent
and for all other symbols set $\alpha(\mathcal{I}):= \mathcal{I}$. This definition is \emph{not} possible in the general probabilistic case, because probability measures often return zero on \emph{all} single values. It is also a non-deterministic interpretation since it returns a \emph{set} of values instead of a single value. The resulting semantics is three-valued, as in section~\ref{subsec:3val-sem}, and we apply it to the traffic light example from \citep{vankriekenULLERUnifiedLanguage2024} in Fig.~\ref{fig:argmax-diagram}. There we fix a single crossing $x$ and evaluate the computational part $[l:=\text{light}(x), \ d:=\text{drive}(x,l)](d\neq{\color{green}\checkmark}, l={\color{green}\bullet})$: the uniform distribution on $\text{light}(x)$ makes the first $\argmax$ return all three colours, and the amber branch remains non-deterministic because $\argmax[\text{drive}(x,{\color{orange}\bullet})]$ returns both ${\color{red}\times}$ and ${\color{green}\checkmark}$. The local monadic evaluation, via the Kleisli extension for the non-empty powerset monad, then collects both truth values, giving the LP value BOTH. The universal quantifier in the full formula is not used inside the diagram; it would only aggregate these already computed per-crossing values over the domain of crossings. This $\argmax$ transformation is just one example of many possible NeSy transformations.

Then, in the practical implementation of NeSyCat, one can use random sampling (see section~\ref{subsec:sampling}) over a uniform distribution to obtain a single value from the set of values $\alpha(\mathcal{I})(m)$. This gives a precise foundation for the use of $\argmax$ in the classical semantics in \cite{vankriekenULLERUnifiedLanguage2024}.

\section{Categorical NeSy Frameworks}
\label{sec:cat-nesy}

So far, in this paper, we have not made use of category theory. Indeed, we have introduced set-based notions of monad and of Double monoid bounded lattice.  They do not rely on category theory, nor do the central definitions of NeSyCat, in particular, the notions of NeSy framework, of signature, interpretation and formula, nor the rules of the Tarskian semantics. 

By contrast, in this and the next section, we will heavily make use of category theory. (The subsequent Sections~\ref{sec:related} and~\ref{sec:implementation} implicitly use the results of this section, but do not directly make uses of category theory. Reader not interested in category theory may skip to Section~\ref{sec:related}.) The main purpose of our use of category theory is the possibility to work with continuous probability distributions.\footnote{The lack of which is described as a major shortcoming of traditional NeSy systems by \cite{smetNeuralProbabilisticLogic2023}.} Another motivation is the need for quantification structures for infinite domains, which are common in first-order logic. In these cases, we need to work with structured objects and structure-preserving maps, like measurable spaces and measurable functions. Without organising such spaces and maps into a category, we still can use most of the set-theoretic rules of the semantics. However, we would need to restrict to structure-preserving maps in places like the definition of interpretation (Def.~\ref{def:nesy-interpretation}), and, more severely,
would need to prove that the semantic rules in Def.~\ref{def:Tarskian-semantics} again yield structure-preserving maps. When using category theory, we can avoid such proofs and base our theory on a certain structure that is required for the involved categories. 

That said, we still can use the set-theoretic semantics rules for convenience, also for categories of sets with structure\footnote{Technically, these set-based categories $\category{C}$ are constructs in the sense of \cite{AHS}, which means they come with a faithful functor $U:\category{C}\to\category{Set}$. $U$ maps objects of $\category{C}$ to their underlying sets.}, just because the categorical version of the rules tells us that the resulting maps will be structure-preserving. The only set-theoretic rules of the semantics that we cannot simply re-use are the rules for quantifiers. Here, the aggregation functions $\aggrE$ and $\aggrA$ generally need to make use of the extra structure that the ``sets with structure'' have, see e.g.\ the aggregation functions for probabilistic semantics defined in section~\ref{subsec:cont-prob-sem} below need to make use of the probability distribution coming with the universe.



A brief introduction to category theory is given in Appendix~\ref{sec:cat-gen}. 

\subsection{More (on) Monads}
\label{subsec:more-monads}

\begin{definition}
  A \textbf{monad} on a category $\mathcal{C}$ (in extension system form, cf.\ the footnote in Def.~\ref{def:set_Kleisli_triple}) involves a mapping of objects ${\mathcal T}: \mathrm{Ob}(\mathcal{C}) \to \mathrm{Ob}(\mathcal{C})$, a family of morphisms $\eta_X : X \to {\mathcal T} X$ for each object $X$ in $\mathcal{C}$ (called the \emph{unit}), and a function that assigns to each morphism $f : X \to {\mathcal T} Y$ a morphism $f^* : {\mathcal T} X \to {\mathcal T} Y$
  such that the axioms hold as in Def.~\ref{def:set_Kleisli_triple}. Note that a set-based monad is then a monad on the category $\Set$. 
\end{definition}
  
\begin{definition}[Strong monad]
  A monad $({\mathcal T},\eta,(\!-\!)^{\!*})$ on a  
  category $\mathcal{C}$ with finite products is called \textbf{strong} if there is a natural
  transformation
  \[
  \mathcal{S}_{A,B} : A \times {\mathcal T} B \;\longrightarrow\; {\mathcal T}(A \times B)
  \]
  satisfying the naturality condition: for any morphisms $f: A \to A'$ and $g: B \to B'$,
  \[
  {\mathcal T}(f \times g) \circ \mathcal{S}_{A,B} = \mathcal{S}_{A',B'} \circ (f \times {\mathcal T} g),
  \]
  and such that
  \[
  \mathcal{S}_{A,B}\circ(\mathrm{id}_A\times\eta_B)=\eta_{A\times B},
  \qquad
  \mathcal{S}_{A,B}\circ(\mathrm{id}_A\times f^*) = (id \times f)^*\circ \mathcal{S}_{A,C}.
  \]
  \end{definition}

\begin{example}\label{ex:identity-monad}
\leavevmode
\textbf{Identity monad $\mathrm{Id}$ on $\mathbf{Set}$ (category of sets and functions).}
For a set $X$:
\[
  \mathrm{Id} (X) := X \quad \text{(identity functor)},
\]
\[
  \eta_X(x) := x,~~~
  f^*(x) := f(x)
  \quad\bigl(f:X\to Y, x \in X\bigr).
\]
\end{example}

\noindent
The identity monad models deterministic computation with no effects -- the baseline case. The Kleisli extension $f^*(x) = f(x)$ is just ordinary function application; composition in the Kleisli category reduces to ordinary function composition. In a NeSy setting, this corresponds to classical two-valued logic where neural models return crisp values (e.g., a thresholded classifier emitting exactly one label).

\begin{example}\label{ex:powerset-monad}
\leavevmode
\textbf{Powerset monad $\mathcal{P}$ on $\mathbf{Set}$ (category of sets and functions).}
For a set $X$:
\[
  \mathcal{P} X := \{A \subseteq X\} \quad \text{(powerset of $X$)},
\]
\[
  \eta_X(x) := \{x\},~~~
  f^*(A) := \bigcup_{a \in A} f(a)
  \quad\bigl(f:X\to \mathcal{P}Y, A \subseteq X\bigr).
\]
\end{example}

\noindent
Unlike the non-empty variant (Ex.~\ref{ex:nonempty-powerset-monad}), the full powerset monad allows the empty set as a valid outcome, representing failure or unsatisfiability. The Kleisli extension works identically -- union over mapped elements -- but if $f$ returns $\emptyset$ for some input, that branch contributes nothing. In a NeSy setting, this models a reasoning system where queries may have no satisfying assignment.

\begin{example}[Sub-Distribution Monad $\mathcal S$] The sub-distribution monad $\mathcal S$ is similar to the distribution monad $\mathcal D$ but it allows for finitely supported measures that do not sum up to 1, that means:\label{def:sub-monad}
  \[
    \mathcal S X := \Bigl\{\,
        \rho:X\to[0,1]\;\Bigm|\;
        \text{$\sum_{x\in X}\rho(x)\leq1$, $\rho$ countably additive and has finite support}
      \Bigr\},
  \]
  \[
    \eta_X(x):=\delta_x, \delta_x(y)=\left\{\begin{array}{l}1,\ x=y\\0,\ x\not=y\end{array}\right.\ 
    f^{*}(\rho)(y):=\sum_{x\in X}\!\rho(x)f(x)(y)
    \quad\bigl(f:X\!\to\!\mathcal S Y,\;\rho\in\mathcal S X\bigr).
  \]
  \end{example}

\noindent
The sub-distribution monad models probabilistic computation where probability mass can be ``lost'' (total $\leq 1$). The Kleisli extension works like that of $\mathcal{D}$, but the resulting probabilities may sum to less than~1, representing partial failure or absorption. In a NeSy setting, this captures a probabilistic classifier with a rejection option -- if no class exceeds a confidence threshold, some probability mass is discarded rather than assigned.

\begin{example}\label{ex:giry-monad}
\leavevmode
\textbf{Giry monad \(\mathcal G\) on \(\Meas\), the category of measurable spaces and measurable maps (see Appendix~\ref{sec:cat-gen}).}
For a measurable space \((X,\Sigma_X)\) and Dirac measure $\delta_x$ on $X$ for $x \in X$:
\[
  \mathcal G(X,\Sigma_X)
    := \bigl\{\rho:X \to [0,1]  \Bigm| \rho(X) = 1,\ \rho \text{ countably additive} \bigr\} \quad \text{(prob. measures on $X$)},
\]
\[
  \eta_{(X,\Sigma_X)}(x):=\delta_x,~~~  \delta_x(A)=\left\{\begin{array}{l}1,\ x\in A\\0,\ x\not\in A\end{array}\right.
\]
\[  
  f^{*}(\rho)(A):=\int\limits_{X}\!f(x)(A) d\rho(x)
  \quad\bigl(f:X\!\to\!\mathcal G Y , A\subseteq Y\text{ measurable}\bigr).
\]
$\delta_x$ is the probability distribution that assigns all probability mass to $x$.
Note that the Kleisli category (App.~\ref{sec:glossary}) $\Kl{\mathcal{G}}$ is isomorphic to the category $\Stoch$ of measurable spaces and Markov kernels, which is a Markov category in the sense of \cite{fritz_synthetic_2020}; see Section~\ref{subsec:monadic-pp} for further discussion.
\end{example}

\noindent
The Kleisli extension $f^*(\rho)(A) = \int_X f(x)(A)\, d\rho(x)$ computes the probability of the outcome landing in $A$, averaged over all inputs $x$ weighted by~$\rho$ -- the continuous analogue of $\mathcal{D}$'s ``two-level random process.'' In a NeSy setting, this models e.g.\ a sensor where the input (temperature, position) is a continuous random variable and the neural model maps it to a probability distribution over outcomes.

\begin{example}\label{ex:inf-giry-monad}
\leavevmode
\textbf{Infinite Giry monad $\mathcal G_{\infty}$ on \(\Meas\).}
For a measurable space $(X,\Sigma_X)$:
\[
  \mathcal G_{\infty}(X,\Sigma_X)
    := \bigl\{\mu:\Sigma_X \to [-\infty,\infty] \mid \mu(\emptyset)=0,\;\mu \text{ countably additive}\bigr\},
\]
\[
  \eta_{(X,\Sigma_X)}(x):=\delta_x,\qquad
  \delta_x(A)=\begin{cases}1 & x\in A \\ 0 & x\notin A\end{cases},
\]
\[
  f^{*}(\mu)(A):=\int_{X}\!f(x)(A)\, d\mu(x)
  \quad\bigl(f:X\!\to\!\mathcal S Y,\, A\subseteq Y \text{ measurable}\bigr).
\]
Here the integral is the Lebesgue--Stieltjes integral with respect to the extended signed measure $\mu$. Writing the Jordan decomposition $\mu=\mu^{+}-\mu^{-}$ and using linearity of the integral, one checks that the monad laws hold; thus $\mathcal S$ extends the Giry monad by allowing negative and (possibly) infinite total mass.
\end{example}
  
\noindent
This monad generalises $\mathcal{G}$ by allowing signed and infinite measures, which is primarily a theoretical extension. It is needed for certain mathematical constructions (e.g., characteristic functions, reward/penalty signals in reinforcement-learning-style objectives) and as an intermediate step in proofs involving measure-theoretic arguments.

\begin{proposition}[and definition of the measure-space monad $\mathcal{M}$]\label{prop:measure-monad}
  We can define a measure monad $\mathcal{M}$ as a monad $({\mathcal T},\eta,\mu)$\footnote{Here we use the multiplication form of a monad $({\mathcal T},\eta,\mu)$, where $\mu$ is monad multiplication; see the footnote in Def.~\ref{def:set_Kleisli_triple} for the equivalence of presentations.} on the category $\mathbf{Measr}$ of measure spaces (see Appendix~\ref{sec:cat-gen} for the distinction between $\Meas$ and $\mathbf{Measr}$) with $\eta, \mu$ being the unit and multiplication of the Giry monad $\mathcal{G}$ on $\mathbf{Meas}$. For $\rho$ being probability measures, we can define a probability-space monad $\mathcal{O}$ on the category $\mathbf{Prob}$ of probability spaces (Appendix~\ref{sec:cat-gen}). The same construction can be applied to obtain an infinite measure-space monad $\mathcal M_{\infty}$ on $\mathbf{Measr}$.
  \begin{align*}
    \mathcal{M}((X, \rho)) &:= (\mathcal G(X), \rho^\eta),\\
    \rho^\eta &:= B \longmapsto \rho(\eta^{-1}(B)), \text{ for } B\subseteq \mathcal G(X) \text{ measurable},\\
    \eta^\mathcal{M} &:= \eta, \quad \mu^\mathcal{M} := \mu.
  \end{align*}
  \end{proposition}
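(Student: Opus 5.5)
The plan is to build $\mathcal{M}$ entirely on top of the Giry monad $\mathcal{G}$ on $\Meas$ (Ex.~\ref{ex:giry-monad}), using the forgetful functor $U:\mathbf{Measr}\to\Meas$ that sends $(X,\Sigma_X,\rho)$ to $(X,\Sigma_X)$. This functor is faithful. I assume, as in Appendix~\ref{sec:cat-gen}, that morphisms of $\mathbf{Measr}$ are measurable maps $f:(X,\rho)\to(Y,\sigma)$ that are measure-preserving, $f_*\rho=\sigma$. If $\mathbf{Measr}$ only requires measurability, the argument below simplifies, since the measure-preservation checks can simply be dropped. Because $U$ is faithful, every equation between morphisms of $\mathbf{Measr}$ can be checked after applying $U$, that is, in $\Meas$. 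The proof therefore has three steps: (i) $\mathcal{M}$ is well defined on objects and morphisms; (ii) $\eta$ and $\mu$ are morphisms of $\mathbf{Measr}$; (iii) the naturality and monad laws transfer from $\mathcal{G}$.

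\emph{Step (i).} Since $\eta_X:X\to\mathcal{G}X$ is measurable, the pushforward $\rho^\eta=(\eta_X)_*\rho$ is a measure on $\mathcal{G}X$ with the same total mass as $\rho$. Hence $\mathcal{M}(X,\rho)$ is an object of $\mathbf{Measr}$, and it is an object of $\mathbf{Prob}$ whenever $\rho$ is a probability measure. For a morphism $f$ we set $\mathcal{M}f:=\mathcal{G}f$. This map is measurable by the standard Giry construction. It is also measure-preserving, by naturality of $\eta$ ($\mathcal{G}f\circ\eta_X=\eta_Y\circ f$):
\[
(\mathcal{G}f)_*(\eta_X)_*\rho=(\eta_Y)_*f_*\rho=(\eta_Y)_*\sigma .
\]
Functoriality of $\mathcal{M}$ is inherited from functoriality of $\mathcal{G}$.

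\emph{Step (ii).} The unit $\eta_X:(X,\rho)\to(\mathcal{G}X,(\eta_X)_*\rho)$ is measure-preserving by the very definition of $\rho^\eta$. For the multiplication, the source object is $\mathcal{M}\mathcal{M}(X,\rho)=(\mathcal{G}\mathcal{G}X,(\eta_{\mathcal{G}X})_*(\eta_X)_*\rho)$. The monad unit law $\mu_X\circ\eta_{\mathcal{G}X}=\mathrm{id}$ then gives
\[
(\mu_X)_*(\eta_{\mathcal{G}X})_*(\eta_X)_*\rho=(\eta_X)_*\rho ,
\]
so $\mu_X$ is a morphism of $\mathbf{Measr}$.

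\emph{Step (iii).} The naturality of $\eta$ and $\mu$, and the associativity and unit laws, are equations between underlying maps. They already hold for $\mathcal{G}$, so by faithfulness of $U$ they hold for $\mathcal{M}$. For $\mathcal{O}$ on $\mathbf{Prob}$, pushforwards preserve total mass, so every construction stays inside $\mathbf{Prob}$. For $\mathcal{M}_\infty$, the same argument is repeated with $\mathcal{G}_\infty$ in place of $\mathcal{G}$. Pushforward along a measurable map preserves countable additivity, including for signed and extended measures, so nothing changes.

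The main obstacle I expect is not the monad laws, which are inherited for free. It is the well-definedness in step (ii) for $\mu$, namely pinning down exactly which measure $\mathcal{M}\mathcal{M}(X,\rho)$ carries, together with the measurability of $\mathcal{G}f$ and $\mu$ in the $\mathcal{G}_\infty$ case. For $\mathcal{G}_\infty$ I would first try to transfer the measurability of $\mathcal{G}f$ and $\mu$ from the probability case via the Jordan decomposition mentioned in Ex.~\ref{ex:inf-giry-monad}. If that transfer fails, I would restrict to finite signed measures.
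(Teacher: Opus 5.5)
Your proof is correct, and its core is the paper's. The paper's entire proof is the check that $\rho^\eta=(\eta_X)_*\rho$ is again a (probability) measure, because $\eta_X^{-1}(\mathcal{G}X)=X$; the monad structure is tacitly inherited from $\mathcal{G}$.

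You have, however, attributed the wrong convention to Appendix~\ref{sec:cat-gen}. It says explicitly that morphisms of $\mathbf{Measr}$ (and $\mathbf{Prob}$) are all measurable maps and are \emph{not} required to preserve the measure. Under that convention $U$ is full as well as faithful, so the measure-preservation checks in your steps (i) and (ii) are vacuous. Your argument then reduces to the paper's one-line check plus the transfer of the laws along $U$, which the paper leaves implicit. In fact any choice of measure on $\mathcal{G}X$ would then give a monad, so nothing forces $\rho^\eta$.

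What your extra work buys is a genuinely stronger statement. $\mathcal{M}$ and $\mathcal{O}$ are also monads on the wide subcategories of measure-preserving maps, using only naturality of $\eta$ and the unit law $\mu_X\circ\eta_{\mathcal{G}X}=\mathrm{id}$. There $\rho^\eta$ is the unique measure making $\eta_X$ a morphism, which actually justifies the paper's definition.

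Since your argument is generic in the underlying monad on $\Meas$, the $\mathcal{M}_\infty$ claim reduces to $\mathcal{G}_\infty$ being a monad. That is asserted in Ex.~\ref{ex:inf-giry-monad}, not proved here. The real difficulty there is definedness of the multiplication, not measurability, and your fallback to finite signed measures would not remove it. The map $\nu\mapsto\nu(X)$ is unbounded on that space: for $\Phi=\sum_{n\ge 1}2^{-n}\delta_{2^n\delta_x}$ one gets $\mu_X(\Phi)(X)=\int\nu(X)\,d\Phi(\nu)=\infty$. You would need a uniform total-variation bound instead.
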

  
  \begin{proof}
    If $\rho$ is a probability measure, we know that $\rho^{\eta}(\mathcal{G}(X))=\rho(\eta^{-1}(\mathcal{G}(X)))=\rho(X)=1$, countable additivity follows alike.
  \end{proof}

\noindent
The measure-space monad $\mathcal{M}$ models probability on spaces that already carry their own measure -- this is needed when quantifiers use sort-dependent measures (cf.\ Table~\ref{tab:prob-cats}, column +DQ). In a NeSy setting, this arises when the domain of discourse (e.g., images, sensor readings) comes equipped with a data distribution, and universal or existential quantification integrates against that distribution.

\subsection{The 2Mon-BLat Algebra: A Comprehensive Overview}
\label{sec:algebraic-overview}

In this subsection, we provide a comprehensive overview of the different algebraic structures that can serve as algebras on the truth space ${\mathcal T}\Omega$ in our neurosymbolic framework, along with their associated operations types and logical properties. 

Table~\ref{tab:all-algebras} provides a comprehensive comparison of the fundamental operations across different algebraic structures, showing how each algebra defines its basic operations.

\begin{table}[h!]
\centering
\caption{Overview of aggregated 2Mon-BLat}
\label{tab:all-algebras}
\small
\begin{tabular}{|l|c|c|c|c|c|c|c|c|}
\hline
\textbf{Algebra} & \textbf{Set} & $\bot$ & $\top$ & $\oplus$ & $\otimes$ & $\to$ & $\neg$ & \raisebox{-2pt}{$\aggrE$} \\
\hline
Boolean& $\{0,1\}$ & $0$ & $1$ & $\max$ & $\min$ & $I_{B}$ & $\neg_R$ & $\sup$ \\
\hline
$\text{LTN}_p$ & $[0,1]$ & $0$ & $1$ & $S_P$ & $T_P$ & $I_{SP}$ & $\neg_C$ & $\norm{\cdot}_p$ \\
\hline
$\text{LTN}_q$ & $[0,1]$ & $0$ & $1$ & $S_P$ & $T_P$ & $I_{SP}$ & $\neg_C$ & $P\exists_q$ \\
\hline
Product& $[0,1]$ & $0$ & $1$ & $S_P$ & $T_P$ & $I_P$ & $\neg_R$ & \raisebox{-1pt}{$P\exists$} \\
\hline
S-Prod.& $[0,1]$ & $0$ & $1$ & $S_P$ & $T_P$ & $I_{SP}$ & $\neg_C$ & \raisebox{-1pt}{$P\exists$}\\
\hline
Priest& $\{F,B,T\}$ & $F$ & $T$ & $\max$ & $\min$ & $I_{KD}$ & $\neg_C$ & $\sup$ \\
\hline
\end{tabular}
\end{table}

\paragraph{t-conorms and t-norms ($\oplus$ and $\otimes$, App.~\ref{sec:glossary}):}
\begin{itemize}
\item $S_P$ (Probabilistic sum): $x S_P y = x + y - xy$
\item $T_P$ (Product): $x T_P y = xy$
\end{itemize}

\paragraph{Implications ($\to$):}
\begin{itemize}
\item $I_P$ (Product/Goguen): $I_P(x,y) = \begin{cases} 1 & \text{if } x \leq y \\ y/x & \text{otherwise} \end{cases}$
\item $I_{B}$ (Boolean): $I_{B}(x,y) = \begin{cases} 0 & \text{if } x=1, y=0 \\ 1 & \text{otherwise} \end{cases}$
\item $I_S$ (General S-implication): $I_S(x,y) = \neg x \oplus y$
\item $I_{KD}$ (Kleene-Dienes/Material): $I_{KD}(x,y) = \max(1-x, y)$
\item $I_{SP}$ (S-Product): $I_{SP}(x,y) = 1-x+xy$
\end{itemize}

\paragraph{Negations ($\neg$):}
\begin{itemize}
\item $\neg_R$ (Residual): $\neg_R x = x \to 0$ (includes Heyting/intuitionistic negation)
\item $\neg_C$ (Classical/1-Involutive): $\neg_C x = 1-x$
\item $\neg_V$ (0-Involutive): $\neg_V x = 0-x = -x$
\end{itemize}

\paragraph{Aggregations ($\aggrE$):}
\begin{itemize}
\item $P\exists$ (Infinitary Probabilistic Sum): $P\exists(x) = 1-\exp \Bigl( \expec_{a \sim \mu} \Bigl[\ln(1-\sem{F}_{\nu[x\mapsto a]}) \Bigr] \Bigr)$
\item $P\exists_q$ ($(\mu, q)$-approximated $P\exists$): $P\exists_q(x) = 1- \exp \Bigl( \expec_{a \sim \mu} \Bigl[ \ln\bigl(1-\sem{F}_{\nu[x\mapsto a]}\bigr)^q \Bigr]^{\frac{1}{q}} \Bigr) \\ \text{for } 1/2 \leq q \leq 1$, with $P\exists_q \to P\exists$ as $q \to 1$. Here
$\mu$ can be any measure and it depends on the context, in $\text{LTN}_q$ it depends on the measure space of the sort of the quantified variable at hand.
\end{itemize}
The quantification aggregations employ logarithmic and exponential transforms because they provide the natural generalisation of the product (or probabilistic sum) to infinitary domains. While finite probabilistic sums can be computed directly using products, extending to infinite domains requires the use of expectations, and the logarithmic and exponential transforms enable this generalisation while preserving the essential structure of probabilistic aggregation.

\subsection{Definition of Categorical NeSy Frameworks}

\begin{definition}
  \label{def:2Mon-BLat-cat}
\end{definition}

\begin{definition}

  An \textbf{aggr-2Mon-BLat internal to a category with finite products $\mathcal{C}$} 
on an object $A$ in $\mathcal{C}$ consists of a lattice on A internal\footnote{For an explanation check \url{https://ncatlab.org/nlab/show/internalization} and \url{https://ncatlab.org/nlab/show/monoid+in+a+monoidal+category}, since this is out of scope for this paper.} to $\mathcal{C}$, morphisms $\oplus,\otimes, \to:A\times A\to A$; $\bot,\top, 0,1:1_\mathcal{C}\to A$ and for any objects $B$ and $C$ maps $\mathrm{aggr}_{B,C}^\forall, \mathrm{aggr}_{B,C}^\exists:\mathcal{C}(B\times C,A)\to\mathcal{C}(B,A)$,  such that the axioms of Def.~\ref{def:2Mon-BLat} hold when appropriately interpreted in $\mathcal{C}$\footnote{$1_\mathcal{C}$ is a terminal object.}. 
\end{definition}
Note that our categorical handling of aggregation differs from that in the set-theoretic setting. A full analogy to the set-theoretic case would require aggregation morphisms $A^X\to A$, which would need a Cartesian closed category. This requirement seems too strong for our purposes, since it is not met in many examples and only is required for interpreting higher-order logics. However, given a Cartesian closed category (like $\category{Set}$) with aggregation $aggr:A^X\to A$, we can define aggregation  in the sense of Def.~\ref{def:2Mon-BLat-cat} as mapping $f:B\times C\to A$ to $B\stackrel{\Lambda(f)}{\to} A^C\stackrel{aggr}{\to} A$. Here, $\Lambda(f)$ is currying, defined as follows in \category{Set}: 
$\Lambda(f)(x)(y) = f(x,y)$. In the sequel, we will rely on this definition also for \category{Set}-based categories (constructs \citep{AHS}) that are not Cartesian closed, noting that the definition works even if $A^C$ is just a set and not an object in the category. Hence, in examples, we will define aggregation mostly as in Def.~\ref{def:2Mon-BLat}, but in some cases, we make use of the structure of the object $C$.

\begin{definition}\label{def:nesy-framework}
  A \textbf{NeSy framework}  $({\mathcal T} ,\mathcal R)$ consists of
  \begin{enumerate}
    \item a \textbf{strong monad} ${\mathcal T}$ with strength $\mathcal S$ on a category with finite products $\mathcal C$,\footnote{Note that $\Set$ is a Cartesian closed category (App.~\ref{sec:glossary}), and every monad on $\Set$ is strong.}
    \item an \textbf{aggr-2Mon-BLat} $\mathcal R$ in $\mathcal{C}$ on ${\mathcal T}\Omega$ for some object $\Omega$.
  \end{enumerate}
  Here, $\Omega$ is a set acting as truth basis,\footnote{Similar to the basis of a vector space.} and ${\mathcal T}\Omega$ is the monadic space of truth values.
  \end{definition}
  
  \noindent
  Examples are given in Table~\ref{tab:basic-nesy-frameworks} and their semantics are discussed in section~\ref{sec:cat-sem}. Note that further examples arise by varying the 2Mon-BLat $\mathcal R$ on $[0,1]$.

  \begin{table}[h]
  \centering
  \caption{NeSy Framework Examples (categorical)}
  \label{tab:basic-nesy-frameworks}
  \begin{tabular}{@{}lllllll@{}}
  \toprule
  \textbf{Logic/Theory} & \textbf{$\mathcal C$} & \textbf{${\mathcal T}$} & \textbf{$\Omega$} & \textbf{${\mathcal T}\Omega$} & \textbf{$\mathcal R$} & \textbf{Sem.} \\ \midrule
  Simple Prob.            & $\mathbf{Meas}$ &Giry $\mathcal G$                               & $\{0,1\}$          & $[0,1]$              &Product BL–Alg. & \S\ref{subsec:cont-prob-sem} \\
Standard Borel          & $\mathbf{BorelMeas}$ &Giry $\mathcal G|_{\mathbf{BorelMeas}}$                               & $\{0,1\}$          & $[0,1]$              &Product BL–Alg. & \S\ref{subsec:cont-prob-sem} \\
  Probabilistic             & $\mathbf{Measr}$ &Measr-space $\mathcal M$                               & $\{0,1\}$          & $[0,1]$              &Product BL–Alg. & \S\ref{subsec:cont-prob-sem} \\
  Infinitary $\text{LTN}_p$                     & $\mathbf{Prob}$ & Prob-space $\mathcal O$                                    & $\{0,1\}$          & $[0,1]$            &Product SBL-Alg. & \S\ref{subsec:inf-LTN}\\
  $\text{STL}_r$                      & $\mathbf{Measr}$ & $\infty$-Measr-space $\mathcal M_{\infty}$                                 & $\{1\}$          & $\overline{\mathbb{R}}$            & approx. $\mathbb{R}$ & \S\ref{subsec:stl-sem} \\
  \bottomrule
  \end{tabular}
  \end{table}

\section{Categorical Semantics}
\label{sec:cat-sem}

The categorical notion of interpretation differs from the set-theoretic definition (Definition~\ref{def:nesy-interpretation}) only in that \emph{sets} are replaced by \emph{objects} in the category $\mathcal{C}$ and \emph{functions} are replaced by \emph{morphisms} in $\mathcal{C}$. This generalisation allows the framework to work in any category with suitable structure, not just the category of sets and functions. In particular, $\mathcal{I}(s)$ is now an object in $\mathcal{C}$, and consequently $\id_{\mathcal{I}(s)}$ is the identity morphism on that object in $\mathcal{C}$ (not in $\Set$). In the set-theoretic case, $\mathcal{C} = \Set$, so the two notions coincide; in the categorical case, e.g.\ $\mathcal{C} = \Meas$, $\mathcal{I}(s)$ is a measurable space and $\id_{\mathcal{I}(s)}$ is the identity measurable function.

\begin{definition}[Tarskian semantics $\sem{\cdot}$ of formulas]\label{def:Tarskian-semantics-cat} 
  Given a NeSy framework $({\mathcal T}, \mathcal R)$ and a NeSy interpretation $\mathcal I$ we can determine the interpretation morphisms:
  \begin{align*}
  \textbf{Formulas: }\sem{F}_{\mathcal I}: \mathcal{V}_F \to {\mathcal T} \Omega, \quad \textbf{Terms: }\sem{T}_{\mathcal I}: \mathcal{V}_T \to \mathcal{I}(s_T):
  \end{align*}

  \begin{table}[h]
  \centering
  \caption{Inductive definition of the Tarskian semantics}
  \vspace{0.4em}
  \label{tab:semantics-cat}
  \renewcommand{\arraystretch}{1.2}
  \begin{tabular}{@{}lll@{}}
  \toprule
  \textbf{Syntax} &
  \textbf{Categorical Semantics} $\sem{\cdot}_{\mathcal I}$&
  \textbf{Set Semantics $\sem{\cdot}_{\mathcal I,\nu}$} \\
  \midrule
  \multicolumn{3}{@{}l}{\textbf{Terms}}\\ \midrule

  $\displaystyle 
  \sem{x:s}$ & $\id_{\mathcal I(s)}$ & $\displaystyle \nu_s(x)$\\ 

  $\displaystyle 
  \sem{c}$ & $\mathcal I(c)$ & $\displaystyle\mathcal I(c)$\\
  
  $\displaystyle 
  \sem{T.\mathsf{prop}}$ &
  $\displaystyle 
  \mathcal I(\mathsf{prop})\circ\sem{T}$ &
  $\displaystyle
  \mathcal I(\mathsf{prop})(\sem{T})$ \\
  
  $\displaystyle 
  \sem{f(\vec{T})}$ &
  $\displaystyle 
  \mathcal I(f)\circ\langle \sem{\vec{T}}_i\!\circ\!\pi_i\rangle_i$
  &
  $\displaystyle
  \mathcal I(f)\bigl(\sem{\vec{T}}\bigr)$ \\
  
  \midrule
  \multicolumn{3}{@{}l}{\textbf{Atomic formulas}}\\ \midrule
  $\displaystyle 
  \sem{P}$ & $\eta_\Omega\circ\mathcal I(P)$ & $\displaystyle\eta_\Omega(\mathcal I(P))$\\
  $\displaystyle 
  \sem{N}$ & $\mathcal I(N)$ & $\displaystyle\mathcal I(N)$\\
  $\displaystyle 
  \sem{R(\vec{T})}$ &
  $\displaystyle 
  \eta_\Omega\circ\mathcal I(R)\circ\langle \sem{\vec{T}}_i\!\circ\!\pi_i\rangle_i$
  &
  $\displaystyle
  \eta_\Omega\bigl(\mathcal I(R)(\sem{\vec{T}})\bigr)$ \\
  $\displaystyle 
  \sem{M(\vec{T})}$ &
  $\displaystyle 
  \mathcal I(M)\circ\langle \sem{\vec{T}}_i\!\circ\!\pi_i\rangle_i$
  &
  $\displaystyle
  \mathcal I(M)\bigl(\sem{\vec{T}}\bigr)$ \\
  
  \midrule
  \multicolumn{3}{@{}l}{\textbf{Compound formulas}}\\ \midrule
  $\displaystyle 
  \sem{\top}, \sem{\bot}$ & 
  $\displaystyle 
  1_{\mathcal R}, 0_{\mathcal R}$ & 
  $\displaystyle 
  1_{\mathcal R}, 0_{\mathcal R}$ \\[4pt]
  
  $\displaystyle 
  \sem{\neg F}$ &
  $\displaystyle 
  \neg_\mathcal R \circ \sem{F}$ &
  $\displaystyle
  \neg_\mathcal R(\sem{F})$ \\[4pt]
  
  $\displaystyle 
  \sem{F\to G}$ &
  $\displaystyle 
  \to_{\mathcal R}\!\circ
  \langle \sem{F}\!\circ\!\pi_F,\sem{G}\!\circ\!\pi_G\rangle$
  &
  $\displaystyle
  \sem{F}\,\to_{\mathcal R}\,\sem{G}$ \\[4pt]

  $\displaystyle 
  \sem{F\| G}$ &
  $\displaystyle 
  \oplus_{\mathcal R}\!\circ\!
  \langle \sem{F}\!\circ\!\pi_F,\sem{G}\!\circ\!\pi_G\rangle$
  &
  $\displaystyle
  \sem{F}\,\oplus_{\mathcal R}\,\sem{G}$ \\[4pt]
  
  $\displaystyle 
  \sem{F\& G}$ &
  $\displaystyle 
  \otimes_{\mathcal R}\!\circ\!
  \langle \sem{F}\!\circ\!\pi_F,\sem{G}\!\circ\!\pi_G\rangle$
  &
  $\displaystyle
  \sem{F}\,\otimes_{\mathcal R}\,\sem{G}$ \\[4pt]

  $\displaystyle 
  \sem{\exists x{:}s\,F}$ &
  $\displaystyle 
  \mathrm{aggr}^\exists_{V_{F\setminus x:s},\mathcal I(s)}\!(\sem{F})$
  &
  $\displaystyle
  \mathrm{aggr}^\exists_{\mathcal I(s)}(\lambda a. \sem{F}_{\nu[x\mapsto a]})$ \\[4pt]
  
  $\displaystyle 
  \sem{\forall x{:}s\,F}$ &
  $\displaystyle 
  \mathrm{aggr}^\forall_{V_{F\setminus x:s},\mathcal I(s)}\!(\sem{F})$
  &
  $\displaystyle
  \mathrm{aggr}^\forall_{\mathcal I(s)}(\lambda a. \sem{F}_{\nu[x\mapsto a]})$ \\[4pt]
  
  $\displaystyle 
  \sem{[x := m(\vec{T})]F}$ &
  $\displaystyle 
  \sem{F}^{*}\!\circ\!\mathcal S\!\circ
  \bigl\langle
  \pi_{V_{F\setminus x:s}}, \mathcal I(m)\circ\langle \sem{\vec{T}}_i\!\circ\!\pi_i\rangle_i
  \bigr\rangle$
  &
  $\displaystyle
  \doTerm{a}{\mathcal{I}(m)(\sem{\vec{T}})}{\sem{F}_{\nu[x\mapsto a]}}$\\
  \bottomrule
  \end{tabular}
  \end{table}
  
  \end{definition}
  
  \begin{remark}
    We define $\mathcal{V}_T:=\prod_{x:t \in \Gamma_T} \mathcal{I}(t)$. Here $\Gamma_T$ is the context of T and $s_T$ is the (unique) sort of the term $T$.  Analogously $\mathcal{V}_F:=\prod_{x:t \in \Gamma_F} \mathcal{I}(t)$. Note that if $T_1$ is a subterm of $T_2$, there is a projection $\pi_{T_1,T_2}: \mathcal{V}_{T_2} \to \mathcal{V}_{T_1}$, and analogously for formulas.
   $\vec{T}$ stands for $T_1,\ldots,T_n$.
  Moreover, $\langle\sem{\vec{T}}_i\!\circ\!\pi_i\rangle_i=\langle
          \sem{T_1}\!\circ\!\pi_1,\dots,
          \sem{T_n}\!\circ\!\pi_n
        \rangle$ and $\sem{\vec{T}}=(\sem{T_1},\ldots,\sem{T_n})$.
  The categorical semantics ensures that all involved and resulting functions 
  are morphisms in $\mathcal{C}$, i.e.\ are measurable in case that $\mathcal{C}=\Meas$, etc. With a purely set-theoretic semantics, we would need to prove measurability (or other properties) separately for each NeSy framework.

  \noindent
  That said, besides the general categorical case, \footnote{Logician's note: We don't differentiate properly between additive and multiplicative connectives/units/quantifiers for an easier presentation coherent with the NeSy literature. However, this could easily be adapted to obtain something like Girard's linear logic \citep{girardLinearLogicIts1995}, following the Zeitgeist of his transcendental syntax.} 
  for better understandability, we also translate the equations to their meaning in the category of sets.
  We work with variable valuations $\nu\in \mathcal{V}_T$ (and $\nu\in \mathcal{V}_F$), noting that elements of $\prod_{x:t \in \Gamma_T} \mathcal{I}(t)$ map variables $x:t$ to values in $\mathcal{I}(t)$. We write $\sem{T}_{\mathcal I,\nu}=\sem{T}_{\mathcal I}(\nu)$ and $\sem{F}_{\mathcal I,\nu}=\sem{F}_{\mathcal I}(\nu)$. That said, we mostly omit $\mathcal I$ and $\nu$ if clear from the context.
  \end{remark}

\subsection{Probabilistic Semantics}
\label{subsec:cont-prob-sem}

\begin{definition}
A \textbf{probabilistic NeSy framework} is a tuple $({\mathcal T}, \mathcal R)$ with ${\mathcal T} = \mathcal{M}$ the Giry monad (Ex.~\ref{ex:giry-monad}) on the category $\mathbf{Measr}$ of measure spaces and $\mathcal{R}$ a suitable 2Mon-BLat, typically the Product algebra on $[0,1]$ (Table~\ref{tab:all-algebras}, row ``Product''; see Sec.~\ref{sec:algebraic-overview} for the full overview of algebras). The integral in the computational formula rule below is the continuous analogue of the weighted sum in the distributional semantics (Sec.~\ref{subsec:dist-sem}) and arises from the Kleisli extension of $\mathcal{M}$.
\end{definition}

\noindent
The interpretation of a function $f$ of arity $n$ is a \emph{Markov kernel} (App.~\ref{sec:glossary}), which is a measurable map \(X \xrightarrow{q} \mathcal{M}(Y)\) where $\mathcal{M}$ denotes the \emph{measure monad} on the category $\mathbf{Measr}$ of measure spaces (Appendix~\ref{sec:cat-gen}). 

Our definition of a probabilistic semantics largely follows that in the original ULLER paper \citep{vankriekenULLERUnifiedLanguage2024}. A central design decision of ULLER is the use of first-order interpretations and the use of probability distributions to interpret computational function symbols. This means that ULLER (and therefore also NeSyCat) is (like Logic Tensor Networks) not built on weighted model counting, i.e.\ on probability distributions over the set of interpretations. That said, it is still possible to capture certain aspects of weighted model counting in ULLER and NeSyCat, as we will see in section~\ref{subsec:wmc} below.

Connectives in the probabilistic semantics of ULLER are interpreted assuming independence of probabilities for atomic formulas. Hence, our distributional semantics can be seen as a special case of a fuzzy semantics, where the t-norm is the probabilistic product and the t-conorm is the probabilistic sum. This means that we can use the same equations as in the fuzzy semantics (and as in LTNs), but with a different motivation. Moreover, this explains why there is no essential difference between these probabilistic and fuzzy semantics.

Let us derive from our general semantic in definition~\ref{def:Tarskian-semantics} the interpretation of monadic formulas in probabilistic semantics. For the probabilistic NeSy framework, we define the aggregation morphisms $\mathrm{aggr}_{B,C}^\forall, \mathrm{aggr}_{B,C}^\exists$ required by Def.~\ref{def:2Mon-BLat-cat} as follows: for any measure spaces $B$ and $C$, 
\begin{align*}
\mathrm{aggr}_{B,C}^\forall(f) &:= \exp \circ \, \expec_{c \sim \mu_C} [\ln \circ f(\cdot, c)] \\
\mathrm{aggr}_{B,C}^\exists(f) &:= 1 - \exp \circ \, \expec_{c \sim \mu_C} [1 - \ln \circ f(\cdot, c)]
\end{align*}
where $f: B \times C \to [0,1]$ and $\mu_C$ is the measure on $C$. In the following examples, we provide implicit definitions of these aggregation operations through their concrete realisations. In set-theoretic notation, the semantics of computational formulas can be written as, where we use the notation $\rho_m(\cdot|\vec{T}) := \mathcal{I}(m)(\sem{\vec{T}})$:
\begin{align*}
\sem{[x := m(\vec{T})]F} &= \doTerm{a}{\mathcal{I}(m)(\sem{\vec{T}})}{\sem{F}_{\nu[x\mapsto a]}}  \\
&= \int_{a \in \mathcal{I}(s_m)} \sem{F}_{\nu[x \mapsto a]} \, d \mathcal{I}(m)(\sem{\vec{T}})(a)  \\
&= \int_{a \in \mathcal{I}(s_m)} \sem{F}_{\nu[x \mapsto a]} \, d \rho_m(a|\vec{T}) \\
&= \expec_{a \sim \rho_m( \cdot\mid \vec{T})} \bigl[\sem{F}_{\nu[x\mapsto a]}\bigr] \\
&= \sum_{a \in \mathcal{I}(s_m)} \sem{F}_{\nu[x \mapsto a]} \cdot \rho_m(a|\vec{T}) \quad \text{(if $\mathcal{I}(s_m)$ is finite)}
\end{align*}

\noindent
We evaluate in the Product Algebra to obtain, where $\mu_s$ is the measure given by the measure space of $\mathcal I(s)$
\begin{align}
\sem{[x:=m(\vec{T})]F}
    &= \expec_{a \sim \rho_m( \cdot\mid \vec{T})} \bigl[\sem{F}_{\nu[x\mapsto a]}\bigr] \label{eq:monad-integral}   \\[8pt]
    \sem{\exists x{:}s\;F}
    &= 1-\exp \bigl( \expec_{a \sim \mu_s} \bigl[ \ln \bigl(1- \sem{F}_{\nu[x\mapsto a]}\bigr) \bigr] \bigr) \label{eq:prob-exists-quant} \\
    \sem{\forall x{:}s\;F}
    &= \exp \expec_{a \sim \mu_s} \bigl[ \ln \sem{F}_{\nu[x\mapsto a]} \bigr] \label{eq:prob-forall-quant}
    \\
  \sem{F \| G}
    &= \sem{F}  + \sem{G} -  \sem{F}\cdot \sem{G}, \quad
  \sem{F \& G}
    = \sem{F} \cdot \sem{G},
    \\[4pt]
  \sem{F \rightarrow G}
    &= 
          \max\bigl(1, \sem{G}\slash\sem{F} \bigr),
    \quad
  \sem{\neg F}
    = 1 - \sem{F}
    \\[2pt]
  \sem{\bot}
    &= 0, \quad
  \sem{\top}
    = 1.
\end{align}

\paragraph{$\forall$ as weighted products (finite case).} \label{par:dependence}
Let $\mathcal{I}(s)$ be finite and define the random variable $X(a):=\sem{F}_{\nu[x\mapsto a]}$.
The infinitary probabilistic $\forall$ (Eq.~\eqref{eq:prob-forall-quant}) evaluated
on $F$ with a weighted counting measure on $X$ yields
\[
  \sem{\forall x{:}s\;F}
   \;=\; \exp\Bigl( \sum_{a\in \mathcal{I}(s)} w_a  \ln X(a) \Bigr)
   \;=\; \Bigl( \prod_{a\in \mathcal{I}(s)} \exp(\ln (X(a)^{w_a})) \Bigr)
   \;=\; \prod_{a\in \mathcal{I}(s)} X(a)^{w_a}.
\]
By contrast, the finite product aggregator of the original ULLER semantics is obtained by setting $w_a = 1$ for all $a \in \mathcal{I}(s)$:
\[
  \sem{\forall x{:}s\;F} \;=\; \prod_{a\in \mathcal{I}(s)} X(a)^1 .
\]
\noindent
However, our definition of the $\forall$ quantifier is more general even in the finite case, as it allows for meaningful examples like the weighted counting measure on $\mathcal{I}(s)$ yielding the geometric mean with $n:= \# \mathcal{I}(s)$, the number of elements in $\mathcal{I}(s)$, yielding
\[
  \sem{\forall x{:}s\;F} \;=\; \prod_{i=1}^{n} X(a_i)^{1/n}.
\]
\noindent
In the case above of the geometric mean and also in many other (continuous) cases, one actually does choose only \emph{one} universal quantifier to be constant on all sorts, as for the existential quantifier. In this case one can actually work within a simpler framework using the \emph{normal} Giry monad on the category $\mathbf{Meas}$ of measurable spaces:

\begin{definition}
  A \textbf{simple probabilistic NeSy framework} is a tuple $({\mathcal T}, \mathcal R)$ with ${\mathcal T} = \mathcal{M}$ the Giry monad on the category $\mathbf{Meas}$ of measurable spaces and $\Omega$ the truth basis, normally $\Omega = \{0,1\}$.
\end{definition}

\noindent
Now, if one works in standard Borel spaces\footnote{Standard Borel spaces are measurable spaces where the underlying set is isomorphic to $\mathbb{R}$ or is finite or countable. All finite powers of the real line and all intervals, as measurable spaces, are isomorphic to the real line, as explained at \url{https://ncatlab.org/nlab/show/standard+Borel+space}. Also a measure space (or probability space) is called a standard Borel measure space ($\mathbf{BorelMeas}$) (standard Borel probability space ($\mathbf{BorelProb}$)) if and only if its underlying measurable space is standard Borel as above.} (as one most often does in practice), one obtains an even more practical version of this simple probabilistic NeSy framework. This is very useful in practice since one can see an uncountable standard Borel space just as a set that is canonically equipped with the Borel sigma-algebra and Lebesgue measure. Also a finite or countable standard Borel space can be seen as a set that is canonically equipped with the discrete sigma-algebra and counting measure.

\begin{definition}
  A \textbf{simple standard Borel NeSy framework} is a tuple $({\mathcal T}, \mathcal R)$ with ${\mathcal T} = \mathcal{M}$ the Giry monad on the category $\mathbf{BorelMeas}$ of standard Borel spaces and $\Omega$ the truth basis, normally $\Omega = \{0,1\}$.
\end{definition}
\noindent
Additionally, if one chooses the measures for universal and existential quantification to be given by a density functions $f$ (most often the case in practice), one obtains an implementation-friendly Lebesgue-probabilistic NeSy semantics for the quantifiers $\forall$ and $\exists$:
\begin{align}
  \sem{\forall x{:}s\;F} &= \exp \int_{a \in \mathbb \mathcal{I}(s)}  \ln \sem{F}_{\nu[x\mapsto a]} \, f(a) \, da \label{eq:prob-forall-quant-density} \\
  \sem{\exists x{:}s\;F} &= 1 - \exp  \int_{a \in \mathbb \mathcal{I}(s)} \ln (1 - \sem{F}_{\nu[x\mapsto a]}) \, f(a) \, da \label{eq:prob-exists-quant-density}
\end{align}
\noindent
In the same spirit, if the semantics of the monadic formula~\ref{eq:monad-integral} should only use probability measures admitting a density w.r.t. the Lebesgue measure\footnote{This is hidden in the notation of the semantics of the monadic formula in the original ULLER semantics in \citep{vankriekenULLERUnifiedLanguage2024}, Eq. 20; although there the density is mistakenly confused with the corresponding measure.}, one can work with a simplified Giry monad on the category $\mathbf{BorelMeas}$, sending a standard Borel space to the standard Borel space of probability measures on that space admitting a density w.r.t. the Lebesgue measure. Combined with the quantifiers defined w.r.t Lebesgue densities from eq.~\eqref{eq:prob-forall-quant-density} and eq.~\eqref{eq:prob-exists-quant-density}, this makes the implementation simpler and more efficient and is the default choice in practice anyway.

Actually, depending on the use case, one might want to use a probability monad on one of the following categories\footnote{Which are by far not all, see \url{https://ncatlab.org/nlab/show/monads+of+probability,+measures,+and+valuations} for more examples.}, listed in increasing complexity in Table~\ref{tab:prob-cats} summarising typical suitability.

\begin{table}[h]
\centering
\caption{Suitability of base categories and their probability monads (see Appendix~\ref{sec:cat-gen} for definitions of these categories and the distinction between them)}
\label{tab:prob-cats}
\small
\begin{tabular}{@{}llccccccc@{}}
\toprule
\textbf{Category} & \textbf{Monad} & \textbf{FS} & \textbf{Finite} & \textbf{$\cong \mathbb Z$} & \textbf{\(\cong \mathbb R\)} & \textbf{+DQ} & \textbf{+DPQ} & \textbf{+HO} \\ \midrule
\(\mathbf{Set}\)          & $\mathcal{D}$ (Ex.~\ref{ex:dist-monads})        & $\checkmark$ & $\checkmark$ & $\checkmark$ & $\checkmark$ & $\times$ & $\times$ & $\checkmark$ \\
\(\mathbf{BorelMeas}\)    & $\mathcal{G}|_\mathrm{Borel}$                    & $\circ$ & $\circ$ & $\checkmark$ & $\checkmark$ & $\times$ & $\times$ & $\times$ \\
\(\mathbf{BorelMeasr}\)   & $\mathcal{M}|_\mathrm{Borel}$                    & $\circ$ & $\circ$ & $\circ$ & $\circ$ & $\checkmark$ & $\circ$ & $\times$ \\
\(\mathbf{BorelProb}\)    & $\mathcal{O}|_\mathrm{Borel}$                    & $\circ$ & $\circ$ & $\circ$ & $\circ$ & $\circ$ & $\checkmark$ & $\times$ \\
\(\mathbf{Meas}\)         & $\mathcal{G}$ (Ex.~\ref{ex:giry-monad})          & $\circ$ & $\circ$ & $\checkmark$ & $\checkmark$ & $\times$ & $\times$ & $\times$ \\
\(\mathbf{Measr}\)        & $\mathcal{M}$ (Prop.~\ref{prop:measure-monad})   & $\circ$ & $\circ$ & $\circ$ & $\circ$ & $\checkmark$ & $\circ$ & $\times$ \\
\(\mathbf{Prob}\)         & $\mathcal{O}$ (Prop.~\ref{prop:measure-monad})   & $\circ$ & $\circ$ & $\circ$ & $\circ$ & $\circ$ & $\checkmark$ & $\times$ \\
\(\mathbf{QBS}\)          & $P$ (Def.~\ref{def:qbs})                         & $\circ$ & $\circ$ & $\checkmark$ & $\checkmark$ & $\times$ & $\times$ & $\checkmark$ \\
\bottomrule
\end{tabular}
\end{table}

\noindent
The symbols $\checkmark$, $\circ$, and $\times$ indicate whether a given category is \emph{the simplest adequate choice} ($\checkmark$), \emph{technically compatible but unnecessarily complex or not the canonical setting} ($\circ$), or \emph{incompatible} ($\times$) for probability measures of the kind indicated by the column. For example, finitely supported distributions can be defined on $\mathbf{Meas}$, but the measurable structure adds nothing over plain $\mathbf{Set}$, hence the $\circ$; conversely, sort-dependent quantifiers require a measure on each sort, which is only available in $\mathbf{Measr}$ or $\mathbf{Prob}$ (not in $\mathbf{Meas}$ or $\mathbf{Set}$), hence the $\times$ in those cells. The Borel variants restrict to standard Borel spaces, which simplifies measure theory at the cost of excluding exotic spaces.

The column abbreviations are: ``FS'' = finitely supported probability measures, ``Finite'' = probability measures on finite spaces, ``$\cong \mathbb Z$'' = on countable (discrete) spaces, ``$\cong \mathbb R$'' = on uncountable (continuous) spaces, ``+DQ'' = sort-dependent quantifiers via measures, ``+DPQ'' = sort-dependent quantifiers via probability measures, and ``+HO'' = higher-order logic (requires a Cartesian closed category). Notably, $\mathbf{QBS}$ (quasi-Borel spaces, Def.~\ref{def:qbs}) \citep{heunenConvenientCategoryHigherOrder2017} is the only category in the table that supports both continuous distributions and higher-order logic, since it is Cartesian closed and carries a probability monad that agrees with the Giry monad on standard Borel spaces. This has made $\mathbf{QBS}$ the preferred semantic foundation for higher-order probabilistic programming languages \citep{staton2016semantics,vakar2019domain,scibior2018denotational}; see Section~\ref{subsec:monadic-pp} for details.

\subsection{Infinitary $\text{LTN}_p$ Semantics}
\label{subsec:inf-LTN}

\begin{definition}
  A \textbf{LTN-like NeSy framework} is a tuple $({\mathcal T}, \mathcal R)$ with ${\mathcal T} = \mathcal{O}$ the probability monad on the category $\mathbf{Prob}$ of probability spaces and $\Omega$ the truth basis, normally $\Omega = \{0,1\}$, and $\mathcal{R}$ a suitable 2Mon-BLat, typically the S-Product algebra (Table~\ref{tab:all-algebras}, row ``S-Prod.'') as in \citep{DBLP:journals/ai/BadreddineGSS22}. This is the infinitary generalisation of the finitary LTN semantics from Sec.~\ref{par:LTN}.
  \end{definition}

\paragraph{Setting} 
Stable product real logic of Logic Tensor Networks \citep{DBLP:journals/ai/BadreddineGSS22} uses $p$-means for finite quantification. 
However, since domains are generally infinite, we also need to aggregate infinite many truth–scores
\((x_i)_{i\in I}\subseteq[0,1]\).
The power–mean extends from the finite case
to an \emph{integral} form that is well defined whenever the data are
\(L^{p}\)-integrable. Let \((X,\mathcal A,\rho)\) be a probability space and 
\(f:X\!\to[0,1]\subseteq\mathbb R\) a measurable map with 
\(\displaystyle\int_X f\,d\rho\le 1\).
Because \(0\le f\le 1\), one automatically has 
\(f\in L^{p}(\rho)\) for every real \(p\), so $p$‑means are always defined.
This bounded–by–one assumption reflects the fact that in our logical
reading a truth‑score never exceeds~\(1\).

\paragraph{Integral $p$-Mean Quantifiers}
The idea behind $p$-mean aggregation is that the hyperparameter~$p$ acts as a \emph{strictness dial}: for small~$p$ the quantifier behaves like an average, tolerating a few low truth-scores among many high ones, while for $p\to\infty$ it converges to the supremum (for~$\exists$) or infimum (for~$\forall$), enforcing a strictly logical reading. In a learning setting one typically anneals~$p$ upward during training, starting with a smooth, gradient-friendly landscape and gradually tightening toward classical logic.

The infinitary $\text{LTN}_p$ semantics is a modification of the probabilistic semantics, which is motivated by replacing the quantifiers in equation~\eqref{eq:prob-exists-quant} and~\eqref{eq:prob-forall-quant}. We generalise this: by working in the category $\category{Prob}$, for any sort $s$ we have to provide a probability measure $\rho_s$ on $\mathcal{I}(s)$. This enables us to obtain $p$-means for infinite\footnote{Logic Tensor Networks \citep{DBLP:journals/ai/BadreddineGSS22} were originally designed for finite domains only: all variables are mapped to finite sequences of real numbers, bypassing the notion of random variables and probability distributions entirely. \cite{slusarzLogicDifferentiableLogics2023} criticise this limitation, arguing that a differentiable logic must account for the unknown probability distribution from which the training data is sampled (see \citep[Sec.~4]{slusarzLogicDifferentiableLogics2023}). While this criticism is justified for the original LTN formulation, the infinitary $\text{LTN}_p$ semantics we define here shows that LTN's $p$-norm quantifiers \emph{can} be extended to infinite (and continuous) domains by replacing finite sums with integrals against a probability measure $\rho_s$. This extension is our contribution, not part of the original LTN.} domains:

\begin{equation}\label{eq:p-mean}
M_p(a_1,\dots,a_n):=\displaystyle\Bigl(\frac{1}{n}\sum_{i=1}^{n} a_i^{\,p}\Bigr)^{\!1/p}, \quad   
M_{p}(f;\rho_s)
    :=\Bigl(\,\int_{x \in X} f(x)^{p}\,d\rho_s(x)\Bigr)^{\!1/p},
\end{equation}

\noindent
and these extend to $M_0(a_1,\dots,a_n):=\Bigl(\prod_{i=1}^{n} a_i\Bigr)^{\frac{1}{n}}$ and
\(M_{0}(f;\rho_s):=\exp \bigl(\int \ln f\,d\rho_s\bigr)\). For \(p\to\infty\) we recover the supremum.  Take \(X=\{1,\dots,N\}\) with counting measure \(1/N\),
then \( M_{p}(f;\rho_s)\) reduces to $M_p(a_1,\dots,a_n)$ or choose weights \(w_i\) summing up to 1 for \(i=1,\dots,N\) to obtain the weighted $p$-mean. The aggregated 2Mon-BLat is similar to the probabilistic one, except for the Reichenbach implication as implication and the following aggregation functions. For a hyperparameter $1 \leq p< \infty$ of $\text{LTN}_p$, let $\mathrm{aggr}^\exists_{\mathcal I(s)}(f) := M_{p}(f;\rho_s)$ and $\mathrm{aggr}^\forall_{\mathcal I(s)}(f) := 1-M_{p}(\lambda x. f(1-x);\rho_s)$. As a result, equation~\eqref{eq:p-mean} now becomes:
\begin{align*}
  \sem{\exists x{:}s\;F}
    &
    =\Bigl(\,\int_{a\in\mathcal{I}(s)} \bigl(\sem{F}_{\nu[x\mapsto a]}\bigr)^{p}\,d\rho_s(a)\Bigr)^{\!1/p}.
\end{align*}

\noindent
As in \cite{DBLP:journals/ai/BadreddineGSS22}, this is for $1\leq p < \infty$, where for $p \to \infty$ we recover the supremum.

\paragraph{Log-Power $q$-Mean Quantifiers}
The $p$-mean quantifiers above are tied to the Łukasiewicz or Reichenbach style of combining truth values additively. A different, arguably more natural, choice for a product-based real logic is to aggregate \emph{multiplicatively}: universal quantification should behave like an infinite conjunction under the (Goguen) product t-norm, i.e.\ like a (possibly weighted) geometric mean. The logarithm converts this product into a sum, so the natural ``$p$-mean in log-space'' is a power mean applied to $\ln\sem{F}$ rather than to $\sem{F}$ itself. This motivates the following alternative pair of quantifiers with $1/2 \leq q \leq 1$, where for $q \to 1$ the universal quantifier converges to the product mean (geometric mean), while the existential quantifier converges to the probabilistic sum:
\begin{align*}
  \sem{\forall x{:}s\;F}
    &
    :=\exp\Bigl(\Bigl(\,\int_{a \in \mathcal{I}(s)} \bigl(\ln \sem{F}_{\nu[x\mapsto a]}\bigr)^{q}\,d\rho_s(a)\Bigr)^{\!1/q}\Bigr), \quad
  \sem{\exists x{:}s\;F} := \sem{\neg \forall x{:}s\;\neg F}.
\end{align*}

A subtle but important design choice concerns the probability measure used inside the integrals: should it be attached to the \emph{sort}~$s$ (i.e.\ to the domain of quantification) or to the individual \emph{variable}~$x$? We opt for the former.
Hence, our probability measure $\rho_s$ depend on the sort $s$ of the variable $x$ in the quantifier, since it is given by the probability measure of the probability space of $\mathcal I(s)$. This stands in contrast to \cite{slusarzLogicDifferentiableLogics2023}, where the probability measure depends directly on the variable $x$.

\paragraph{Lebesgue-Density Variants}
In many practical settings the domain $\mathcal I(s)$ is a subset of $\mathbb R^n$ and the probability measure $\rho_s$ admits a density with respect to the Lebesgue measure. Working with a density function~$f$ directly is often more convenient for implementation: one replaces $d\rho_s$ by $f(a)\,da$ and can leverage standard numerical integration or Monte-Carlo sampling.
Just as in the case of the simple probabilistic NeSy framework, one can also work with a simple Lebesgue NeSy framework, and choose the measures for universal and existential quantification to be given by a Lebesgue-density function $f$ to obtain. for the case of log-power $q$-mean quantifiers:

\begin{align*}
  \sem{\forall x{:}s\;F}
    &
    :=\exp\Bigl(\Bigl(\,\int_{a \in \mathcal{I}(s)} \bigl(\ln \sem{F}_{\nu[x\mapsto a]}\bigr)^{q} f(a) \,da\Bigr)^{\!1/q}\Bigr), \quad
  \sem{\exists x{:}s\;F} := \sem{\neg \forall x{:}s\;\neg F},
\end{align*}

\noindent
or in the case of the integral $p$-mean quantifiers originally used by LTNs:

\begin{align*}
  \sem{\exists x{:}s\;F}
    &
    =\Bigl(\,\int_{a\in\mathcal{I}(s)} \bigl(\sem{F}_{\nu[x\mapsto a]}\bigr)^{p} f(a) \,da\Bigr)^{\!1/p}, \quad
  \sem{\forall x{:}s\;F} := \sem{\neg \exists x{:}s\;\neg F}.
\end{align*}

\subsection{Infinitary STL Semantics}\label{subsec:stl-sem}

\noindent
Signal Temporal Logic (STL) is a temporal logic for expressing properties of signals. STL is particularly useful for modelling and analysing the temporal aspects of real-time systems, such as the timing and sequencing of events. 

In the semantics of the original STL, we do not have any implication connective, nor neutral elements. We still need to interpret the syntactic implication connective as some form of semantical implication and the syntactic $\bot$ and $\top$ as $-\infty$ and $\infty$, the latter as in \cite{slusarzLogicDifferentiableLogics2023}. Also keep in mind, that this does not touch the truth designations of  $-\infty$ as absolute falsity and $\infty$ as absolute truth. That is, STL works with these degrees of truth and falsity, which could be taken together to form one single degree (of probability), and which the absolute degrees are infinite. Therefore we model STL with a truth basis containing only one element: $\{ 1\}$, a basis element which is then scaled by to form the extended real numbers as truth space. A consequence of this that there are no meaningful non-computational predicates in STL.

What also can not be ignored is that STL does not directly uses a 2Mon-BLat, but only approximates one. It works within the normal extended real numbers algebra $(\tilde{\mathbb{R}}, \max, \min, +, *)$ and then goes on to approximate the $\min$ and $\max$ operations. The are many different ways to do this, but one of the most recent ones is to use the $\mathsf{A}^r$ and $\mathsf{O}^r$ operators as defined in \cite{varnaiRobustnessMetricsLearning2020}. Additionally, STL is not concerned with the operations $\otimes$ and $\oplus$ of the 2Mon-BLat, these are not used in the semantics of STL, and are just kept to be the standard operations $+$, $*$ of the extended real numbers algebra.

For these reasons, in order to faithfully model STL, in a way that makes it comparable to other semantics, we would need to extend our syntax and semantics, and we would also need to allow to approximate 2Mon-BLats. This however, is out of scope for this paper, and will be discussed in a future work, and yet we still give a first sketch:
\begin{align}
  \sem{[x:=m(\vec{T})]F}
  &:= \expec_{a \sim \mu_m( \cdot\mid \vec{T})} \bigl[\sem{F}_{\nu[x\mapsto a]}\bigr] \label{eq:STL-monad-integral}   \\
    \sem{\exists x{:}s\;F} 
    &:= \mathsf{O}^r_{a \in \mathcal{I}(s)} \bigl(\sem{F}_{\nu[x\mapsto a]}\bigr), \quad
    \sem{\forall x{:}s\;F}
    := \mathsf{A}^r_{a \in \mathcal{I}(s)} \bigl(\sem{F}_{\nu[x\mapsto a]}\bigr),
    \\[4pt]
    \sem{F \| G}
      &:= \mathsf{O}^r(\sem{F},\sem{G}),
  \quad
    \sem{F \& G}
      := \mathsf{A}^r(\sem{F},\sem{G}) \label{eq:stl-and}
      \\[4pt]
    \sem{F \rightarrow G}
      &:= 
      \sem{\neg F \| G} =  \mathsf{O}(- \sem{F},\sem{G}),
      \quad
    \sem{\neg F}
      := - \sem{F}
      \\[2pt]
    \sem{\bot}
      &:= -\infty, \quad
    \sem{\top}
      := \infty.
  \end{align}

\noindent
The STL robustness metrics are defined as in \cite{slusarzLogicDifferentiableLogics2023} and originally in \cite{varnaiRobustnessMetricsLearning2020}:
\[
\mathsf{A}^r_{a \in M}(a) = \begin{cases}
\frac{\sum_{a} a_{min} e^{\tilde{a}} e^{r \tilde{a}}}{\sum_{a} e^{r \tilde{a}}} & \text{if } a_{min} < 0 \\[0.5em]
\frac{\sum_{a} a e^{-r \tilde{a}}}{\sum_{a} e^{-r \tilde{a}}} & \text{if } a_{min} > 0 \\[0.5em]
0 & \text{if } a_{min} = 0
\end{cases}
\]
where $r \in \mathbb{R}^+$ (constant), $a_{min} = \min_{a \in M}(a)$, and $\tilde{a} = \frac{a - a_{min}}{a_{min}}$. $\mathsf{A}^r$ is an approximation of the min operation, and for $r \to \infty$ it converges to it. Therefore, its notation is similar to the notation of the min operation, with $\mathsf{A}^r_{b \in N}(f(b)) := \mathsf{A}^r(\mathrm{im}(f)) := \mathsf{A}^r_{a \in \mathrm{im}(f)}(a)$. The operator $\mathsf{O}^r_{a \in M}$ is defined as $-\mathsf{A}^r_{a \in M}(-a)$.\footnote{See \cite{varnaiRobustnessMetricsLearning2020} for details.} For infinite domains, the minimum is replaced by the infimum $\inf_{a \in M}(a)$, and the summations $\sum_{a \in M}$ are replaced by integrals $\int_{a \in M}d\mu_s(a)$, where $\mu_s$ is the measure given by the measure space of $\mathcal I(s)$.

\subsection{Weighted Model Counting and Weighted Model Integration}\label{subsec:wmc}

The purpose of this subsection is not to introduce new semantics, but to demonstrate that NeSyCat's monadic do-notation is expressive enough to recover weighted model counting (WMC) and weighted model integration (WMI) as special cases. This shows that the move to monadic semantics does not lose any expressiveness compared to the original ULLER formulation; rather, the do-notation provides a uniform syntax for iterated monadic binding that naturally captures the nested summations (or integrations) characteristic of WMC/WMI.

ULLER can model certain aspects of WMC in a probabilistic semantics.
However, instead of summing up literal or model weights, one needs to sum up weights of variable valuations. In the case of ULLER \citep{vankriekenULLERUnifiedLanguage2024}, we have the following definition.
Given an interpretation $\mathcal{I}$ and a formula $F$ that is classical (i.e.\ without computational  symbols) with context $\Gamma_F := \{x_1:s_1, \ldots, x_n:s_n\}$, the domains of the variables are given by $\mathcal{I}(s_1), \ldots ,\mathcal{I}(s_n)$\footnote{In the original ULLER paper these were written as $\Omega_1 := \mathcal{I}(s_1), \ldots ,\Omega_n := \mathcal{I}(s_n)$.}. This yields the weighted model count (WMC) as follows:

\begin{align*}
\mathrm{WMC}(F,w)
&= \sum_{\vec{a} \in \mathcal{I}(s_1) \times \cdots \times \mathcal{I}(s_n)} w(\vec{a}) \llbracket F \rrbracket_{\nu{[x_1 \mapsto a_1,\,\ldots,\,x_n \mapsto a_n]}} \\
&= 
\sum_{a_1 \in \mathcal{I}(s_1)}\;
  \cdots\;
  \sum_{a_n \in \mathcal{I}(s_n)}
  \;
  w(a_1,\dots,a_n)\,
  \llbracket F \rrbracket_{\nu{[x_1 \mapsto a_1,\,\ldots,\,x_n \mapsto a_n]}} 
\end{align*}

\noindent
If the weight function factorises, i.e. the random variables
\(x_1,\dots,x_n\) are assumed \textbf{independent}\footnote{As in \citep[p.~16]{vankriekenULLERUnifiedLanguage2024}.} —then for every assignment
\((a_1,\dots,a_n)\in \mathcal{I}(s_1)\times\cdots\times\mathcal{I}(s_n)\):
\[
  w(a_1,\dots,a_n)
  \;=\;
  \prod_{i=1}^{n} \rho_{f_i}(a_i).
\]

\noindent
Consequently, the weighted model count becomes
\[
  \mathrm{WMC}(F, f_1,\dots,f_n)
  \;=\;
  \sum_{a_1\in\mathcal{I}(s_1)}\;
  \cdots\;
  \sum_{a_n\in\mathcal{I}(s_n)}
  \biggl(
    \prod_{i=1}^{n} \rho_{f_i}(a_i)
  \biggr)
  \,
  \llbracket
    F
  \rrbracket_{\nu{[x_1\mapsto a_1,\ldots,x_n\mapsto a_n]}}.
\]

\noindent
This WMC can be expressed in the language of NeSy systems as follows \citep[p.~234]{vankriekenULLERUnifiedLanguage2024}:
\[ [x_1 := f_1 (), \ldots , x_n := f_n ()]F\]

\noindent
In the linearly \textbf{dependent} case, rewrite it via the chain rule,
\[
  w(a_1,\dots,a_n)
  \;=\;
  \prod_{i=1}^{n}
    \rho_{f_i}\!\bigl(a_i \mid
               a_1,\dots,a_{i-1}\bigr),
\]
to make the conditional dependencies explicit. In this case, the WMC becomes
\[
  \sum_{a_1\in\mathcal{I}(s_1)}\;
  \cdots\;
  \sum_{a_n\in\mathcal{I}(s_n)}
  \biggl(
    \prod_{i=1}^{n}
    \rho_{f_i}\!\bigl(a_i \mid
               a_1,\dots,a_{i-1}\bigr)
  \biggr)
  \,
  \llbracket
    F
  \rrbracket_{\nu{[x_1\mapsto a_1,\ldots,x_n\mapsto a_n]}}.
  \]
\noindent
In the \textbf{continuous} case we obtain weighted model integration\footnote{Compare with \citep{morettinEfficientWeightedModel2017}.} (WMI) as follows:

\begin{align*}
\int_{a_1\in\mathcal{I}(s_1)}\!\cdots\!
  \int_{a_n\in\mathcal{I}(s_n)}
  \!\sem{F}_{\nu{[x_1\mapsto a_1,\ldots,x_n\mapsto a_n]}}d \rho_{f_n}\!\bigl(a_n \mid
  a_1,\dots,a_{n-1}\bigr)\cdots d\rho_{f_1}\!\bigl(a_1\bigr)
\end{align*}

\noindent
Finally, we can also express even more general dependencies than linear ones. Given any Bayesian network with a set of variables $x_1,\ldots,x_n$, we can express this in ULLER as follows:
\[
  \sum_{a_1\in\mathcal{I}(s_1)}\;
  \cdots\;
  \sum_{a_n\in\mathcal{I}(s_n)}
  \biggl(
    \prod_{i=1}^{n}
    \rho_{f_i}\!\bigl(a_i \mid
               \textrm{parents}(a_i)\bigr)
  \biggr)
  \,
  \llbracket
    F
  \rrbracket_{\nu{[x_1\mapsto a_1,\ldots,x_n\mapsto a_n]}}.
  \]
and in ULLER, this is expressed as (assuming that the $x_i$ are topologically ordered, i.e.\ $x_i$ can only a parent of $x_j$ if $i<j$):

\[[x_1 := f_1 (), x_2 := f_2 (\textrm{parents}(x_2)),\ldots , x_n := f_n (\textrm{parents}(x_n))]F.\]

\noindent
In each of the above cases -- independent, linearly dependent, Bayesian, and continuous -- the nested sums or integrals arise from iterating the monadic do-notation $[x_i := f_i(\cdot)]$, which unfolds via the Kleisli extension of the distribution or Giry monad. The contribution of NeSyCat here is not a new derivation of WMC/WMI, but rather the demonstration that the monadic framework subsumes these constructions: the do-notation provides a single, compositional syntax from which WMC, WMI, and Bayesian factorisations all emerge as instances of iterated monadic binding.

\section{Related Work and Their Examples}\label{sec:related}
Monad-based dynamic logic \citep{DBLP:journals/fac/MossakowskiSG10} is similar to our approach, but differs in some important aspects. In \cite{DBLP:journals/fac/MossakowskiSG10}, not the whole of $\mathcal T \Omega$ is used as the space of truth values, but only a subset of it, namely the \emph{pure} computations $p$ with truth-valued result. These are discardible, i.e.\ they can be left out in a sequence of computations, and copyable, i.e.\ deterministic. The latter means that $[x:=p, y:=p]x=y$ holds. In the non-empty powerset monad, the distribution and the Giry monads, all computations are discardible, but only those in the image of $\eta$ are copyable. That is, only $T$ and $F$ (in the non-empty non-determinism monad) and $\delta_T$ and $\delta_F$ (in the distribution and Giry monads) are copyable. However, in the non-determinism monad, we clearly want $B\equiv\{T,F\}$ as a truth value, and in the 
distribution and the Giry monads, we want all probabilities $[0,1]$ as a truth values (and this space is isomorphic to probability distributions over $\{T,F\}$). Hence, we do not want a copyability assumption in our semantics. It seems that this is common in monads used for knowledge representation, as opposed to monads used for programming language semantics as in \cite{DBLP:journals/fac/MossakowskiSG10}.
Note that both views may even be useful for one and the same monad (e.g.\ the non-empty powerset monad), depending on its use. 

\subsection{Example: Weather Prediction of DeepSeaProbLog}

As an illustration of how existing neurosymbolic systems using \emph{continuous} probability distributions\footnote{Note that a proper treatment of these requires category theory, see sections~\ref{subsec:more-monads} and~\ref{sec:cat-nesy}.} can be expressed in NeSyCat, using the Giry monad on Borel measurable spaces with probailstic semantics (see section~\ref{subsec:cont-prob-sem}). Consider the following weather prediction example from DeepSeaProbLog \citep{smetNeuralProbabilisticLogic2023}. The original DeepSeaProbLog program uses neural distributional facts to model humidity detection and temperature prediction:

\begin{minted}{prolog}
humid(Data) ~ bernoulli(humid_detector(Data)).
temp(Data, T) ~ normal(temperature_predictor(Data)).

good_weather(Data) :- humid(Data) =:= 1, temp(Data) < 0.
good_weather(Data) :- humid(Data) =:= 0, temp(Data) > 15.

query(good_weather(world)).
\end{minted}

\noindent
Here, \texttt{humid\_detector} and \texttt{temperature\_predictor} output distribution parameters, while \texttt{world} represents a specific dataset. In NeSyCat syntax, this last query becomes the following formula (later denoted by ${F}$) with $\mathbf{world}:=\mathcal{I}(\mathbf{data_1})$:
$$\begin{array}{l}
\qquad  [h := \mathtt{bernoulli}(\mathtt{humid\_detector}(\mathbf{data_1}))] \\
\qquad\quad [t := \mathtt{normal}(\mathtt{temperature\_predictor}(\mathbf{data_1}))] \\
\qquad\qquad  (h = 1 \land t < 0) \lor (h = 0 \land t > 15)
\end{array}$$
where \texttt{humid\_detector} is a function returning parameters for a Bernoulli distribution, and \texttt{temperature\_predictor} returns parameters $(\mu, \sigma)$ for a normal distribution.\footnote{In \cite{smetNeuralProbabilisticLogic2023} the \texttt{humid\_detector} returns a probability distribution over $[0,1]$ and \texttt{temperature\_predictor} returns a probability distribution over $\mathbb{R}^2$, that is that they are also monadic functions, that are composed in the Kleisli category with the monadic functions \texttt{Bernoulli} and \texttt{Normal}, respectively. For simplicity of presentation, we do not explicitly mention this in the syntax and assume they are just deterministic functions.} The nested monadic assignments capture the same probabilistic dependencies as the original program, demonstrating how NeSyCat's uniform syntax can express diverse neurosymbolic paradigms. The semantic evaluation $\sem{F}_{\mathcal{I},\nu}$ of this formula yields the probability distribution over truth values, corresponding to the query result in DeepSeaProbLog.

\paragraph{DeepSeaProbLog semantics vs NeSyCat (same query).}
Given the signature 
\begin{align*}
&Worlds,\ Unit\_Interval,\ Reals^2, \ Reals \in \Sigma, \\
&\mathbf{data1} \in \mathrm{Const} \\
&humid\_detector, temperature\_predictor \in \mathrm{Func} \\
&Bernoulli, \ Normal \in \mathrm{mFunc},
\end{align*}
define the interpretation function
\begin{align*}
&\mathbf{world}:=\mathcal{I}(\mathbf{data_1}) \in Worlds:=\mathcal{I}(Worlds) \\
&\mathcal{I}(temperature\_predictor): Worlds \to \mathbb{R}^2,
&&\mathcal{I}(humid\_detector): Worlds \to [0,1];\\
&\mathcal N := \mathcal I (Normal):\mathbb{R}^2 \to \mathcal G(\mathbb{R}),
&&\mathcal B := \mathcal I (Bernoulli):[0,1] \to \mathcal G(\{0,1\}).
\end{align*}
and label
\begin{align*}
&(\mu,\sigma)=\mathcal I(\mathtt{temperature\_predictor})(\mathbf{world}),
&&p=\mathcal I(\mathtt{humid\_detector})(\mathbf{world});\\
&T\sim\mathcal N(\mu,\sigma^2),
&&H\sim\mathcal{B}(p).
\end{align*}
Then define events $A:=\{H=1,\ T<0\}$ and $B:=\{H=0,\ T>15\}$ (disjoint).
DeepSeaProbLog assigns the query probability as an expectation of an indicator under the joint measure (here product measure by independence of PCFs):
\[
  P_{\mathrm{DSP}}(\mathtt{good\_weather}) 
    = \mathbb E[\mathbf 1_A + \mathbf 1_B]
    = p\!\int_{\mathbb R} \mathbf 1(t<0)\,\varphi_{\mu,\sigma}(t)\,dt
      + (1-p)\!\int_{\mathbb R} \mathbf 1(t>15)\,\varphi_{\mu,\sigma}(t)\,dt,
\]
where $\displaystyle\varphi_{\mu,\sigma}(t):=\frac{1}{\sqrt{2\pi}\,\sigma}\exp\!\left(-\frac{(t-\mu)^2}{2\sigma^2}\right)$ is the normal density. In NeSyCat, the monadic rule for $[x:=m(\cdot)]F$ evaluates to an expectation (Eq.~\eqref{eq:monad-integral}), hence for $F$ above
\[
\begin{aligned}
  \sem{F}
    &= \mathbb E_H\!\left[\mathbb E_T\!\left[\mathbf 1_{\{H=1,T<0\}} + \mathbf 1_{\{H=0,T>15\}} \,\middle|\, H\right]\right] \\
    &= \mathbb E_H\!\left[\mathbf 1_{\{H=1\}}\,\mathbb E_T[\mathbf 1_{\{T<0\}}] + \mathbf 1_{\{H=0\}}\,\mathbb E_T[\mathbf 1_{\{T>15\}}]\right] \\
    &= \mathbb E_H[\mathbf 1_{\{H=1\}}]\!\int_{-\infty}^{0}\!\varphi_{\mu,\sigma}(t)\,dt
       + \mathbb E_H[\mathbf 1_{\{H=0\}}]\!\int_{15}^{\infty}\!\varphi_{\mu,\sigma}(t)\,dt \\
    &= p\!\int_{-\infty}^{0}\!\varphi_{\mu,\sigma}(t)\,dt + (1-p)\!\int_{15}^{\infty}\!\varphi_{\mu,\sigma}(t)\,dt \\
    &= p\,\Phi\!\left(\frac{0-\mu}{\sigma}\right) + (1-p)\,\Bigl(1-\Phi\!\left(\frac{15-\mu}{\sigma}\right)\Bigr) \\
    &= P_{\mathrm{DSP}}(\mathtt{good\_weather}).
\end{aligned}
\]
Here, $\Phi$ denotes the standard normal cumulative distribution function.
Thus, for this program our semantics coincides with DeepSeaProbLog's possible-world semantics while avoiding world enumeration.

Beyond single-instance queries, the distributional quantifiers from Table~\ref{tab:semantics} are meaningful in this setting. Three typical use cases are:

\begin{itemize}
  \item \textbf{All stations (universal aggregation).} Probability that all weather stations have good weather:
  $$\begin{array}{l}
  \forall s \in \mathtt{WeatherStations} \\
  \quad [h := \mathtt{bernoulli}(\mathtt{humid\_detector}(\mathbf{World}_s))] \\
  \qquad [t := \mathtt{normal}(\mathtt{temperature\_predictor}(\mathbf{World}_s))] \\
  \quad\qquad (h = 1 \land t < 0) \lor (h = 0 \land t > 15)
  \end{array}$$
  which evaluates via $\prod_{s}(\cdot)$ in the Product algebra.

  \item \textbf{Exists a region (existential aggregation).} Probability that at least one region has good weather:
  $$\begin{array}{l}
  \exists r \in \mathtt{Regions} \\
  \quad [h := \mathtt{bernoulli}(\mathtt{humid\_detector}(\mathbf{World}_r))] \\
  \qquad [t := \mathtt{normal}(\mathtt{temperature\_predictor}(\mathbf{World}_r))] \\
  \quad\qquad (h = 1 \land t < 0) \lor (h = 0 \land t > 15)
  \end{array}$$
  which evaluates via $1-\prod_{r}(1-\cdot)$ (probabilistic sum).

  \item \textbf{Always over time (universal over time).} Probability that the weather is good for all time slots:
  $$\begin{array}{l}
  \forall \tau \in \mathtt{TimeSlots} \\
  \quad [h := \mathtt{bernoulli}(\mathtt{humid\_detector}(\mathbf{World}_\tau))] \\
  \qquad [t := \mathtt{normal}(\mathtt{temperature\_predictor}(\mathbf{World}_\tau))] \\
  \quad\qquad (h = 1 \land t < 0) \lor (h = 0 \land t > 15)
  \end{array}$$
  again aggregating with a product over $\tau$.
\end{itemize}

\paragraph{On $\forall$-aggregation: product vs.
infimum vs. LTN.}
For distributional semantics there are several meaningful choices for
aggregating universal quantification:
\begin{itemize}
  \item \textbf{Product} (probabilistic $\forall$) as in~\eqref{eq:prob-forall-quant}:
  $\sem{\forall x:s\,F}=\prod_{a\in\mathcal I(s)} \sem{F}_{\nu[x\mapsto a]}$.
  This reads as "probability that \emph{all} independent\footnote{See paragraph~\ref{par:dependence} for a critical discussion of independence.} events hold".
  It is however extremely sensitive: a single zero (e.g., one faulty station
  reporting bad weather) collapses the product to~0 and, even without zeros,
  the value decays exponentially with the number of stations, at least for continuous values. 

  \item \textbf{Infimum/min} (classical fuzzy $\forall$):
  $\inf_{a} \sem{F}_{\nu[x\mapsto a]}$. This captures a strict
  worst-case reading and does not shrink when many stations are near~1, but it
  is still killed by a single zero and ignores the distribution of the other
  values.

  \item \textbf{LTN-style $p$-mean of complements}
  (Sec.~\ref{par:LTN}, App.~\ref{subsec:inf-LTN}):
  $\displaystyle\sem{\forall x:s\,F}=1-\Bigl(\frac{1}{|\mathcal I(s)|}\sum_{a}(1-\sem{F}_{\nu[x\mapsto a]})^{p}\Bigr)^{\!1/p}$
  (or its measure-theoretic analogue). This provides a tunable continuum
  between averaging ($p$ small) and the infimum ($p\!\to\!\infty$), and is
  typically more robust to isolated outliers. A weighted variant
  $\sum_a w_a(1-\sem{F}_a)^p$ with $\sum_a w_a=1$ can encode station
  reliability.
\end{itemize}
In sensor networks with occasional false alarms (one station outputs~0 while
thousands report values near~1), product and infimum both collapse to~0. A
moderate-$p$ LTN aggregator (optionally weighted) avoids single-sensor
catastrophic failure while still converging to the strict infimum as
${p\to\infty}$. When a strict ``all must hold'' interpretation is intended and
measurements are trusted as independent, the product is appropriate.

\paragraph{Infinite (time) domains.}
For countably or uncountably infinite sets of (time) points $T$, instead of the finite set of TimeSlots from the previous example, we use the
measure-theoretic aggregations from Sec.~\ref{subsec:cont-prob-sem}
(Eqs.~\eqref{eq:prob-exists-quant}--\eqref{eq:prob-forall-quant}) or the
infinitary LTN aggregations from Sec.~\ref{subsec:inf-LTN}. Given a (probability)
measure $\rho_T$ on $T$ and writing $\phi(\tau):=\sem{F}_{\nu[\tau\mapsto \tau]}\in[0,1]$:
\[
  \sem{\forall \tau{:}T\,F}
    = \exp\Bigl(\,\expec_{\tau\sim\rho_T}[\ln \phi(\tau)]\Bigr),\qquad
  \sem{\exists \tau{:}T\,F}
    = 1-\exp\Bigl(\,\expec_{\tau\sim\rho_T}[\ln(1-\phi(\tau))]\Bigr).
\]
These reduce to certain (depending on the measure $\rho_T$) finite products/probabilistic sums when $T$ is finite with the
counting measure. 
Zeros at isolated times yield $\ln 0=-\infty$ and therefore a
value~$0$ for $\forall$ (consistent with the convention $0\cdot\infty=0$).

Alternatively, the infinitary LTN scheme provides a robust family of
aggregators:
\[
  \sem{\forall \tau{:}T\,F}
   = 1-\Bigl(\,\int_T (1-\phi(\tau))^{p}\,d\rho_T(\tau)\Bigr)^{\!1/p},\qquad
  \sem{\exists \tau{:}T\,F}
   = \Bigl(\,\int_T \phi(\tau)^{p}\,d\rho_T(\tau)\Bigr)^{\!1/p}.
\]
For rare glitches (bad weather reported at a few time instants), these
aggregators with moderate $p$ (or with $\rho_T$ down-weighting unlikely times)
avoid collapse to~0, while $p\to\infty$ recovers the strict infimum. When a
``almost everywhere'' reading is desired, one may also use the essential
infimum/supremum w.r.t.~$\rho_T$ to ignore measure-zero anomalies.

\subsection{Comparison to LDL (Logic of Differentiable Logics)}

The Logic of Differentiable Logics (LDL) approach \label{subsec:ldl}
\citep{slusarzLogicDifferentiableLogics2023} differs from our framework in several fundamental aspects. First, LDL incorporates comparison operators directly into their syntax, whereas our approach treats them as part of the non-logical signature, providing greater flexibility in language design. Second, LDL restricts their type system to Bool, Real, Vector, and Index types, while our framework maintains a more general type-theoretic foundation. Third, LDL's syntax includes constructs such as lambda-terms, let-terms, repetitions of expressions, vector constructors, and vector look-ups, which introduces complexity that our approach avoids through a more streamlined logical structure.

While both approaches employ typed languages, LDL does not utilise an abstract signature to achieve language generality. Furthermore, LDL lacks the algebraic structure provided by our 2Mon-BLat framework, which endows our semantics with a clearly defined mathematical structure while maintaining sufficient flexibility to accommodate different logical semantics (STL as an approximation). Additionally, LDL makes the restrictive assumption that all distinct random variables are independent, limiting its applicability to scenarios where this independence assumption holds.

A key distinction lies in the quantification mechanisms: LDL employs variable-dependent quantification, whereas our framework utilises sort-dependent quantification, which furthermore reduces complexity by providing all the (sort-dependent) information necessary for quantification by defining the interpretation function.

Finally, LDL does not require associativity of the logical operators, enabling the use of non-associative and/or operators characteristic of Signal Temporal Logic (STL). This non-associativity requirement constitutes another reason why STL does not naturally fit into our 2Mon-BLat model and should instead be modelled as approximating a 2Mon-BLat structure, a topic to be addressed in subsequent work. 

Actually, it is not absolutely necessary to require associativity, as we did in Def.~\ref{def:2Mon-BLat} of our 2Mon-BLat. However unifying nearly all semantics under this algebraic umbrella and seeing STL (and potentially others) as approximations of the 2Mon-BLat algebra, gives us a richer structure of our semantics and reveals otherwise suppressed algebraic laws.

\paragraph{LDL quantifiers} \label{par:ldl-quantifiers}
Recall Defs.~\ref{def:aggregated-2Mon-BLat} and~\ref{def:2Mon-BLat-cat} and this discussion following the latter: an aggregated $\mathbf{2Mon\text{-}BLat}$ provides maps $\mathrm{aggr}_X^\forall,\mathrm{aggr}_X^\exists:\mathcal L^X\to \mathcal L$, where additionally, a structure on $X$ may be used. In LDL, the context $Q$ maps a bound variable $x{:}s$ to a random variable $Q[x]$ on $\mathcal I(s)$ with density $p_X$. For a measurable $g: \mathcal I(s)\to \mathbb R$ write
\[
  x_{\min}:=\operatorname*{arg\,min}_{a\in\mathcal{I}(s)} g(a),\qquad
  x_{\max}:=\operatorname*{arg\,max}_{a\in\mathcal{I}(s)} g(a).
\]
For a random variable $X$ with density $p_X$, LDL defines in \citep[p.~9]{slusarzLogicDifferentiableLogics2023} (and we adopt) the quantifier aggregations exactly as
\[
  \mathrm E_{\min}[g(X)] := \lim_{\gamma\to 0} \int_{x\in B_\gamma^{x_{\min}}} p_X(x)\,g(x)\,dx,\qquad
  \mathrm E_{\max}[g(X)] := \lim_{\gamma\to 0} \int_{x\in B_\gamma^{x_{\max}}} p_X(x)\,g(x)\,dx.
\]
Hence we set
\[
  \mathrm{aggr}_{\mathcal I(s),Q[x]}^{\forall}(g) := \mathrm E_{\min}\bigl[g\bigl(Q[x]\bigr)\bigr],\qquad
  \mathrm{aggr}_{\mathcal I(s),Q[x]}^{\exists}(g) := \mathrm E_{\max}\bigl[g\bigl(Q[x]\bigr)\bigr].
\]
Consequently, for any $F$ and valuation $\nu$, this is coherent with the definition of the quantifiers from Table~\ref{tab:semantics}:
\[
  \sem{\forall x{:}s\,F}_{\nu} = \mathrm{aggr}_{\mathcal I(s),Q[x]}^{\forall}\bigl(\lambda a. \sem{F}_{\nu[x\mapsto a]}\bigr),\qquad
  \sem{\exists x{:}s\,F}_{\nu} = \mathrm{aggr}_{\mathcal I(s),Q[x]}^{\exists}\bigl(\lambda a. \sem{F}_{\nu[x\mapsto a]}\bigr).
\]

\paragraph{Variable–dependent vs. sort–dependent quantification.}
In our probabilistic semantics (Sec.~\ref{subsec:cont-prob-sem}), each sort $s$ comes with a fixed measure $\mu_s$ and quantification integrates with respect to $\mu_s$ (independent of the variable name). LDL instead equips each \emph{bound variable} $x{:}s$ with its own random variable $Q[x]$ (which has a density and may differ between variables of the same sort). This design choice serves two purposes:
\begin{itemize}
  \item It allows \emph{simultaneous} use of different distributions on the same sort in one formula, e.g. $x{:}s$ drawn from a data distribution $Q[x]$ and $y{:}s$ drawn from an adversarial or reweighted distribution $Q[y]$.
  \item It enables \emph{contextual} or \emph{conditional} sampling: $Q[x]$ can depend on the surrounding bound context $\Gamma$ or external parameters, effectively acting as a Markov kernel $\rho_x(\cdot\mid\Gamma)$.
\end{itemize}

\noindent
By contrast, our sort–based variant fixes a measure $\mu_s$ and uses the same aggregator for every variable $x{:}s$: for the chosen aggregation operators (product/infimum/LTN, etc.). This is much simpler, since it avoids the complexity of managing multiple distributions for different variables of the same sort. Moreover, it seems natural to associate the carrier sets of an interpretation with probability distributions. However, NeSyCat still fulfils all 3 goals stated in Sec. 1 of \cite{slusarzLogicDifferentiableLogics2023} and is even more modular and well-separated: 

\begin{enumerate}
  \item NeSyCat formally covers a sufficient fragment of first-order logic to express key properties in machine learning verification, such as robustness.
  \item The syntax, semantics and pragmatics of NeSyCat are well-separated. And even more so than in LDL, since we clearly disentangle the concepts of signature, syntax as context-free grammar, interpretation of a NeSy framework, and Tarskian semantics of a NeSy system in an 2Mon-BLat algebra.
  \item NeSyCat has a unified, general syntax and semantics able to express multiple different DLs and is modular on the choice of DL, by introducing the modularity of choosing the monad, the truth space, and the 2Mon-BLat algebra by choosing a NeSy framework.
\end{enumerate}

\paragraph{NeSyCat encoding of the LDL robustness example.}
\noindent
Following \cite[Ex.~3.1]{slusarzLogicDifferentiableLogics2023}, we consider an image-classification setting where inputs are $28\times 28$ grayscale images flattened to vectors in $\mathbb R^{784}$. Hence our input sort is $\mathrm{Vec}_{784}$ with $\mathcal I(\mathrm{Vec}_{784})=\mathbb R^{784}$. A classifier typically outputs a vector of class scores/logits in $\mathbb R^{m}$ (e.g., $m=10$ for MNIST). Accordingly, we take the network as a function
\[
  f: \mathbb R^{784} \longrightarrow \mathbb R^{m} \qquad (m\ge 1 \text{ fixed}).
\]
The robustness property we encode states $\ell_{\infty}$-robustness around a reference image $\hat x$: whenever an input $x$ lies in the $\ell_{\infty}$-ball $B_{\infty}(\hat x,\varepsilon)$, the output $f(x)$ must lie in the $\ell_{\infty}$-ball $B_{\infty}(f(\hat x),\delta)$. This is exactly what our predicate $\mathrm{bounded}(\cdot,\cdot,\cdot)$ and the formula $\Phi_{\varepsilon,\delta,\hat x}$ capture below.
Regarding the first goal mentioned above, we can encode the LDL robustness example as follows: Fix sorts $\mathrm{Vec}_{784},\,\mathrm{Index}_{784},\,\mathrm{Vec}_{m},\,\mathrm{Index}_{m},\,\mathrm{Real}$ with interpretations $\mathcal I(\mathrm{Vec}_{784})=\mathbb R^{784}$, $\mathcal I(\mathrm{Index}_{784})=\{0,\dots,783\}$, $\mathcal I(\mathrm{Vec}_{m})=\mathbb R^{m}$, $\mathcal I(\mathrm{Index}_{m})=\{0,\dots,m-1\}$, $\mathcal I(\mathrm{Real})=\mathbb R$. Let normal function symbols

\begin{align*}
  & \mathrm{at}_{784}:\mathrm{Vec}_{784}\times\mathrm{Index}_{784}\to\mathrm{Real},\quad
  && \mathrm{at}_{m}:\mathrm{Vec}_{m}\times\mathrm{Index}_{m}\to\mathrm{Real},\quad \\
  & \mathrm{abs}:\mathrm{Real}\to\mathrm{Real},\quad
  && \mathrm{leq}:\mathrm{Real}\times\mathrm{Real}\to\Omega,
\end{align*}
be given, where $\mathrm{at}_{784}(v,i)$ and $\mathrm{at}_{m}(w,j)$ read components, $\mathrm{abs}$ is absolute value, and $\mathrm{leq}(a,b)$ is the crisp predicate $[a\le b]$. Let $f{:}\mathrm{Vec}_{784}\to\mathrm{Vec}_{m}$ be a computational function symbol (the network). Define the derived predicates
\[
  \mathrm{bounded\_in}(v,u,a)
   := \forall i{:}\mathrm{Index}_{784}\;\mathrm{leq}\bigl(\mathrm{abs}(\mathrm{at}_{784}(v,i)-\mathrm{at}_{784}(u,i)),\,a\bigr),
\]
\[
  \mathrm{bounded\_out}(w,z,a)
   := \forall j{:}\mathrm{Index}_{m}\;\mathrm{leq}\bigl(\mathrm{abs}(\mathrm{at}_{m}(w,j)-\mathrm{at}_{m}(z,j)),\,a\bigr).
\]
For parameters $\varepsilon,\delta\in\mathbb R$ and a fixed input $\hat x\in\mathbb R^{784}$, the LDL robustness property of $f$ is encoded in NeSyCat as the family of formulas
\[
  \Phi_{\varepsilon,\delta,\hat x}
   := \forall x{:}\mathrm{Vec}_{784}\;\Bigl(\mathrm{bounded\_in}(x,\hat x,\varepsilon) \;\rightarrow\; \mathrm{bounded\_out}\bigl(f(x),f(\hat x),\delta\bigr)\Bigr).
\]
Under the LDL quantifier aggregator (par.~\ref{par:ldl-quantifiers}) for the bound variable $x$, its semantics is
\[
\begin{aligned}
  \sem{\Phi_{\varepsilon,\delta,\hat x}}_{\nu}
   &= \mathrm{aggr}^{\forall}_{\mathcal I(\mathrm{Vec}_{784}),\,Q[x]}
         \bigl( \lambda a. \sem{\mathrm{bounded\_in}(x,\hat x,\varepsilon) \rightarrow \mathrm{bounded\_out}(f(x),f(\hat x),\delta)}_{\nu[x\mapsto a]} \bigr) \\
   &= \mathrm E_{\min}\bigl[g(Q[x])\bigr]
\end{aligned}
\]
where the semantics of $g(a):=\lambda a. \sem{\mathrm{bounded\_in}(x,\hat x,\varepsilon) \rightarrow \mathrm{bounded\_out}(f(x),f(\hat x),\delta)}_{\nu[x\mapsto a]}$ is given by the semantics of the implication operator in a certain 2Mon-BLat. In conclusion we have modelled our example to coincide with the LDL semantics.

\subsection{Monadic Semantics and Probabilistic Programming}\label{subsec:monadic-pp}

Our use of monads for structuring neurosymbolic semantics draws on a rich tradition of monadic approaches to probabilistic programming. \cite{moggiNotionsComputationMonads1991} established monads as the uniform abstraction for computational effects; the application to probability was pioneered by \cite{giry1982categorical}, who introduced the probability monad on measurable spaces, and by \cite{jones1989probabilistic}, who developed the analogous probabilistic powerdomain in domain theory. \cite{ramsey2002stochastic} made the connection to programming languages explicit via a stochastic lambda calculus in which probability distributions form a monad with unit given by the Dirac delta and bind given by marginalisation.

On the semantic side, \cite{staton2016semantics} gave the first adequate measure-theoretic denotational semantics for a probabilistic programming metalanguage supporting higher-order functions, continuous distributions, and soft constraints (conditioning), using Kleisli morphisms of a measure monad as the central semantic objects---the same kind of compositional, Kleisli-based semantics that NeSyCat employs. \cite{staton2017commutative} further showed that the semantics based on s-finite kernels is commutative, meaning that independent computations can be reordered freely; this property is directly relevant to NeSyCat, where independent parts of a neurosymbolic system may be evaluated in any order. A tutorial exposition of the ``programs as measures'' viewpoint is given in \citep{staton2020programs}.

On the categorical side, \cite{fritz_synthetic_2020} developed the theory of Markov categories, showing that the Kleisli category of any affine commutative monad (including the Giry monad) is a Markov category. In particular, $\Kl{\mathcal{G}} \cong \Stoch$ is a Markov category, and fundamental results of probability theory---conditional independence, disintegration, sufficient statistics---can be stated and proved at this abstract level, applying uniformly to all NeSyCat instantiations. \cite{fritz_perrone2018bimonoidal} complements this with a bimonoidal treatment of joints, marginals, and independence for probability monads. \cite{cho_disintegration_2019} developed disintegration and Bayesian inversion categorically via string diagrams, and \cite{jacobs2018probability} systematically connected probability monads to commutative effectuses---a categorical framework for probabilistic reasoning supporting predicates, conditioning, and Bayesian inversion.

A central limitation of the category $\Meas$ is that it is not Cartesian closed, preventing a straightforward higher-order probabilistic semantics. \cite{heunenConvenientCategoryHigherOrder2017} resolved this by introducing quasi-Borel spaces ($\mathbf{QBS}$), which carry a probability monad $P$ that, when restricted to standard Borel spaces, agrees with the Giry monad. Since NeSyCat currently works within standard Borel spaces (or measure spaces) for its probabilistic frameworks, our semantics embeds conservatively into the $\mathbf{QBS}$ setting, which would become necessary for higher-order extensions (e.g., neural networks as first-class probabilistic objects). \cite{vakar2019domain} extended this to quasi-Borel predomains supporting recursion and conditioning, and \cite{scibior2018denotational} used the $\mathbf{QBS}$ semantics to formally validate inference algorithms in the monad-bayes library \citep{scibior2015}.

In summary, NeSyCat sits at a previously unexplored intersection: it applies the same categorical and monadic foundations developed for probabilistic programming (Giry monad, Kleisli categories, Markov categories) to the problem of unifying neurosymbolic systems, rather than to probabilistic languages per se. This connection suggests that results from categorical probability---such as Fritz's abstract conditional independence or Cho--Jacobs disintegration---may be directly applicable to structured reasoning about neurosymbolic systems.

\subsection{Categorical Logic and Categories for AI}
\label{sec:cat-logic}

The use of category-theoretic structure in logic long predates current neurosymbolic and machine-learning applications. \cite{makkaiReyesFirstOrderCategorical1977} provide a classical account of first-order categorical logic, and Johnstone's later exposition \cite{johnstoneSketchesElephantTopos2002} develops the same tradition in a form that has informed our NeSyCat semantics. NeSyCat does not aim to replace this line of work; rather, it keeps the surface syntax of first-order logic while extending the semantic side through monads and algebraically parametrised truth-value aggregation through the 2Mon-BLat.

Within categories for AI, categorical NLP provides another important point of contact. In standard DisCoCat, grammatical types are mapped to vector spaces and grammatical reductions to linear maps, yielding a compositional model of meaning based on string diagrams \citep{coeckeDisCoCatMathematicalFoundations2010}. Especially relevant for our purposes is higher-order DisCoCat \citep{toumiHigherOrderDisCoCatPeirceLambekMontague2025}, which reconnects categorical compositional semantics with Montague-style first-order logic by treating word meanings as diagram-valued higher-order functions and recovering negation and quantification in a string-diagrammatic setting. This is close in spirit to NeSyCat because it shows how categorical structure can recover genuinely logical operations, although its primary focus is natural-language semantics rather than monadic semantics for neurosymbolic systems.

A more recent line applies category theory directly to AI and machine learning. \cite{fongBackpropFunctor2019} show that supervised learning and backpropagation admit a compositional functorial formulation, and Spivak's broader applied-category-theory perspective \citep{Spivak2014} helped make such compositional methods accessible well beyond pure category theory. This is close in spirit to NeSyCat, since both emphasise compositional structure and parametrisation, but the focus is different: their work formalises learning dynamics, whereas our concern here is the interpretation of logical formulas in monadic and algebraic neurosymbolic semantics.

Likewise, \cite{gavranovicPositionCategoricalDeep2024} argue that categorical deep learning offers a unifying algebraic account of neural network architectures and parametric maps. This complements rather than competes with NeSyCat: they focus on architecture and learnable composition, whereas we focus on signatures, interpretations, and semantic evaluation. In this sense, NeSyCat can be read as a bridge between categorical logic and categories for AI, importing categorical and monadic structure into neurosymbolic semantics rather than only into architecture design.

\section{Implementation of NeSy Frameworks}
\label{sec:implementation}

We have implemented NeSy frameworks in 2 \texttt{NeSyCat} libraries, which are available at \url{https://github.com/NeSyCat/PyTorch} for the PyTorch version and at \url{https://github.com/NeSyCat/HaskTorch} for the HaskTorch version. The library comes with predefined  NeSy frameworks, but also allows users to define their own frameworks, by providing a monad, a truth value space and a double monoid bounded lattice. The library also supports the definition of interpretations. A parser transforms formulas into an abstract syntax tree. There is a function implementing Tarskian semantics, i.e.\ the evaluation of formulas in a given interpretation. The library also includes a module for NeSy transformations, allowing users to apply transformations between different NeSy frameworks.
For simplicity, we have not implemented a sort system, which means that the implementation is untyped.\footnote{In Haskell, we could use type families and heterogeneous lists to implement sorted interpretations.}
Also, the integration of neural networks into interpretations has not been implemented yet, but we plan to do so in the future.

The rich Haskell type system allows us to express the semantics of NeSy frameworks in a type-safe way. The Python implementation takes this as a role model, but less type-safe, because Python is dynamically typed. We start with describing the Haskell implementation. We first introduce a type class for double monoid bounded lattices. Note that for simplicity, we have not implemented the lattice structure, because it is not used in the semantics.\footnote{In the future, this needs to be added. LDL quantifiers use the lattice structure. Also, to use ULLER for neuro-symbolic learning and reasoning, we need the lattice structure, because we need to be able maximise and minimise over the truth values as described in \citep{vankriekenULLERUnifiedLanguage2024}.}

\begin{minted}{haskell}
class TwoMonBLat a where
  top, bot :: a
  neg :: a -> a
  conj, disj, implies :: a -> a -> a
\end{minted}

\noindent
Note that \minth{conj} and \minth{disj} are typed as plain binary operations \minth{a -> a -> a}, without an explicit \minth{Monoid} constraint. This is intentional: the monoid laws (associativity and neutral elements, cf.\ Def.~\ref{def:2Mon-BLat} and Remark~\ref{rem:uller-comparison}) are mathematical requirements on the algebra not enforced by Haskell's type system. The implementation trusts that concrete instances satisfy these laws by construction -- the predefined algebras in our library do so, and users designing custom algebras should verify the laws mathematically. Enforcing algebraic laws at the type level (e.g., via property-based testing or dependent types) is left for future work.

Based on this, we define a type class for aggregated double monoid bounded lattices:
\begin{minted}{haskell}
class TwoMonBLat a => Aggr2MonBLat s a where
  -- for a structure on b and a predicate on b, aggregate truth values a 
  aggrE, aggrA :: s b -> (b -> a) -> a
\end{minted}

Note that aggregation takes into account the structure of our category $\category{C}$ of sets with structure, represented as \texttt{s b} here. Usually, we use finite lists, and then aggregation is just iteration of disjunction or conjunction:
\begin{minted}{haskell}
-- the mainly used Aggr2MonBLat: no additional structure (just lists) + Booleans
instance Monad t => Aggr2MonBLat [] (t Bool) where
  aggrE s f = foldr disj bot $ map f s
  aggrA s f = foldr conj top $ map f s
\end{minted}

For infinite aggregation, we need \texttt{s b} to be some (probability) measure. For example, for implementing the Giry monad, we can use the \texttt{Integrator} monad \citep{tobin2018embedded} from the \texttt{monad-bayes} package, which represents measure spaces in a very faithful way. the Giry monad can be defined as a submonad. Values of this monad can be constructed from values of the \texttt{Integrator} monad by normalising them:

\begin{minted}{haskell}
newtype Giry a = Giry { runGiry :: Integrator a }
fromIntegrator :: Integrator a -> Giry a
fromIntegrator m = Giry $ normalize $ lift m
\end{minted}

Then, aggregation can be defined as in equation~\ref{eq:prob-exists-quant} and \ref{eq:prob-forall-quant}. Note that we need a double integral (\texttt{runIntegrator}) here, because the integrand is itself an element of ${\mathcal T} Bool$, and we need to employ the isomorphism $Giry\ Bool \cong [0,1]$ for the \texttt{Giry} monad. 

\begin{minted}{haskell}
instance Aggr2MonBLat Integrator (Giry Bool) where
  aggrA meas f = 
    Giry $ integrator $ \meas_fun ->
        exp $ runIntegrator (runIntegrator (log . meas_fun) . runGiry . f) meas
  aggrE meas f =
    neg (aggrA meas (neg . f))
\end{minted}

However, the \texttt{Integrator} monad is not very efficient and does not provide sampling. Therefore, we also provide an instance for the \texttt{SamplerIO} monad from the same package, which implements the Giry monad using sampling. Here, using Monte Carlo integration, we can compute the infinite aggregation up to any given precision by increasing the number of samples.  Note that first a value in the domain is sampled, then the computational predicate is applied to the value, resulting in an element of ${\mathcal T} Bool$, and finally a Boolean value is sampled from that. In the case of universal quantification, we aggregate the sampled Boolean values using logical conjunction. The probability of obtaining \texttt{True} in all cases is the product of the probabilities of obtaining \texttt{True} for each case. This finite product approximates the infinite product (expressed using $\exp$ and $\ln$) in equation~\ref{eq:prob-exists-quant}. A similar reasoning holds for existential quantification.

\begin{minted}{haskell}
-- Expectation-style aggregation over a distribution
-- Here we approximate via Monte Carlo with no_samples samples
no_samples = 1000
aggregation :: Monad m => ([a] -> a) -> m b -> (b -> m a) -> m a
aggregation connective dist f = do
  samples <- sequence (replicate no_samples dist)
  vals    <- mapM f samples
  return (connective vals)
instance Aggr2MonBLat SamplerIO (SamplerIO Bool) where
  aggrE = aggregation or
  aggrA = aggregation and
\end{minted}
 
Based on this type class and the predefined type constructor class for monads, we define a type class for NeSy frameworks, as well as various instances. The instances do not to define any methods, because these  have already been defined in the superclasses. 
\begin{minted}{haskell}
class (Monad t, Aggr2MonBLat s (t omega)) => NeSyFramework t s omega 
-- Classical instance using identity monad, Omega is Bool
instance NeSyFramework Identity [] Bool 
-- Distribution instance, Omega is Bool
instance Num prob => NeSyFramework (Dist.T prob) [] Bool 
-- Non-empty powerset instance (non-determinism)
instance NeSyFramework SM.Set [] Bool 
-- Giry monad instance, using SamplerIO for both aggregation and the monad
instance NeSyFramework SamplerIO SamplerIO Bool
-- Giry monad instance, using Integrator for aggregation and Giry for the monad
instance NeSyFramework Giry Integrator Bool
\end{minted}

Next, we show the type definition for NeSy interpretations, which is parameterised by the monad \texttt{t}, the structure \texttt{s} of the category $\category{C}$, the truth value space \texttt{omega} and a type \texttt{a} for the universe of discourse. 

\begin{minted}{haskell}
  data Interpretation t s omega a =
  Interpretation { universe :: s a,
                   funcs :: Map.Map Ident ([a] -> a),
                   mFuncs :: Map.Map Ident ([a] -> t a),
                   preds :: Map.Map Ident ([a] -> omega),
                   mPreds :: Map.Map Ident ([a] -> t omega) }
\end{minted}

\noindent
We refrain from showing the same number of details of the Python implementation here, because it is much more verbose than the Haskell implementation.
The Python implementation follows a structure similar to that of the Haskell implementation, but uses Python's dynamic typing and built-in data structures, building on the \mintp{pymonad} package. Here is the Python code snippet for the NeSy framework type class:
\begin{minted}{python}
class NeSyFramework[_T: ParametrizedMonad, _O, _R: Aggr2MonBLat]:
    """
    Class to represent a monadic NeSy framework consisting of a monad (T),
    a set Omega acting as truth basis (O),
    and an aggregated double monoid bounded lattice (R).
    This class ensures the following runtime constraint which is not
    representable in Pythons type system:
    - _R: Aggr2MonBLat[_T[_O]]
    """
    _monad: Type[_T]
    _logic: _R
    ...
\end{minted}

As in the case of Haskell, also for Python, we provide two versions of the Giry monad, one based on integration and one based on sampling. Again, the integration version is more faithful to the mathematical definition of the Giry monad, but the sampling version is more efficient. The sampling version is based on the \texttt{numpy} library. An idea for a more abstract version would be to use the \texttt{PyMC} library, such that Bayesian inference becomes possible. However, a principal problem arises. For \texttt{PyMC}, only an internal monad can be defined, providing monadic lifting for functions depending on \texttt{TensorVariable}s. However, in order to obtain a monad in Python, we would need to lift functions depending on normal Python variables. This is a topic for future work. Once the code base of the original ULLER paper \citep{vankriekenULLERUnifiedLanguage2024} is available, our code base could be used to enhance the ULLER implementation with our modular abstractions.

\section{Conclusion}
\label{sec:conclusion}

The ULLER language \cite{vankriekenULLERUnifiedLanguage2024} aims at a unifying foundation for neurosymbolic systems. In this paper, we have developed a new semantics for ULLER, based on Moggi's formalisation of computational effects as monads. In contrast to the original semantics, our semantics is truly modular. It is based on a notion of NeSy framework that provides the structure of the computational effects and the space of truth values. This modularity will enable a cleaner, more modular implementation of ULLER in Python and an easier integration of new frameworks, as well as a structured method of translating between different frameworks. First implementations of our NeSyCat framework are available in Python and Haskell, see \url{https://github.com/cherryfunk/NeSyCat}.
Note that the distributional and probabilistic NeSy frameworks will make parameterized interpretations in the sense of \cite{vankriekenULLERUnifiedLanguage2024} differentiable and that this can be integrated by using a category of differentiable manifolds and functions. However, a detailed investigation of differentiability, computability, and computational complexity of our approach is left for future work in a subsequent paper dedicated to NeSyCat's implementation.

Our work suggests an analogy between ULLER's formulas $[x:=m(T_1,\dots,T_n)]F$ and Haskell's do-notation $\textbf{do } x\leftarrow m(T_1,\dots,T_n); F$ for computational effects. Inspired by this analogy, one could extend ULLER to a language with computational terms and formulas that may be nested.  

\section*{Acknowledgments}
We thank Rick Adamy for the idea to use monads in the context of ULLER, Kai-Uwe K\"uhnberger for initiating our collaboration, Emile van Krieken for useful discussions and Bj\"orn Gehrke for helping us with the implementation in Python.
From Daniel: Big thanks to my father and family Bircks for providing me the space-time to work on this paper, Alice for reminding me to eat, and to my mother and Leilani H. Gilpin for mental support.

\bibliographystyle{abbrvnat}
\bibliography{daniel_zotero,additional_citations}

\begin{thebibliography}{44}
\providecommand{\natexlab}[1]{#1}
\providecommand{\url}[1]{\texttt{#1}}
\expandafter\ifx\csname urlstyle\endcsname\relax
  \providecommand{\doi}[1]{doi: #1}\else
  \providecommand{\doi}{doi: \begingroup \urlstyle{rm}\Url}\fi

\bibitem[Ad\'{a}mek et~al.(1990)Ad\'{a}mek, Herrlich, and Strecker]{AHS}
J.~Ad\'{a}mek, H.~Herrlich, and G.~Strecker.
\newblock \emph{Abstract and Concrete Categories}.
\newblock Wiley, New York, 1990.

\bibitem[Awodey(2010)]{awodeyCategoryTheory2010}
S.~Awodey.
\newblock \emph{Category Theory}.
\newblock Number~52 in Oxford Logic Guides. Oxford Univ. Press, Oxford, 2. ed
  edition, 2010.
\newblock ISBN 978-0-19-958736-0 978-0-19-923718-0.

\bibitem[Badreddine and Spranger(2021)]{DBLP:conf/nesy/BadreddineS21}
S.~Badreddine and M.~Spranger.
\newblock Extending real logic with aggregate functions.
\newblock In A.~S. d'Avila Garcez and E.~Jim{\'{e}}nez{-}Ruiz, editors,
  \emph{Proceedings of the 15th International Workshop on Neural-Symbolic
  Learning and Reasoning as part of the 1st International Joint Conference on
  Learning {\&} Reasoning {(IJCLR} 2021), Virtual conference, October 25-27,
  2021}, volume 2986 of \emph{{CEUR} Workshop Proceedings}, pages 115--125.
  CEUR-WS.org, 2021.
\newblock URL \url{https://ceur-ws.org/Vol-2986/paper9.pdf}.

\bibitem[Badreddine et~al.(2022)Badreddine, d'Avila Garcez, Serafini, and
  Spranger]{DBLP:journals/ai/BadreddineGSS22}
S.~Badreddine, A.~S. d'Avila Garcez, L.~Serafini, and M.~Spranger.
\newblock Logic tensor networks.
\newblock \emph{Artif. Intell.}, 303:\penalty0 103649, 2022.
\newblock \doi{10.1016/J.ARTINT.2021.103649}.
\newblock URL \url{https://doi.org/10.1016/j.artint.2021.103649}.

\bibitem[Cho and Jacobs(2019)]{cho_disintegration_2019}
K.~Cho and B.~Jacobs.
\newblock Disintegration and {Bayesian} inversion via string diagrams.
\newblock \emph{Mathematical Structures in Computer Science}, 29\penalty0
  (7):\penalty0 938--971, 2019.
\newblock \doi{10.1017/S0960129518000488}.

\bibitem[Coecke et~al.(2010)Coecke, Sadrzadeh, and
  Clark]{coeckeDisCoCatMathematicalFoundations2010}
B.~Coecke, M.~Sadrzadeh, and S.~Clark.
\newblock ({{DisCoCat}}) {{Mathematical Foundations}} for a {{Compositional
  Distributional Model}} of {{Meaning}}, Mar. 2010.

\bibitem[Fong et~al.(2019)Fong, Spivak, and
  Tuy{\'e}ras]{fongBackpropFunctor2019}
B.~Fong, D.~I. Spivak, and R.~Tuy{\'e}ras.
\newblock Backprop as functor: A compositional perspective on supervised
  learning.
\newblock In \emph{2019 34th Annual ACM/IEEE Symposium on Logic in Computer
  Science (LICS)}, pages 1--13. IEEE, 2019.
\newblock \doi{10.1109/LICS.2019.8785665}.

\bibitem[Fritz(2020)]{fritz_synthetic_2020}
T.~Fritz.
\newblock A synthetic approach to {Markov} kernels, conditional independence
  and theorems on sufficient statistics.
\newblock \emph{Advances in Mathematics}, 370:\penalty0 107239, 2020.
\newblock \doi{10.1016/j.aim.2020.107239}.

\bibitem[Fritz and Perrone(2018)]{fritz_perrone2018bimonoidal}
T.~Fritz and P.~Perrone.
\newblock Bimonoidal structure of probability monads.
\newblock \emph{Electronic Notes in Theoretical Computer Science},
  341:\penalty0 121--149, 2018.
\newblock \doi{10.1016/j.entcs.2018.11.007}.

\bibitem[Gavranovi{\'c} et~al.(2024)Gavranovi{\'c}, Lessard, Dudzik, von Glehn,
  Ara{\'u}jo, and Veli{\v c}kovi{\'c}]{gavranovicPositionCategoricalDeep2024}
B.~Gavranovi{\'c}, P.~Lessard, A.~Dudzik, T.~von Glehn, J.~G.~M. Ara{\'u}jo,
  and P.~Veli{\v c}kovi{\'c}.
\newblock Position: {{Categorical Deep Learning}} is an {{Algebraic Theory}} of
  {{All Architectures}}, June 2024.

\bibitem[Girard(1995)]{girardLinearLogicIts1995}
J.-Y. Girard.
\newblock Linear {{Logic}}: Its syntax and semantics.
\newblock In J.-Y. Girard, Y.~Lafont, and L.~Regnier, editors, \emph{Advances
  in {{Linear Logic}}}, pages 1--42. Cambridge University Press, 1 edition,
  June 1995.
\newblock ISBN 978-0-521-55961-4 978-0-511-62915-0.
\newblock \doi{10.1017/CBO9780511629150.002}.

\bibitem[Giry(1982)]{giry1982categorical}
M.~Giry.
\newblock A categorical approach to probability theory.
\newblock In \emph{Categorical Aspects of Topology and Analysis}, volume 915 of
  \emph{Lecture Notes in Mathematics}, pages 68--85, Berlin, Heidelberg, 1982.
  Springer.

\bibitem[H{\'a}jek(1998)]{hajekMetamathematicsFuzzyLogic1998}
P.~H{\'a}jek.
\newblock \emph{Metamathematics of {{Fuzzy Logic}}}.
\newblock Trends in {{Logic}}. Springer Netherlands, Dordrecht, 1998.
\newblock ISBN 978-1-4020-0370-7 978-94-011-5300-3.
\newblock \doi{10.1007/978-94-011-5300-3}.

\bibitem[Harel et~al.(2001)Harel, Kozen, and Tiuryn]{harel2001dynamic}
D.~Harel, D.~Kozen, and J.~Tiuryn.
\newblock Dynamic logic.
\newblock \emph{ACM SIGACT News}, 32\penalty0 (1):\penalty0 66--69, 2001.

\bibitem[Heunen et~al.(2017)Heunen, Kammar, Staton, and
  Yang]{heunenConvenientCategoryHigherOrder2017}
C.~Heunen, O.~Kammar, S.~Staton, and H.~Yang.
\newblock A {{Convenient Category}} for {{Higher-Order Probability Theory}}.
\newblock In \emph{2017 32nd {{Annual ACM}}/{{IEEE Symposium}} on {{Logic}} in
  {{Computer Science}} ({{LICS}})}, pages 1--12, June 2017.
\newblock \doi{10.1109/LICS.2017.8005137}.

\bibitem[Hájek(1998)]{hajek_metamathematics_1998}
P.~Hájek.
\newblock \emph{Metamathematics of {Fuzzy} {Logic}}.
\newblock Trends in {Logic}. Springer Netherlands, Dordrecht, 1998.
\newblock ISBN 978-1-4020-0370-7 978-94-011-5300-3.
\newblock \doi{10.1007/978-94-011-5300-3}.
\newblock URL \url{http://link.springer.com/10.1007/978-94-011-5300-3}.

\bibitem[Jacobs(2018)]{jacobs2018probability}
B.~Jacobs.
\newblock From probability monads to commutative effectuses.
\newblock \emph{Journal of Logical and Algebraic Methods in Programming},
  94:\penalty0 200--237, 2018.
\newblock \doi{10.1016/j.jlamp.2016.11.006}.

\bibitem[Johnstone(2002)]{johnstoneSketchesElephantTopos2002}
P.~T. Johnstone.
\newblock \emph{Sketches of an {{Elephant A Topos Theory Compendium}}}.
\newblock Oxford University PressOxford, Sept. 2002.
\newblock ISBN 978-0-19-851598-2 978-1-383-02287-2.
\newblock \doi{10.1093/oso/9780198515982.001.0001}.

\bibitem[Jones and Plotkin(1989)]{jones1989probabilistic}
C.~Jones and G.~Plotkin.
\newblock A probabilistic powerdomain of evaluations.
\newblock In \emph{Proceedings of the Fourth Annual Symposium on Logic in
  Computer Science (LICS)}, pages 186--195, 1989.
\newblock \doi{10.1109/LICS.1989.39173}.

\bibitem[Mac~Lane(1978)]{maclaneCategoriesWorkingMathematician1978}
S.~Mac~Lane.
\newblock \emph{Categories for the {{Working Mathematician}}}, volume~5 of
  \emph{Graduate {{Texts}} in {{Mathematics}}}.
\newblock Springer New York, New York, NY, 1978.
\newblock ISBN 978-1-4419-3123-8 978-1-4757-4721-8.
\newblock \doi{10.1007/978-1-4757-4721-8}.

\bibitem[Makkai and Reyes(1977)]{makkaiReyesFirstOrderCategorical1977}
M.~Makkai and G.~E. Reyes.
\newblock \emph{First Order Categorical Logic: Model-Theoretical Methods in the
  Theory of Topoi and Related Categories}, volume 611 of \emph{Lecture Notes in
  Mathematics}.
\newblock Springer, Berlin, Heidelberg, 1977.
\newblock \doi{10.1007/BFb0066201}.

\bibitem[Manhaeve et~al.(2021)Manhaeve, Dumancic, Kimmig, Demeester, and
  Raedt]{DBLP:journals/ai/ManhaeveDKDR21}
R.~Manhaeve, S.~Dumancic, A.~Kimmig, T.~Demeester, and L.~D. Raedt.
\newblock Neural probabilistic logic programming in {DeepProbLog}.
\newblock \emph{Artif. Intell.}, 298:\penalty0 103504, 2021.
\newblock \doi{10.1016/J.ARTINT.2021.103504}.
\newblock URL \url{https://doi.org/10.1016/j.artint.2021.103504}.

\bibitem[Moggi(1991)]{moggiNotionsComputationMonads1991}
E.~Moggi.
\newblock Notions of computation and monads.
\newblock \emph{Information and Computation}, 93\penalty0 (1):\penalty0 55--92,
  July 1991.
\newblock ISSN 08905401.
\newblock \doi{10.1016/0890-5401(91)90052-4}.

\bibitem[Morettin et~al.(2017)Morettin, Passerini, and
  Sebastiani]{morettinEfficientWeightedModel2017}
P.~Morettin, A.~Passerini, and R.~Sebastiani.
\newblock Efficient {{Weighted Model Integration}} via {{SMT-Based Predicate
  Abstraction}}.
\newblock In \emph{Proceedings of the {{Twenty-Sixth International Joint
  Conference}} on {{Artificial Intelligence}}}, pages 720--728, Melbourne,
  Australia, Aug. 2017. International Joint Conferences on Artificial
  Intelligence Organization.
\newblock \doi{10.24963/ijcai.2017/100}.

\bibitem[Mossakowski et~al.(2010)Mossakowski, Schr{\"{o}}der, and
  Goncharov]{DBLP:journals/fac/MossakowskiSG10}
T.~Mossakowski, L.~Schr{\"{o}}der, and S.~Goncharov.
\newblock A generic complete dynamic logic for reasoning about purity and
  effects.
\newblock \emph{Formal Aspects Comput.}, 22\penalty0 (3-4):\penalty0 363--384,
  2010.
\newblock \doi{10.1007/S00165-010-0153-4}.
\newblock URL \url{https://doi.org/10.1007/s00165-010-0153-4}.

\bibitem[Priest(2008)]{priest2008introduction}
G.~Priest.
\newblock \emph{An introduction to non-classical logic: From if to is}.
\newblock Cambridge University Press, 2008.

\bibitem[Ramsey and Pfeffer(2002)]{ramsey2002stochastic}
N.~Ramsey and A.~Pfeffer.
\newblock Stochastic lambda calculus and monads of probability distributions.
\newblock In \emph{Proceedings of the 29th ACM SIGPLAN-SIGACT Symposium on
  Principles of Programming Languages (POPL)}, pages 154--165, 2002.
\newblock \doi{10.1145/503272.503288}.

\bibitem[Schellhorn and Mossakowski(2025)]{pmlr-v284-schellhorn25a}
D.~R. Schellhorn and T.~Mossakowski.
\newblock muller: A modular monad-based semantics of the neurosymbolic uller
  framework.
\newblock In L.~H.~Gilpin, E.~Giunchiglia, P.~Hitzler, and E.~van Krieken,
  editors, \emph{Proceedings of The 19th International Conference on
  Neurosymbolic Learning and Reasoning}, volume 284 of \emph{Proceedings of
  Machine Learning Research}, pages 494--518. PMLR, 08--10 Sep 2025.
\newblock URL \url{https://proceedings.mlr.press/v284/schellhorn25a.html}.

\bibitem[{\'S}cibior et~al.(2015){\'S}cibior, Ghahramani, and
  Gordon]{scibior2015}
A.~{\'S}cibior, Z.~Ghahramani, and A.~D. Gordon.
\newblock Practical probabilistic programming with monads.
\newblock In \emph{Proceedings of the 2015 ACM SIGPLAN Symposium on Haskell},
  pages 165--176, 2015.

\bibitem[{{\'S}cibior} et~al.(2018){{\'S}cibior}, Kammar, Vakar, Staton, Yang,
  Cai, Ostermann, Moss, Heunen, and Ghahramani]{scibior2018denotational}
A.~{{\'S}cibior}, O.~Kammar, M.~Vakar, S.~Staton, H.~Yang, Y.~Cai,
  K.~Ostermann, S.~K. Moss, C.~Heunen, and Z.~Ghahramani.
\newblock Denotational validation of higher-order {Bayesian} inference.
\newblock volume~2, 2018.
\newblock \doi{10.1145/3158148}.

\bibitem[{\'S}lusarz et~al.(2023){\'S}lusarz, Komendantskaya, Daggitt, Stewart,
  and Stark]{slusarzLogicDifferentiableLogics2023}
N.~{\'S}lusarz, E.~Komendantskaya, M.~L. Daggitt, R.~Stewart, and K.~Stark.
\newblock Logic of {{Differentiable Logics}}: {{Towards}} a {{Uniform
  Semantics}} of {{DL}}, Oct. 2023.

\bibitem[Smet et~al.(2023)Smet, Martires, Manhaeve, Marra, Kimmig, and
  Raedt]{smetNeuralProbabilisticLogic2023}
L.~D. Smet, P.~Z.~D. Martires, R.~Manhaeve, G.~Marra, A.~Kimmig, and L.~D.
  Raedt.
\newblock Neural {{Probabilistic Logic Programming}} in {{Discrete-Continuous
  Domains}}, Mar. 2023.

\bibitem[Spivak(2014)]{Spivak2014}
D.~I. Spivak.
\newblock \emph{Category Theory for the Sciences}.
\newblock The MIT Press, Cambridge, Massachusetts, 2014.
\newblock ISBN 978-0-262-02813-4.

\bibitem[Staton(2017)]{staton2017commutative}
S.~Staton.
\newblock Commutative semantics for probabilistic programming.
\newblock In \emph{Programming Languages and Systems (ESOP 2017)}, volume 10201
  of \emph{LNCS}, pages 855--879. Springer, 2017.
\newblock \doi{10.1007/978-3-662-54434-1_32}.

\bibitem[Staton(2020)]{staton2020programs}
S.~Staton.
\newblock Probabilistic programs as measures.
\newblock In G.~Barthe, J.-P. Katoen, and A.~Silva, editors, \emph{Foundations
  of Probabilistic Programming}, pages 43--74. Cambridge University Press,
  2020.
\newblock \doi{10.1017/9781108770750.003}.

\bibitem[Staton et~al.(2016)Staton, Yang, Wood, Heunen, and
  Kammar]{staton2016semantics}
S.~Staton, H.~Yang, F.~Wood, C.~Heunen, and O.~Kammar.
\newblock Semantics for probabilistic programming: higher-order functions,
  continuous distributions, and soft constraints.
\newblock In \emph{Proceedings of the 31st Annual ACM/IEEE Symposium on Logic
  in Computer Science (LICS)}, pages 525--534, 2016.
\newblock \doi{10.1145/2933575.2935313}.

\bibitem[Tobin(2018)]{tobin2018embedded}
J.~Tobin.
\newblock \emph{Embedded Domain-Specific Languages for Bayesian Modelling and
  Inference}.
\newblock PhD thesis, University of Auckland, 2018.

\bibitem[Toumi and
  de~Felice(2025)]{toumiHigherOrderDisCoCatPeirceLambekMontague2025}
A.~Toumi and G.~de~Felice.
\newblock Higher-{{Order DisCoCat}} ({{Peirce-Lambek-Montague}} semantics).
\newblock \emph{Electronic Proceedings in Theoretical Computer Science},
  429:\penalty0 130--145, Sept. 2025.
\newblock ISSN 2075-2180.
\newblock \doi{10.4204/EPTCS.429.7}.

\bibitem[Vakar et~al.(2019)Vakar, Kammar, and Staton]{vakar2019domain}
M.~Vakar, O.~Kammar, and S.~Staton.
\newblock A domain theory for statistical probabilistic programming.
\newblock \emph{Proceedings of the ACM on Programming Languages}, 3\penalty0
  (POPL), 2019.
\newblock \doi{10.1145/3290349}.

\bibitem[Van~Krieken et~al.(2024)Van~Krieken, Badreddine, Manhaeve, and
  Giunchiglia]{vankriekenULLERUnifiedLanguage2024}
E.~Van~Krieken, S.~Badreddine, R.~Manhaeve, and E.~Giunchiglia.
\newblock {{ULLER}}: {{A Unified Language}} for {{Learning}} and {{Reasoning}}.
\newblock In T.~R. Besold, A.~{d'Avila Garcez}, E.~{Jimenez-Ruiz},
  R.~Confalonieri, P.~Madhyastha, and B.~Wagner, editors,
  \emph{Neural-{{Symbolic Learning}} and {{Reasoning}}}, volume 14979, pages
  219--239. Springer Nature Switzerland, Cham, 2024.
\newblock ISBN 978-3-031-71166-4 978-3-031-71167-1.
\newblock \doi{10.1007/978-3-031-71167-1_12}.

\bibitem[van Krieken et~al.(2024)van Krieken, Minervini, Ponti, and
  Vergari]{kriekenIndependenceAssumptionNeurosymbolic2024}
E.~van Krieken, P.~Minervini, E.~M. Ponti, and A.~Vergari.
\newblock On the {{Independence Assumption}} in {{Neurosymbolic Learning}},
  June 2024.

\bibitem[van Krieken et~al.(2025)van Krieken, Minervini, Ponti, and
  Vergari]{kriekenNeurosymbolicReasoningShortcuts2025}
E.~van Krieken, P.~Minervini, E.~Ponti, and A.~Vergari.
\newblock Neurosymbolic {{Reasoning Shortcuts}} under the {{Independence
  Assumption}}, July 2025.

\bibitem[Varnai and Dimarogonas(2020)]{varnaiRobustnessMetricsLearning2020}
P.~Varnai and D.~V. Dimarogonas.
\newblock On {{Robustness Metrics}} for {{Learning STL Tasks}}.
\newblock In \emph{2020 {{American Control Conference}} ({{ACC}})}, pages
  5394--5399, July 2020.
\newblock \doi{10.23919/ACC45564.2020.9147692}.

\bibitem[Walicki and Meldal(1994)]{walicki1994multialgebras}
M.~Walicki and S.~Meldal.
\newblock Multialgebras, power algebras and complete calculi of identities and
  inclusions.
\newblock In \emph{Workshop on the Specification of Abstract Data Types}, pages
  453--468. Springer, 1994.

\end{thebibliography}
\appendix
\section{Brief Introduction to Category Theory}\label{sec:cat-gen}

We recall some basic notions of category theory. See \cite{Spivak2014} for an introduction with a focus on application in database theory, \cite{awodeyCategoryTheory2010} for a logical and type-theoretic overview, and \cite{maclaneCategoriesWorkingMathematician1978} as a general reference.

\begin{definition}
	A Category \category{C} consists of
	\begin{itemize}
		\item a class \categoryobjects{C} of objects,
		\item for any two objects $A, B \in \categoryobjects{C}$ a set $\category{C}(A,B)$ of morphisms from $A$ to $B$.
			$f \in \category{C}(A,B)$ is written as $f: A \to B$ (not necessarily a function),
		\item for any object $A \in \categoryobjects{C}$ an identity morphism $\id_A \in \category{C}(A,A)$, i.e. $\id_A: A \to A$,
		\item for any $A, B, C \in \categoryobjects{C}$ a composition operation
			$\circ : \category{C}(B,C) \times \category{C}(A,B) \to \category{C}(A,C)$,
			i.e. for $f: A \to B, g: B \to C$, we have $g \circ f: A \to C$, 
	\end{itemize}
	such that
	\begin{itemize}
		\item identities are neutral elements for composition, i.e. $f \circ \id_A = f = \id_B \circ f$, and
		\item composition is associative, i.e. $(f \circ g) \circ h = f \circ (g \circ h)$.
	\end{itemize}
\end{definition}

\noindent
Examples of categories are:
	\begin{itemize}
		\item \category{Set}: Sets and functions.
		\begin{itemize}
			\item $\categoryobjects{Set} = \{M \mid M \text{ is a set}\}$
			\item $\category{Set}(A,B) = \{f \mid f: A \to B \text{ is a function}\}$
			\item Compositions and identities of functions.
		\end{itemize}
		\item \category{Meas}: measurable spaces and measurable functions.\footnote{More details at \url{https://ncatlab.org/nlab/show/measurable+space}.}
		\begin{itemize}
			\item $\categoryobjects{Meas} = \{(X, \Sigma_X) \mid \Sigma_X \text{ is a } \sigma\text{-algebra on } X\}$
			\item $\category{Meas}((X, \Sigma_X), (Y, \Sigma_Y)) = \{f: X \to Y \mid f^{-1}(B) \in \Sigma_X \text{ for all } B \in \Sigma_Y\}$
			\item Compositions and identities of measurable functions.
		\end{itemize}
		\item \category{Measr}: measure spaces and measurable functions.\footnote{More details at \url{https://ncatlab.org/nlab/show/measure+space}.}
		\begin{itemize}
			\item $\categoryobjects{Measr} = \{(X, \Sigma_X, \mu_X) \mid \mu_X \text{ is a measure on } (\Sigma_X, X)\}$
			\item $\category{Measr}((X, \Sigma_X, \mu_X), (Y, \Sigma_Y, \mu_Y)) = \{f: X \to Y \mid f \text{ measurable}\}$
			\item Compositions and identities of measurable functions.
		\end{itemize}
		\item \category{Prob}: probability spaces and measurable functions.\footnote{More details at \url{https://ncatlab.org/nlab/show/Prob}.}
		\begin{itemize}
			\item $\categoryobjects{Prob} = \{(X, \Sigma_X, \rho_X) \mid \rho_X \text{ is a probability measure on } (\Sigma_X, X)\}$
			\item $\category{Prob}((X, \Sigma_X, \rho_X), (Y, \Sigma_Y, \rho_Y)) = \{f: X \to Y \mid f \text{ is measurable}\}$
			\item Compositions and identities of measurable functions.
		\end{itemize}
	\end{itemize}

\noindent
The key distinction between $\Meas$ and $\mathbf{Measr}$ is that objects of $\Meas$ are measurable spaces $(X, \Sigma_X)$ carrying only a $\sigma$-algebra, while objects of $\mathbf{Measr}$ are measure spaces $(X, \Sigma_X, \mu_X)$ additionally equipped with a measure. This extra structure is needed when quantifiers require sort-dependent measures (see Table~\ref{tab:prob-cats}). In both categories, morphisms are measurable functions---in particular, morphisms in $\mathbf{Measr}$ are \emph{not} required to preserve the measure.

\begin{definition}\label{def:qbs}
A \textbf{quasi-Borel space} \citep{heunenConvenientCategoryHigherOrder2017} is a set $X$ together with a set $M_X \subseteq (\mathbb{R} \to X)$ of \emph{admissible random elements} satisfying: (1) all constant functions are in $M_X$; (2) $M_X$ is closed under precomposition with measurable functions $\mathbb{R} \to \mathbb{R}$; and (3) $M_X$ is closed under countable case-splitting along measurable partitions of $\mathbb{R}$. A morphism $f : (X, M_X) \to (Y, M_Y)$ is a function $f : X \to Y$ such that $f \circ \alpha \in M_Y$ for all $\alpha \in M_X$.
\end{definition}

\noindent
The category $\mathbf{QBS}$ of quasi-Borel spaces is Cartesian closed and carries a probability monad that, when restricted to standard Borel spaces, agrees with the Giry monad on $\Meas$. This makes $\mathbf{QBS}$ the natural setting for higher-order probabilistic programming and, by extension, for higher-order extensions of NeSyCat; see Section~\ref{subsec:monadic-pp} for further discussion.

Commutative diagrams are often used to visualise equalities between (compositions of) morphisms in categories. For example, the following diagram shows the composition of morphisms $f: A \to B$ and $g: B \to C$:
\begin{center}
\begin{tikzpicture}
				\node (a) at (1,1) {$A$};
				\node (b) at (2,2) {$B$};
				\node (c) at (3,1) {$C$};
				\draw[->] (a) edge node [above left] {\tiny {$f$}} (b);
				\draw[->] (b) edge node [above right] {\tiny {$g$}} (c);
				\draw[->] (a) edge node [above] {\tiny {$g \circ f$}} (c);
\end{tikzpicture}
\end{center}

\begin{definition}
	An object $1 \in \categoryobjects{C}$ is called \emph{terminal}, if for each $A \in \categoryobjects{C}$ there exists a unique morphism $!_A : A \to 1_\category{C}$.
\end{definition}

\begin{definition}[Products]
Let $\mathcal{C}$ be a category and let $A, B$ be objects in $\mathcal{C}$. A \emph{product} of $A$ and $B$ is an object $A \times B$ together with two morphisms (called projections) $\pi_A : A \times B \to A$ and $\pi_B : A \times B \to B$ such that for any object $X$ with morphisms $f : X \to A$ and $g : X \to B$, there exists a unique morphism $u : X \to A \times B$ such that $\pi_A \circ u = f$ and $\pi_B \circ u = g$. This unique $u$ is noted as $\langle f, g \rangle$ and is called the \emph{pairing} of $f$ and $g$. 
This universal property can be depicted by the following commutative diagram:
\[
\begin{tikzcd}
& X \ar[dashed]{d}[swap]{u=\langle f, g \rangle} \ar[bend left]{dr}{g} \ar[bend right]{dl}[swap]{f} \\
A & A \times B \ar{l}[swap]{\pi_A} \ar{r}{\pi_B} & B
\end{tikzcd}
\]
\end{definition}
This easily generalises to products of finitely many objects.
A category having a terminal object and binary products is called \emph{category with finite products}, which can also be considered as a \emph{cartesian monoidal category} up to the choice of the product.\footnote{For more information on cartesian monoidal categories, see \url{https://ncatlab.org/nlab/show/cartesian+monoidal+category}.}

\subsection*{Glossary of Algebraic and Categorical Terms}\label{sec:glossary}

We collect here brief definitions of standard algebraic and categorical notions used in the main text. For further background, see \cite{awodeyCategoryTheory2010,maclaneCategoriesWorkingMathematician1978,hajek_metamathematics_1998,giry1982categorical}. 

\begin{description}[style=nextline, leftmargin=1.5em]

\item[Monoid.]
A \emph{monoid} $(M, \cdot, e)$ is a set $M$ equipped with an associative binary operation $\cdot : M \times M \to M$ and a neutral element $e \in M$ satisfying $e \cdot x = x = x \cdot e$ for all $x \in M$. The monoids $(S, \otimes, 1)$ and $(S, \oplus, 0)$ in the 2Mon-BLat (Def.~\ref{def:2Mon-BLat}) model conjunction and disjunction, respectively.

\item[Bounded lattice.]
A \emph{bounded lattice} $(S, \leq, \bot, \top)$ is a partially ordered set in which every pair of elements $a, b \in S$ has a greatest lower bound (meet) $a \wedge b$ and a least upper bound (join) $a \vee b$, and in which there exist elements $\bot$ (bottom) and $\top$ (top) satisfying $\bot \leq x \leq \top$ for all $x \in S$.

\item[$\sigma$-algebra and measurable space.]
A \emph{$\sigma$-algebra} on a set $X$ is a collection $\Sigma_X \subseteq \mathcal{P}(X)$ containing $X$, closed under complements and countable unions. The pair $(X, \Sigma_X)$ is called a \emph{measurable space}. A function $f : X \to Y$ between measurable spaces is \emph{measurable} if $f^{-1}(B) \in \Sigma_X$ for every $B \in \Sigma_Y$.

\item[Measure and probability measure.]
A \emph{measure} on a measurable space $(X, \Sigma_X)$ is a countably additive function $\mu : \Sigma_X \to [0,\infty]$ with $\mu(\emptyset) = 0$. If $\mu(X) = 1$, then $\mu$ is a \emph{probability measure} and $(X, \Sigma_X, \mu)$ is a \emph{probability space}.

\item[Markov kernel.]
A \emph{Markov kernel} from a measurable space $(X, \Sigma_X)$ to $(Y, \Sigma_Y)$ is a function $k : X \times \Sigma_Y \to [0,1]$ such that $k(x, \cdot)$ is a probability measure on $Y$ for each $x \in X$, and $k(\cdot, B)$ is a measurable function on $X$ for each $B \in \Sigma_Y$. Equivalently, it is a measurable map $X \to \mathcal{G}(Y)$, i.e.\ a Kleisli morphism for the Giry monad.

\item[Functor.]
A \emph{functor} $F : \mathcal{C} \to \mathcal{D}$ between categories maps objects to objects and morphisms to morphisms, preserving identities ($F(\id_A) = \id_{F(A)}$) and composition ($F(g \circ f) = F(g) \circ F(f)$).

\item[Natural transformation.]
Given functors $F, G : \mathcal{C} \to \mathcal{D}$, a \emph{natural transformation} $\alpha : F \Rightarrow G$ is a family of morphisms $\alpha_A : F(A) \to G(A)$ indexed by objects $A$ of $\mathcal{C}$, such that $\alpha_B \circ F(f) = G(f) \circ \alpha_A$ for every morphism $f : A \to B$.

\item[Kleisli category.]
For a monad $({\mathcal T}, \eta, (-)^*)$ on a category $\mathcal{C}$, the \emph{Kleisli category} $\Kl{{\mathcal T}}$ has the same objects as $\mathcal{C}$, but a morphism $A \to B$ in $\Kl{{\mathcal T}}$ is a morphism $A \to {\mathcal T} B$ in $\mathcal{C}$. Composition is given by Kleisli composition: $g \circ_\mathrm{Kl} f := g^* \circ f$.

\item[Adjunction and right adjoint.]
An \emph{adjunction} $F \dashv G$ between categories $\mathcal{C}$ and $\mathcal{D}$ consists of functors $F : \mathcal{C} \to \mathcal{D}$ (the \emph{left adjoint}) and $G : \mathcal{D} \to \mathcal{C}$ (the \emph{right adjoint}) together with a natural bijection $\mathcal{D}(F(A), B) \cong \mathcal{C}(A, G(B))$. In the 2Mon-BLat (Def.~\ref{def:2Mon-BLat}), when $\to$ is the right adjoint of $\otimes$, this means $a \otimes b \leq c$ if and only if $a \leq b \to c$ (the \emph{residuation} or \emph{deduction} property).

\item[Cartesian closed category.]
A category is \emph{Cartesian closed} if it has finite products and, for every object $B$, the functor $(-) \times B$ has a right adjoint $(-)^B$ (the \emph{exponential} or \emph{internal hom}). This provides currying: morphisms $A \times B \to C$ correspond bijectively to morphisms $A \to C^B$. The category $\Set$ is Cartesian closed; $\Meas$ is not (which motivates $\mathbf{QBS}$, Def.~\ref{def:qbs}).

\item[t-norm and t-conorm.]
A \emph{t-norm} $\otimes : [0,1]^2 \to [0,1]$ is a commutative, associative, monotone operation with neutral element $1$. A \emph{t-conorm} $\oplus : [0,1]^2 \to [0,1]$ is dual: commutative, associative, monotone with neutral element $0$. These provide the algebraic operations for fuzzy and probabilistic logics in the 2Mon-BLat framework (Sec.~\ref{sec:algebraic-overview}).

\end{description}

\end{document}